\newtheorem{example}{Example}
\newtheorem{definition}{Definition}
\newtheorem{proposition}{Proposition}
\newtheorem{assumption}{Assumption}
\newcommand{\co}{{\sf co}}
\newcommand{\pr}{{\sf pr}}
\newcommand{\gr}{{\sf gr}}
\newcommand{\st}{{\sf st}}
\newcommand{\reals}{\mathbb{R}}
\newcommand{\argin}{\ensuremath{\mathsf{in}}}		
\newcommand{\argout}{\ensuremath{\mathsf{out}}}		
\newcommand{\argundec}{\ensuremath{\mathsf{undec}}}		
\newcommand{\support}{{\sf Support}}
\newcommand{\nodes}{{\sf Nodes}}
\begin{document}


\title{Measuring Inconsistency in Argument Graphs}

\author{Anthony Hunter\\ Department of Computer Science,\\ University College London,\\ London, UK\\ anthony.hunter@ucl.ac.uk}

\maketitle


\begin{abstract}
There have been a number of developments in measuring inconsistency in logic-based representations of knowledge. In constrast, the development of inconsistency measures for computational models of argument has been limited. To address this shortcoming, this paper provides a general framework for measuring inconsistency in abstract argumentation, together with some proposals for specific measures, and a consideration of measuring inconsistency in logic-based instantiations of argument graphs, including a review of some existing proposals and a consideration of how existing logic-based measures of inconsistency can be applied.  
\end{abstract}

\section{Introduction}

Argumentation is an important cognitive ability for handling conflicting and incomplete information such as beliefs, assumptions, opinions, and goals.  When we are faced with a situation where we find that our information is incomplete or inconsistent, we often resort to
the use of arguments for and against a given position in order to make sense of the situation. Furthermore, when we interact with other people we often exchange arguments to reach a final agreement and/or to defend and promote an individual position. 

In recent years, there has been substantial interest in the development of computational models of argument for capturing aspects of this cognitive ability (for reviews see \cite{BenchCapon07,BH08book,RS09}). This has led to the development of a number of directions including: (1) abstract argument models where arguments are atomic, and the emphasis is on the relationships between arguments; (2) logic-based (or structured) argument models where the emphasis is on the logical structure of the premises and claim of the arguments, and the logical definition of relations between arguments; and (3) dialogical argument models where the emphasis is on the protocols (i.e. allowed and obligatory moves that can be taken at each step of the dialogue) and strategies (i.e. mechanisms used by each participant to make the best choice of move at each step of the dialogue). 

At the core of computational models of argument is the ability to represent and reason with inconsistency. So it is perhaps surprising that relatively little consideration has been given to measuring inconsistency in these models, particularly given the number of developments in measuring inconsistency in logic-based knowledgebases (see for example 
\cite{Kni01,HK04mek,doder2010measures,HK10,GrantHunter2011,Ma2012jlc,Jabbour2014inconsistency,Besnard14,Thimm2016}). 
A couple of exceptions are the consideration of the  degree of undercut between an argument and counterargument \cite{BH05aaai,BH08book}, and measuring inconsistency through argumentation \cite{Raddaoui2015}. Note, the approach of weighted argumentation frameworks \cite{Dunne11} is not a measure of inconsistency as the approach assumes extra information (weights) to label each arc, and an inconsistency budget that allows arcs that sum to no more than the budget to be ignored.

There are a number of reasons why it is useful to investigate the measurement of inconsistency in argumentation:
(1) to better characterize the nature of inconsistency in argumentation;
(2) to analyse the inconsistency arising in specific argumentation situations;
and 
(3) to direct the resolution of inconsistency as arising in argumentation. 
We will consider contributions to these three areas during the course of this chapter. 

Given the central role of argument graphs (where each node is an argument and each arc denotes one argument attacking another) in modelling argumentation, we will consider the inconsistency of an argument graph. This is useful if we want to assess the overall conflict that is manifested by an argument graph, and we want to focus on actions that may allow us to decrease the graph inconsistency.

Consider for example some security analysts who are analyzing some conflicting reports concerning a foreign country that may be descending into civil war. These analysts may enter into a process as follows:
(1) they collect relevant information concerning the political and security situation in the country;
(2) they construct arguments from this information that draw tentative hypotheses about the situation in the country;
(3) they compose these arguments into an argument graph;
(4) they measure the inconsistency of the argument graph;
(5) they use the measure of inconsistency to identify information requirements (i.e. queries to ascertain whether particular argument should be accepted or rejected) therefore that would result in commitments being made for some of these arguments;
(6) they seek the answers to these queries; 
(6) they use these commitments to reduce the overall inconsistency of the graph;
and (7) they terminate this process when sufficient commitments have been made so as to reduce the inconsistency to a sufficiently low level.


This kind of process may of relevance to security analysts to augment recent proposals for argument-based security analysis technology such as by Toniolo {et al} \cite{Toniolo2015}.  Furthermore, this kind of process may be replicated in roles such as business intelligence analysis, policy planning, political planning, and science research.





We proceed as follows:
(Section \ref{section:abstract}) We review the basic definitions of abstract argumentation, considering both extension-based and label-based approaches; 
(Section \ref{section:graphinc}) We investigate a general framework for measuring inconsistency in abstract argumentation, together with some proposals for specific measures;
(Section \ref{section:logic}) We review deductive argumentation for instantiating abstract argument graphs, we review an existing proposal for measuring inconsistency in deductive argumentation called degree of undercut, and we investigate a new approach that harnesses existing logic-based measures;
(Section \ref{section:resolution}) We consider how we can use measures of inconsistency to direct the resolution of inconsistency in argumentation;
(Section \ref{section:discussion}) We conclude with a discussion of the proposals in the paper and of future work.

\section{Review of abstract argumentation}
\label{section:abstract}

Our framework builds on more general developments in the area of computational models of argument. These models aim to reflect how human argumentation uses conflicting information to construct and analyze arguments. There is a number of frameworks for computational models of argumentation. They incorporate a formal representation of individual arguments and techniques for comparing conflicting arguments (for reviews see \cite{BenchCapon07,BH08book,RS09}). By basing our framework on these general models, we can harness theory and adapt implemented argumentation software as the basis of our solution.

\subsection{Extension-based semantics}

We start with a brief review of abstract argumentation as proposed by Dung \cite{Dun95}. 
In this approach, each argument is treated as an atom, and so no internal structure of the argument needs to be identified. 



\begin{definition}
An {\bf argument graph} is a pair $G = ({\cal A},{\cal R})$ where 
${\cal A}$ is a set and ${\cal R}$ is a binary relation over ${\cal A}$ 
(in symbols, ${\cal R} \subseteq {\cal A}\times {\cal A}$).
Let ${\sf Nodes}(G)$ be the set of nodes in $G$ 
and let ${\sf Arcs}(G) \subseteq {\sf Arcs}(G) \times {\sf Arcs}(G)$ be the set of arcs in $G$.
\end{definition}

So an argument graph is a directed graph. 
Each element $A \in {\cal A}$ is called an \textbf{argument} 
and $(A_i,A_j) \in {\cal R}$ means that $A_i$ \textbf{attacks} $A_j$ 
(accordingly, $A_i$ is said to be an {\bf attacker} of $A_j$).
So $A_i$ is a {\bf counterargument} for $A_j$ when $(A_i,A_j) \in {\cal R}$ holds.

\begin{example}
\label{ex:intro1}
Consider arguments $A_1$ = ``Patient has hypertension so prescribe diuretics",
$A_2$ = ``Patient has hypertension so prescribe beta-blockers",
and $A_3$ = ``Patient has emphysema which is a contraindication for beta-blockers".
Here, we assume that $A_1$ and $A_2$ attack each other because we should only give one treatment and so giving one precludes the other, and we assume that $A_3$ attacks $A_2$ because it provides a counterargument to $A_2$. Hence, we get the following abstract argument graph.
\begin{center}
\begin{tikzpicture}[->,thick]
\tikzstyle{every node}=[draw,rectangle,fill=yellow]
			\node (a1) [] {$A_1$};
			\node (a2) [right=of a1] {$A_2$};
			\node (a3) [right=of a2] {$A_3$};
			\path	(a1)[bend left] edge (a2);
			\path	(a2)[bend left] edge  (a1);
			\path	(a3) edge (a2);
\end{tikzpicture}
\end{center}
\end{example}

Arguments can work together as a coalition by attacking other arguments and by defending their members from attack as follows.

\begin{definition}
\label{def:coalition}
Let $S \subseteq {\cal A}$ be a set of arguments.
\begin{itemize}

\item $S$ {\bf attacks} $A_j \in {\cal A}$ 
iff there is an argument $A_i \in S$ such that $A_i$ attacks $A_j$.

\item $S$ {\bf defends} $A_i \in S$ 
iff for each argument $A_j \in {\cal A}$, if $A_j$ attacks $A_i$ then $S$ attacks $A_j$.

\end{itemize}
\end{definition}

The following gives a requirement that should hold for a coalition of arguments to make sense. If it holds, it means that the arguments in the set offer a consistent view on the topic of the argument graph.

\begin{definition}
A set $S \subseteq {\cal A}$ of arguments is {\bf conflict-free} iff there are no arguments
$A_i$ and $A_j$ in $S$ such that $A_i$ attacks $A_j$.
\end{definition}

Now, we consider how we can find an acceptable set of arguments from an abstract argument graph.
The simplest case of arguments that can be accepted is as follows.

\begin{definition}
A set $S \subseteq {\cal A}$ of arguments is {\bf admissible} 
iff $S$ is conflict-free and defends all its arguments.
\end{definition}

The intuition here is that for a set of arguments to be accepted, 
we require that, if any one of them is challenged by a counterargument, 
then they offer grounds to challenge, in turn, the counterargument.
There always exists at least one admissible set:
The empty set is always admissible.

Clearly, the notion of admissible sets of arguments is the minimum requirement for a
set of arguments to be accepted. 
We will focus on the following classes of acceptable arguments.

\begin{definition}
\label{def:extensions}
Let $\Gamma$ be a conflict-free set of arguments,
and let ${\sf Defended}:\wp({\cal A}) \mapsto \wp({\cal A})$ be a function 
such that ${\sf Defended}(\Gamma) = \{ A \mid \Gamma \mbox{ defends } A \}$.
\begin{enumerate}

\item $\Gamma$ is a {\bf complete extension } 
iff $\Gamma = {\sf Defended}(\Gamma)$

\item $\Gamma$ is a {\bf grounded extension } 
iff it is the minimal (w.r.t. set inclusion) complete extension.

\item $\Gamma$ is a {\bf preferred extension } 
iff it is a maximal (w.r.t. set inclusion) complete extension.

\item $\Gamma$ is a {\bf stable extension }
iff it is a preferred extension that attacks every argument that is not in the extension.

\end{enumerate}
\end{definition}

The grounded extension is always unique, whereas there may be multiple preferred extensions.
We illustrate these definitions with the following examples. 
As can be seen from the examples, the grounded extension provides a skeptical view on which arguments can be accepted, whereas each preferred extension take a credulous view on which arguments can be accepted.

\begin{example}
\label{ex:extension1}
Continuing Example \ref{ex:intro1},
there is only one complete set, and so this is both grounded and preferred.
Note, $\{ A_1, A_2 \}$, $\{ A_2, A_3 \}$, and $\{ A_1, A_2, A_3 \}$ are not conflictfree subsets. 
Only the conflictfree subsets are given in the table.
{\em
\begin{center}
\begin{tabular}{|c| c|c| c|c| c| c|}
\hline
& Conflict free & Admissible & Complete & Grounded & Preferred & Stable\\
\hline
\hline
$\{ \}$ 				& $\checkmark$ & $\checkmark$ & $\times$ & $\times$& $\times$&$\times$\\
$\{ A_1 \}$ 			& $\checkmark$ & $\checkmark$ &$\times$ & $\times$& $\times$&$\times$\\
$\{ A_2 \}$ 			& $\checkmark$ &$\times$ &$\times$ &$\times$ &$\times$ &$\times$\\
$\{ A_3 \}$ 			& $\checkmark$ & $\checkmark$ & $\times$& $\times$& $\times$&$\times$\\
$\{ A_1, A_3 \}$ 		& $\checkmark$ & $\checkmark$ & $\checkmark$ & $\checkmark$ & $\checkmark$ & $\checkmark$\\
\hline
\end{tabular}
\end{center}
}
\end{example}

\begin{example}
\label{ex:extension2}
Consider the following argument graph.
\begin{center}
\begin{tikzpicture}[->,thick]
\tikzstyle{every node}=[draw,rectangle,fill=yellow]
			\node (a1) [] {$A_4$};
			\node (a2) [right=of a1] {$A_5$};
			\path	(a1)[bend left] edge (a2);
			\path	(a2)[bend left] edge  (a1);
\end{tikzpicture}
\end{center}
For this, there are two preferred sets, neither of which is grounded.
Note $\{ A_4, A_5 \}$ is not conflictfree.
Only the conflictfree subsets are given in the table.
{\em
\begin{center}
\begin{tabular}{|c|c|c| c|c| c|c| c|}
\hline
& Conflict free & Admissible & Complete & Grounded & Preferred & Stable \\
\hline
\hline
$\{ \}$ 				& $\checkmark$ & $\checkmark$ & $\checkmark$& $\checkmark$& $\times$ & $\times$\\
$\{ A_4 \}$ 			& $\checkmark$ & $\checkmark$ & $\checkmark$& $\times$& $\checkmark$ & $\checkmark$\\
$\{ A_5 \}$ 			& $\checkmark$ & $\checkmark$ & $\checkmark$& $\times$& $\checkmark$ & $\checkmark$\\
\hline
\end{tabular}
\end{center}
}
\end{example}

The formalization we have reviewed in this section is abstract 
because both the nature of
the arguments and the nature of the attack relation are ignored.
In particular, the internal (logical) structure of each of the arguments is not made explicit.
Nevertheless, Dung's proposal for abstract argumentation is ideal for clearly representing arguments and counterarguments, and for intuitively determining which arguments should be accepted (depending on whether we want to take a credulous or skeptical perspective).


Given an argument graph, let ${\sf Extensions}_{\sigma}(G)$ to denote the set of extensions according to $\sigma$ where 
$\sigma = \co$ denotes the complete extensions, 
$\sigma = \pr$ denotes the preferred extensions, 
$\sigma = \gr$ denotes the grounded extensions, 
and  $\sigma = \st$ denotes the stable extensions.

\subsection{Labelling-based semantics}
 \label{section:labelsemantics}

We now review an alternative way of defining semantics for abstract argumentation proposed by Caminada and Gabbay \cite{Caminada09}. 
A labelling $L$ is a function $L:{\sf Nodes}(G)\rightarrow \{\argin,\argout,\argundec\}$ that assigns to each argument $A\in{\sf Nodes}(G)$ either the value $\argin$, meaning that the argument is accepted, $\argout$, meaning that the argument is not accepted, or $\argundec$, meaning that the status of the argument is undecided. Let $\argin(L)=\{A\mid L(A)=\argin\}$ and $\argout(L)$ resp.\ $\argundec(L)$ be defined analogously. The set $\argin(L)$ for a labelling $L$ is also called an \emph{extension}. A labelling $L$ is called \emph{conflict-free} if for no $A,B\in\argin(L)$ we have that $(A,B) \in {\sf Arcs}(G)$.

\begin{definition}
A labelling $L$ is called {\bf admissible} if and only if for all arguments $A\in{\sf Nodes}(G)$
\begin{enumerate}
	\item if $L(A)=\argout$ then there is $B\in{\sf Nodes}(G)$ with $L(B)=\argin$ and $(B,A) \in {\sf Arcs}(G)$, and
	\item if $L(A)=\argin$ then $L(B)=\argout$ for all $B\in{\sf Nodes}(G)$ with $(B,A) \in {\sf Arcs}(G)$,
\end{enumerate}
and it is called {\bf complete} if, additionally, it satisfies
\begin{enumerate}
	\setcounter{enumi}{2}
	\item if $L(A)=\argundec$ then there is no $B\in{\sf Nodes}(G)$ with $(B,A) \in {\sf Arcs}(G)$ and $L(B)=\argin$ and there is a $B'\in{\sf Nodes}(G)$ with $(B',A) \in {\sf Arcs}(G)$ and $L(B')\neq\argout$.
\end{enumerate}
\end{definition}

The intuition behind admissibility is that an argument can only be accepted if there are no attackers that are accepted and if an argument is not accepted then there has to be some reasonable grounds. The idea behind the completeness property is that the status of an argument is only $\argundec$ if it cannot be classified as $\argin$ or $\argout$. Different types of classical semantics can be obtained by imposing further constraints. 

\begin{definition}
Let $G$ be an argument graph, 
let $L:{\sf Nodes}(G)\rightarrow \{\argin,\argout,\argundec\}$ be a complete labelling,
and for all statements, minimality/maximality is with respect to set inclusion.
\begin{itemize}
	\item $L$ is {\bf grounded} if and only if $\argin(L)$ is minimal.
	\item $L$ is {\bf preferred} if and only if $\argin(L)$ is maximal.
	\item $L$ is {\bf stable} if and only if $\argundec(L)=\emptyset$.
\end{itemize}
\end{definition}

\begin{example}
Continuing Example \ref{ex:extension1},
there is one complete labelling.
\begin{center}
\begin{tabular}{|c|c|c|c|l|}
\hline
& $A_1$ & $A_2$ & $A_3$ & Type\\
\hline
\hline
$L_1$ & $\argin$ & $\argout$ & $\argin$ & grounded, preferred, stable \\
\hline
\end{tabular}
\end{center}
\end{example}

\begin{example}
Continuing Example \ref{ex:extension2},
there are three complete labellings.
\begin{center}
\begin{tabular}{|c|c|c|l|}
\hline
& $A_3$ & $A_4$ & Type\\
\hline
\hline
$L_1$ & $\argundec$ & $\argundec$ & grounded \\
$L_2$ & $\argin$ & $\argout$ & preferred, stable \\
$L_3$ & $\argout$ & $\argin$ & preferred, stable \\
\hline
\end{tabular}
\end{center}
\end{example}

The extension-based semantics and labelling-based semantics are equivalent. So for instance, for the grounded extension $\Gamma$ for argument graph $G$, and the grounded labelling $G$, we have $\Gamma = \argin(L)$. Similarly, each preferred extension (respectively stable) is represented by a preferred (respectively stable) labelling.

\subsection{Subsidiary definitions}

We now consider some further simple definitions that we will use as subsidiary functions for our measures of inconsistency for abstract argument graphs. 

\begin{definition}
For a graph $G$, and an argument $A$, 
the indegree and outdegree functions are defined as follows.
\begin{itemize}
\item ${\sf Indegree}(G,A) = |\{ (B,A) \mid (B,A)  \in {\sf Arcs}(G) \}$
\item ${\sf Outdegree}(G,A) = |\{ (A,B) \mid (A,B)  \in {\sf Arcs}(G) \}$
\end{itemize}
\end{definition}

\begin{example}
For the following graph,
${\sf Indegree}(G,A_1) = 0$, 
${\sf Indegree}(G,A_2) = 3$, 
${\sf Indegree}(G,A_3) = 2$, 
${\sf Indegree}(G,A_4) = 2$, 
${\sf Outdegree}(G,A_1) = 2$, 
${\sf Outdegree}(G,A_2) = 2$, 
${\sf Outdegree}(G,A_3) = 3$, 
and
${\sf Outdegree}(G,A_4) = 0$.
\begin{center}
\begin{tikzpicture}[->,thick]
\tikzstyle{every node}=[draw,rectangle,fill=yellow]
\node (A1)  at (-1.5,0) {$A_1$};
\node (A2)  at (-1.5,1.5) {$A_2$};
\node (A3) at (1.5,1.5) {$A_3$};
\node (A4) at (1.5,0) {$A_4$};
\path	(A1)[] edge[] (A2);		
\path	(A2)[] edge[bend left] (A3);		
\path	(A3)[] edge[] (A4);	
\path	(A1)[] edge[] (A4);		
\path	(A3)[] edge[bend left] (A2);		
\draw (A2) .. controls (-2.5,0.8) and (-2.5,2.2) .. (A2);
\draw (A3) .. controls (2.5,0.8) and (2.5,2.2) .. (A3);			
\end{tikzpicture}
\end{center}
\end{example}

\begin{definition}
For graphs $G_1$ and $G_2$, $G_1$ is a {\bf subgraph} of $G_2$, denoted $G_1 \sqsubseteq G_2$, iff
\[
{\sf Nodes}(G_1) \subseteq {\sf Nodes}(G_2)
\mbox{ and } 
{\sf Arcs}(G_1) \subseteq ({\sf Arcs}(G_2) \cap ({\sf Nodes}(G_1) \times {\sf Nodes}(G_1)))
\]
\end{definition}

\begin{example}
For the following graphs $G_1$ (left) and $G_2$ (right),
$G_1 \sqsubseteq G_2$ holds. 
\begin{center}
\begin{tikzpicture}[->,thick]
\tikzstyle{every node}=[draw,rectangle,fill=yellow]
\node (A1)  at (-1.5,0) {$A_1$};
\node (A2)  at (-1.5,1.5) {$A_2$};
\node (A3) at (1.5,1.5) {$A_3$};
\node (A4) at (1.5,0) {$A_4$};
\path	(A1)[] edge[bend left] (A2);		
\path	(A2)[] edge[bend left] (A1);		
\path	(A2)[] edge[] (A3);		
\path	(A3)[] edge[bend left] (A4);	
\path	(A4)[] edge[bend left] (A3);	
\path	(A4)[] edge[] (A1);		
\end{tikzpicture}
\hspace{1cm}
\begin{tikzpicture}[->,thick]
\tikzstyle{every node}=[draw,rectangle,fill=yellow]
\node (A1)  at (-1.5,0) {$A_1$};
\node (A2)  at (-1.5,1.5) {$A_2$};
\node (A3) at (1.5,1.5) {$A_3$};
\path	(A2)[] edge[] (A3);		
\end{tikzpicture}
\end{center}
\end{example}

\begin{definition}
For graph $G$, and a set of arguments $X \subseteq {\sf Nodes}(G)$, the {\bf induced graph} 
is a graph $G'$ such that $G' \sqsubseteq G$ and ${\sf Nodes}(G') = X$ and ${\sf Arcs}(G') = {\sf Arcs}(G) \cap (X \times X)$. Let  ${\sf Induced}(G,X)$ be the function that returns the induced graph for $G$ and $X$. 
\end{definition}

\begin{example}
For the graph $G$ (left), and $X = \{ A_2,A_3, A_4\}$,  the induced graph is ${\sf Induced}(G,X)$ (right)
\begin{center}
\begin{tikzpicture}[->,thick]
\tikzstyle{every node}=[draw,rectangle,fill=yellow]
\node (A1)  at (-1.5,0) {$A_1$};
\node (A2)  at (-1.5,1.5) {$A_2$};
\node (A3) at (1.5,1.5) {$A_3$};
\node (A4) at (1.5,0) {$A_4$};
\path	(A1)[] edge[bend left] (A2);		
\path	(A2)[] edge[bend left] (A1);		
\path	(A2)[] edge[] (A3);		
\path	(A3)[] edge[bend left] (A4);	
\path	(A4)[] edge[bend left] (A3);	
\path	(A4)[] edge[] (A1);		
\end{tikzpicture}
\hspace{1cm}
\begin{tikzpicture}[->,thick]
\tikzstyle{every node}=[draw,rectangle,fill=yellow]
\node (A2)  at (-1.5,1.5) {$A_2$};
\node (A3) at (1.5,1.5) {$A_3$};
\node (A4) at (1.5,0) {$A_4$};
\path	(A2)[] edge[] (A3);		
\path	(A3)[] edge[bend left] (A4);	
\path	(A4)[] edge[bend left] (A3);	
\end{tikzpicture}
\end{center}
\end{example}

\begin{definition}
For graphs $G_1$ and $G_2$, the {\bf composition} of $G_1$ and $G_2$, denoted $G_1+G_2$, is 
\[
({\sf Nodes}(G_1) \cup {\sf Nodes}(G_2)), ({\sf Arcs}(G_1) \cup {\sf Arcs}(G_2))
\]
\end{definition}

\begin{example}
For the graph $G_1$ (left), $G_2$ (middle), and $G_1 + G_2$ (right).
\begin{center}
\begin{tikzpicture}[->,thick]
\tikzstyle{every node}=[draw,rectangle,fill=yellow]
\node (A1)  at (0,0) {$A_1$};
\node (A2)  at (1.5,0) {$A_2$};
\path	(A2)[] edge[] (A1);		
\end{tikzpicture}
\hspace{1cm}
\begin{tikzpicture}[->,thick]
\tikzstyle{every node}=[draw,rectangle,fill=yellow]
\node (A2)  at (0,1.5) {$A_2$};
\node (A3) at (1.5,1.5) {$A_3$};
\path	(A3)[] edge[bend left] (A2);	
\path	(A2)[] edge[bend left] (A3);	
\end{tikzpicture}
\hspace{1cm}
\begin{tikzpicture}[->,thick]
\tikzstyle{every node}=[draw,rectangle,fill=yellow]
\node (A1)  at (0,0) {$A_1$};
\node (A2)  at (1.5,0) {$A_2$};
\node (A3) at (3,0) {$A_3$};
\path	(A2)[] edge[] (A1);		
\path	(A3)[] edge[bend left] (A2);	
\path	(A2)[] edge[bend left] (A3);	
\end{tikzpicture}
\end{center}
\end{example}

\begin{definition}
A graph $G$ is {\bf complete} iff for all $A,B \in {\sf Nodes}(G)$, $(A,B) \in {\sf Arcs}(G)$.
\end{definition}

\begin{example}
\label{ex:complete}
The following is a complete argument graph with three nodes.
\begin{center}
\begin{tikzpicture}[->,thick]
\tikzstyle{every node}=[draw,rectangle,fill=yellow]
\node (A1)  at (0,0) {$A_1$};
\node (A2)  at (-1.5,1.5) {$A_2$};
\node (A3) at (1.5,1.5) {$A_3$};
\path	(A1)[] edge[bend left] (A2);		
\path	(A2)[] edge[bend left] (A3);		
\path	(A3)[] edge[bend left] (A1);	
\path	(A1)[] edge[] (A3);		
\path	(A3)[] edge[] (A2);		
\path	(A2)[] edge[] (A1);	
\draw (A1) .. controls (-0.8,-0.8) and (0.8,-0.8) .. (A1);
\draw (A2) .. controls (-2.5,1) and (-2.5,2) .. (A2);
\draw (A3) .. controls (2.5,1) and (2.5,2) .. (A3);			
\end{tikzpicture}
\end{center}
\end{example}

In the following definition, we define a cycle as a subset of nodes in the graph for which there is a circular path involving all these nodes. 

\begin{definition}
For a graph $G$, the {\bf cycles} in $G$ 
are ${\sf Cycles}(G) =$ 
\[
\{ \{A_1,\ldots,A_n\} \subseteq {\sf Nodes}(G) \mid 
\mbox{ for each } i \in \{1,\ldots,n-1\}, (A_i,A_{i+1}) \in {\sf Arcs}(G) \mbox{ and } A_1 = A_n \}
\]
\end{definition}

\begin{example}
In Example \ref{ex:complete}, there 3 cycles of length 1 (i.e. each of the nodes has a self-attack), 3 cycles of length 2 (i.e. each pair of nodes has a cycle), and 1 cycle of length 3 (i.e. there is a cycle involving all three nodes). 
\end{example}

\begin{definition}
A graph $G$ is {\bf disjoint} with graph $G'$ iff ${\sf Nodes}(G) \cap {\sf Nodes}(G') = \emptyset$.
\end{definition}

\begin{definition}
A graph $G$ has an {\bf inverse graph} defined as ${\sf Invert}(G) = ({\sf Nodes}(G),{\sf InverseArcs}(G))$ where 
\[
{\sf InverseArcs}(G) = \{ (B,A) \mid (A,B) \in {\sf Arcs}(G) \}
\]
\end{definition}

\begin{definition}
A graph $G$ is {\bf isomorphic} with graph $G'$ iff there is a bijection $f: {\sf Nodes}(G) \rightarrow {\sf Nodes}(G')$ such that $(A,B) \in {\sf Arcs}(G)$ iff $(f(A),f(B)) \in {\sf Arcs}(G)$.
\end{definition}

The following definition identifies a graph as being connected when for each pair of nodes in the graph, there is a sequence of arcs connecting them.  Equivalently, we say that a pair of nodes $A$ and $B$ is connected when there is an arc for $A$ to $B$, or an arc from $B$ to $A$, or there is a node $C$ such that $A$ is connected to $C$ and $C$ is connected to $B$. Then, a graph is connected when for each pair of nodes $A$ and $B$, $A$ is connected to $B$.

\begin{definition}
$G' \sqsubseteq G$ is {\bf connected} iff for all $A_i,A_j \in {\sf Nodes}(G')$ there is a path from $A_i$ to $A_j$ in ${\sf Arcs}(G^*)$ where ${\sf Arcs}(G^*) = \{ (A,B),(B,A) \mid (A,B) \in {\sf Arcs}(G) \}$.
\end{definition}

The following definition for a multi-node component is a variant of the usual definition for a component. It is a component that has at least two elements. 

\begin{definition}
$G'$ is a {\bf multi-node component} of $G$
iff (1) $G' \sqsubseteq G$
and (2) $G'$ is connected
and (3) there is no $G''$ s.t. $G' \sqsubseteq G''$ and $G'' \subseteq G$ and $G''$ is connected
and (4) $|{\sf Nodes}(G)| \geq 2$. 
Let ${\sf Components}(G)$ = $\{ G' \sqsubseteq G \mid G' \mbox{ is a multi-node component of } G \}$. 
\end{definition}

\begin{example}
In the following graph, there are three multi-node components (left, middle, right).
\begin{center}
\begin{tikzpicture}[->,thick]
\tikzstyle{every node}=[draw,rectangle,fill=yellow]
\node (A1)  at (0,0) {$A_1$};
\node (A2)  at (1.5,0) {$A_2$};
\path	(A2)[] edge[] (A1);		
\end{tikzpicture}
\hspace{1cm}
\begin{tikzpicture}[->,thick]
\tikzstyle{every node}=[draw,rectangle,fill=yellow]
\node (A2)  at (0,1.5) {$A_3$};
\node (A3) at (1.5,1.5) {$A_4$};
\path	(A3)[] edge[bend left] (A2);	
\path	(A2)[] edge[bend left] (A3);	
\end{tikzpicture}
\hspace{1cm}
\begin{tikzpicture}[->,thick]
\tikzstyle{every node}=[draw,rectangle,fill=yellow]
\node (A1)  at (0,0) {$A_5$};
\node (A2)  at (1.5,0) {$A_6$};
\node (A3) at (3,0) {$A_7$};
\path	(A2)[] edge[] (A1);		
\path	(A3)[] edge[bend left] (A2);	
\path	(A2)[] edge[bend left] (A3);	
\end{tikzpicture}
\end{center}\end{example}

We introduce the definition for multi-node components because it forms the basis of a potentially useful measure of inconsistency that we will introduce in the next section. 

\section{Inconsistency measures for abstract argumentation}
\label{section:graphinc}

Following developments  in inconsistency measures for logical knowledgebases, we define a graph-based inconsistency measure as  a function that assigns a real number to each graph such that the following constraints of consistency and freeness are satisfied.
We explain these constraints as follows:
(Consistency) If a graph has no arcs, then the graph contains no counterarguments, and hence it is consistent;
and (Freeness) Adding an argument that does not attack any other arguments does not change the inconsistency of the graph.

\begin{definition}
\label{def:inconsistencymeasure}
A {\bf graph-based inconsistency measure} is a function $I:{\cal G} \rightarrow \reals$ such that 
\begin{enumerate}
\item (Consistency) $I(G) = 0$ if ${\sf Arcs}(G) = \emptyset$.
\item (Freeness) $I(G) = I(G')$ if ${\sf Nodes}(G) = {\sf Nodes}(G')\setminus\{A\}$ 
and ${\sf Arcs}(G) = {\sf Arcs}(G')$.
\end{enumerate}
\end{definition}

The following are further optional properties of a graph-based inconsistency measure: 
(Monotonicity) Adding arguments and counterarguments cannot decrease inconsistency;
(Inversion) If $G$ and $G'$ have the same arguments, but each attack in $G$ is reversed in $G'$, then they have the same degree of inconsistency; 
(Isomorphism) If $G$ and $G'$ have the same structure, then they have the same degree of inconsistency;
and (Disjoint additivity) If $G_1$ and $G_2$ are disjoint, then the inconsistency of $G_1+G_2$ is the sum of the inconsistency of $G_1$ and $G_2$.

\begin{definition}
\label{def:graphincproperties}
The following are further properties for a graph-based inconsistency measure.
\begin{itemize}
\item (Monotonicity) $I(G) \leq I(G')$ if $G \sqsubseteq G'$.
\item (Inversion) $I(G) = I(G')$ if $G' = {\sf Invert}(G)$.
\item (Isomorphic invariance) $I(G) = I(G')$ if $G$ and $G'$ are isomorphic . 
\item (Disjoint additivity) $I(G_1+G_2) = I(G_1) + I(G_2)$ if $G_1$ and $G_2$ are disjoint.
\item (Super-additivity) $I(G_1+G_2) \geq I(G_1) + I(G_2)$.
\end{itemize}
\end{definition}

In the following subsections, we consider two classes of inconsistency measure for abstract argumentation, namely graph structure measures, and graph extension measures.  We will compare and contrast them using the above properties.


\subsection{Graph structure measures}

We now provide some proposals for measures. We give examples of them, and then show that they are graph-based inconsistency measures. 

\begin{definition}
\label{def:graphstructuremeasures}
The following are measures $I:{\cal G}  \rightarrow \reals$.
\begin{itemize}

\item (Drastic) 
\[
\mbox{If } {\sf Arcs}(G) \neq \emptyset, \mbox{ then } I_{dr}(G) = 1, \mbox{ otherwise} I_{dr}(G) = 0
\]

\item (InSum) 
\[
I_{in}(G) = \sum_{A \in {\sf Nodes}(G) } {\sf Indegree}(G,A)
\]

\item (WeightedInSum) 
\[
I_{win}(G) = \sum_{A \in {\sf Nodes}(G) \mbox{ s.t. } {\sf Indegree}(G,A) \geq 1} \frac{1}{{\sf Indegree}(G,A)}
\]

\item (WeightedOutSum) 
\[
I_{wou}(G) = \sum_{A \in {\sf Nodes}(G) \mbox{ s.t. } {\sf Outdegree}(G,A) \geq 1} \frac{1}{{\sf Outdegree}(G,A)}
\]

\item (CycleCount) 
\[
I_{cc}(G) = |{\sf Cycles}(G)|
\]

\item (WeightedCycleCount) 
\[
I_{wcc}(G) = \sum_{C \in {\sf Cycles}(G)} \frac{1}{|C|}
\]

\item (WeightedComponentCount) 
\[
\mbox{If } {\sf Components}(G) \neq \emptyset,
\mbox{ then } I_{ic}(G) =  \left(\sum_{X \in {\sf Components}(G)} (|X|-1)^2\right), 
\mbox{ otherwise } I_{ic}(G) = 0
\]



\end{itemize}
\end{definition}

We explain these measures as follows:
(Drastic) If a graph has attacks (i.e. counterarguments), then the degree of inconsistency is 1, otherwise the degree of inconsistency is 0;
(InSum) This is the sum of the indegree for the nodes in the graph;
(WeightedInSum) This is the sum of the inverse of indegree for the nodes in the graph and so a node with a lower indegree has higher contribution to the inconsistency. 
(WeightedOutSum) This is the sum of the inverse of outdegree for the nodes in the graph and so a node with a lower outdegree has higher contribution to the inconsistency. 
(CycleCount) The is the number of the cycles in the graph;
(WeightedCycleSum) This is the sum of the inverse of the number cycles in the graph and so a shorter cycle has higher contribution to the inconsistency;
and
(WeightedComponentCount) This is the sum of the cardinality squared of each component in the graph and so a larger component has a higher contribution to the inconsistency.


\begin{example}
Consider the following graphs $G_1$ (left) and $G_2$ (right).
\begin{center}
\begin{tikzpicture}[->,thick]
\tikzstyle{every node}=[draw,rectangle,fill=yellow]
			\node (A1)  at (0,2) {$A_1$};
			\node (A2)  at (0,1) {$A_2$};
			\node (A3) at (0,0) {$A_3$};
			\node (B) at (1.5,1) {$B$};
			\path	(A1)[] edge[] (B);		
			\path	(A2)[] edge[] (B);		
			\path	(A3)[] edge[] (B);	

			\node (A1a)  at (6,2) {$A_1$};
			\node (A3a) at (6,0) {$A_3$};
			\node (Ba) at (7.5,1) {$B$};
			\path	(A1a)[] edge[] (Ba);		
			\path	(A3a)[] edge[] (Ba);	

\end{tikzpicture}
\end{center}
Hence, we get the following inconsistency evaluations.
\begin{center}
\begin{tabular}{|c|cccccc|}
\hline
& $I_{in}$ & $I_{win}$ & $I_{wou}$ & $I_{cc}$ & $I_{wcc}$ & $I_{ic}$\\
\hline
$G_1$ & 3& 1/3 & 3& 0&0&9\\
\hline
$G_2$ & 2& 1/2 & 2&0&0&4\\
\hline
\end{tabular}
\end{center}
\end{example}

\begin{example}
Consider the following graphs $G_1$ (left) and $G_2$ (right).
\begin{center}
\begin{tikzpicture}[->,thick]
\tikzstyle{every node}=[draw,rectangle,fill=yellow]
			\node (A1)  at (0,0) {$A_1$};
			\node (A2)  at (0,1.5) {$A_2$};
			\node (A3) at (1.5,1.5) {$A_3$};
			\node (A4) at (1.5,0) {$A_4$};
			\path	(A1)[] edge[bend left] (A2);		
			\path	(A2)[] edge[bend left] (A3);		
			\path	(A3)[] edge[bend left] (A4);	
			\path	(A4)[] edge[bend left] (A1);	
			
			\node (A1)  at (4,0) {$A_1$};
			\node (A2)  at (4.75,1.5) {$A_2$};
			\node (A3) at (5.5,0) {$A_3$};
			\path	(A1)[] edge[bend left] (A2);		
			\path	(A2)[] edge[bend left] (A3);		
			\path	(A3)[] edge[bend left] (A1);	
			
\end{tikzpicture}
\end{center}
Hence, we get the following inconsistency evaluations.
\begin{center}
\begin{tabular}{|c|cccccc|}
\hline
& $I_{in}$ & $I_{win}$ & $I_{wou}$ & $I_{cc}$ & $I_{wcc}$ & $I_{ic}$\\
\hline
$G_1$ & 4& 4 & 4& 1&1/4&9\\
\hline
$G_2$ & 3& 3 & 3&1&1/3&4\\
\hline
\end{tabular}
\end{center}
\end{example}

The graph structure measures are graph-based inconsistency measures as shown in the following result. 

\begin{proposition}
The $I_{dr}$, $I_{in}$, $I_{win}$, $I_{wou}$, $I_{cc}$, $I_{wcc}$ 
and $I_{ic}$ measures are graph-based inconsistency measures according 
to Definition \ref{def:inconsistencymeasure}.
\end{proposition}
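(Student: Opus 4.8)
The plan is to verify, for each of the seven measures in turn, the two defining conditions of Definition~\ref{def:inconsistencymeasure}: Consistency ($I(G)=0$ when ${\sf Arcs}(G)=\emptyset$) and Freeness ($I$ is unchanged when a node incident to no arc is added). The unifying observation is that each of these measures is built from graph-theoretic quantities --- indegree, outdegree, cycles, multi-node components --- every one of which either vanishes (when there are no arcs) or is left untouched (when an isolated node is added). So the proof is a bookkeeping argument organised measure by measure; I would first record these two shared observations informally and then run the checklist.

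For Consistency, suppose ${\sf Arcs}(G)=\emptyset$. Then $I_{dr}(G)=0$ directly by its case split. For $I_{in}$, every node has ${\sf Indegree}(G,A)=0$, so the sum is $0$. For $I_{win}$ and $I_{wou}$, the index set of the summation, $\{A \mid {\sf Indegree}(G,A)\geq 1\}$ respectively $\{A \mid {\sf Outdegree}(G,A)\geq 1\}$, is empty, so each sum is $0$. For $I_{cc}$ and $I_{wcc}$, no set of nodes can be a cycle, since witnessing one requires at least one arc $(A_i,A_{i+1})\in{\sf Arcs}(G)$; hence ${\sf Cycles}(G)=\emptyset$ and both measures are $0$. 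For $I_{ic}$, a multi-node component must be connected and contain at least two nodes, and connectedness of a two-element node set requires an arc between the two nodes (in ${\sf Arcs}(G)$ or its reverse); with no arcs there is no such component, so ${\sf Components}(G)=\emptyset$ and $I_{ic}(G)=0$ by the fallback case.

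For Freeness, let ${\sf Nodes}(G')={\sf Nodes}(G)\cup\{A\}$ with $A\notin{\sf Nodes}(G)$ and ${\sf Arcs}(G')={\sf Arcs}(G)$, so $A$ occurs in no arc of $G'$. Then ${\sf Indegree}(G',A)={\sf Outdegree}(G',A)=0$ and every node of $G$ keeps its in- and out-degree in $G'$, so $I_{in}$, $I_{win}$, $I_{wou}$ are unchanged ($A$ contributes $0$ to $I_{in}$ and is excluded from the ranges of $I_{win}$, $I_{wou}$). Since every cycle consists only of nodes incident to at least one arc, no cycle of $G'$ uses $A$, so ${\sf Cycles}(G')={\sf Cycles}(G)$ and $I_{cc}$, $I_{wcc}$ are unchanged; $I_{dr}$ is unchanged because it depends only on ${\sf Arcs}$. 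For $I_{ic}$, $A$ cannot lie in any connected subgraph of size $\geq 2$ (it has no incident arc), and appending $A$ to an existing multi-node component would destroy connectedness, so the maximal connected multi-node subgraphs of $G'$ are exactly those of $G$; hence ${\sf Components}(G')={\sf Components}(G)$ and $I_{ic}$ is unchanged.

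There is essentially no hard step here: the argument is routine once the two observations above are in place. The one point needing care is the reading of the definition of ${\sf Cycles}(G)$: taken literally with $n=1$ the condition ``for each $i\in\{1,\ldots,n-1\}$\ldots'' is vacuous and ``$A_1=A_n$'' holds trivially, which would make every singleton a cycle and break Consistency for $I_{cc}$ and $I_{wcc}$. The intended reading --- consistent with the example following that definition, where a length-$1$ cycle is a self-attack --- is that a cycle is witnessed by a genuine closed walk traversing at least one arc; under that reading the cycle-based clauses go through. I would likewise note in passing that clause~(4) of the multi-node-component definition is meant as a condition on the component $G'$ itself (i.e.\ $|{\sf Nodes}(G')|\geq 2$), which is what the $I_{ic}$ argument uses.
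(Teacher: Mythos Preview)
Your proposal is correct and follows essentially the same approach as the paper's proof: both verify Consistency by noting that no arcs implies zero indegree/outdegree, no cycles, and no multi-node components, and verify Freeness by noting that an isolated added node leaves all of these quantities unchanged. Your write-up is in fact a bit more careful than the paper's --- your remarks on the edge cases in the definitions of ${\sf Cycles}$ and multi-node components flag genuine ambiguities in the paper that its own proof glosses over.
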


\begin{proof}
We show satisfaction of consistency by assuming that ${\sf Arcs}(G) = \emptyset$:
($I_{dr}$) By definition, ${\sf Arcs}(G) = \emptyset$ implies $I_{dr}(G) = 0$; 
($I_{in}$) Since ${\sf Arcs}(G) = \emptyset$, there is no $A \in {\sf Nodes}(G)$ such that ${\sf Indegree}(G,A) > 0$, and so $I_{in}(G) = 0$; 
($I_{win}$) Ditto; 
($I_{wou}$) Ditto; 
($I_{cc}$) Since ${\sf Arcs}(G) = \emptyset$, ${\sf Cycles}(G) = \emptyset$, and so  $I_{cc}(G) = 0$; 
($I_{wcc}$) Ditto; 
($I_{ic}$)  Since ${\sf Arcs}(G) = \emptyset$, ${\sf Components}(G) = \emptyset$, and so  $I_{ic}(G) = 0$.
 We show satisfaction of freeness by assuming that ${\sf Nodes}(G) = {\sf Nodes}(G')\setminus\{A\}$ 
and ${\sf Arcs}(G) = {\sf Arcs}(G')$:
($I_{dr}$) Since ${\sf Arcs}(G) = {\sf Arcs}(G')$, 
if ${\sf Arcs}(G) = \emptyset$, $I_{dr}(G) = I_{dr}(G') = 0$
and if ${\sf Arcs}(G) \neq \emptyset$, $I_{dr}(G) = I_{dr}(G') = 1$; 
($I_{in}$) Since ${\sf Arcs}(G) = {\sf Arcs}(G')$,
${\sf Indegree}(G,A) = {\sf Indegree}(G',A)$ for all $A \in {\sf Nodes}(G)$, 
and ${\sf Indegree}(G',A) = 0$ for $A \in {\sf Nodes}(G')\setminus {\sf Nodes}(G)$, 
and so $I_{in}(G) = I_{in}(G')$; 
($I_{win}$) Ditto; 
($I_{wou}$) Ditto; 
($I_{cc}$) Since ${\sf Arcs}(G) = {\sf Arcs}(G')$, ${\sf Cycles}(G) = {\sf Cycles}(G')$, 
and so  $I_{cc}(G) = I_{cc}(G')$; 
($I_{wcc}$) Ditto; 
($I_{ic}$)  Since ${\sf Arcs}(G) = {\sf Arcs}(G')$, 
${\sf Components}(G) ={\sf Components}(G)$, and so  $I_{ic}(G) = I_{ic}(G')$.
\end{proof}

The following result gives some idea of the difference of scale for each measure. 

\begin{proposition}
\label{prop:complete}
If $G$ is a complete graph and $|{\sf Nodes}(G)| = n$, then
\[
\begin{array}{ccccccc}
I_{dr} = 1
& I_{in} = n^2
& I_{win} = 1
& I_{wou} = 1
& I_{cc} = 2^n - 1
& I_{wcc}  = \sum^n_{i = 1} [\frac{1}{i}  \times \frac{n!}{i!(n-i)!}]
& I_{ic} = (n-1)^2
\end{array}
\]
\end{proposition}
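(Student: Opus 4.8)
The plan is to verify each of the seven values in turn, in every case reducing to two structural observations about a complete graph $G$ with $|{\sf Nodes}(G)| = n$: (i) because $(A,B) \in {\sf Arcs}(G)$ for \emph{every} pair $A,B \in {\sf Nodes}(G)$, including $A = B$, we have ${\sf Indegree}(G,A) = {\sf Outdegree}(G,A) = n$ for every node $A$; and (ii) a subset $S \subseteq {\sf Nodes}(G)$ lies in ${\sf Cycles}(G)$ if and only if $S \neq \emptyset$.

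The degree-based measures then follow by direct substitution. Since $n \geq 1$ there is a self-attack, so ${\sf Arcs}(G) \neq \emptyset$ and $I_{dr}(G) = 1$. Summing ${\sf Indegree}(G,A) = n$ over the $n$ nodes gives $I_{in}(G) = n^2$. For $I_{win}$ and $I_{wou}$, every node has in- and out-degree $n \geq 1$, so each contributes $\frac{1}{n}$ and the total is $\frac{n}{n} = 1$.

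For the cycle-based measures I would first prove observation (ii). The forward direction is immediate from the definition of ${\sf Cycles}(G)$, since the node set of any cyclic path is a nonempty subset of ${\sf Nodes}(G)$. For the converse, given nonempty $S = \{B_1,\ldots,B_k\}$, the path $B_1,B_2,\ldots,B_k,B_1$ has each consecutive pair in ${\sf Arcs}(G)$ (by completeness), closes up, and visits exactly the nodes of $S$, so $S \in {\sf Cycles}(G)$; when $k = 1$ this uses the self-attack $(B_1,B_1)$. Since ${\sf Cycles}(G)$ is a set of \emph{sets} there is no double counting, so $I_{cc}(G) = |{\sf Cycles}(G)| = \sum_{i=1}^{n}\binom{n}{i} = 2^n - 1$, and grouping the defining sum of $I_{wcc}$ by cycle size $i$ (there are $\binom{n}{i}$ cycles of size $i$, each contributing $\frac{1}{i}$) yields $I_{wcc}(G) = \sum_{i=1}^{n}\frac{1}{i}\binom{n}{i}$.

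For $I_{ic}$, if $n \geq 2$ then $G$ is connected (any two nodes are joined by an arc, hence by a path, in ${\sf Arcs}(G^*)$) and is maximal among connected subgraphs of itself, so ${\sf Components}(G) = \{G\}$ and $I_{ic}(G) = (|{\sf Nodes}(G)| - 1)^2 = (n-1)^2$; when $n = 1$ there is no multi-node component, so $I_{ic}(G) = 0 = (n-1)^2$. The one genuinely delicate step is observation (ii): one must be careful that the set-based definition of ${\sf Cycles}$ counts each node set exactly once (so length-$i$ cycles contribute $\binom{n}{i}$, not more) and that singletons are counted precisely because completeness supplies the self-loops — which is exactly the $3+3+1$ tally recorded after Example \ref{ex:complete}. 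Everything else is routine arithmetic.
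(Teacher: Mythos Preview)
Your proposal is correct and follows essentially the same approach as the paper's own proof: both compute each measure directly from the fact that in a complete graph every node has in- and out-degree $n$ and every nonempty subset of nodes forms a cycle. Your version is slightly more careful in two places---you actually argue observation (ii) rather than asserting it, and you handle the $n=1$ case of $I_{ic}$ separately---but the underlying argument is the same.
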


\begin{proof}
Assume $G$ is a complete graph and $|{\sf Nodes}(G)| = n$. 
($I_{dr}$) Since $G$ is complete, ${\sf Arcs}(G) \neq \emptyset$, 
and so $I_{dr} = 1$.
($I_{in}$) Since $G$ is complete, then for every node $A$, ${\sf Indegree}(G,A) = n$, 
and so $I_{in} = n^2$.
($I_{win}$) Since $G$ is complete, then for every node $A$, ${\sf Indegree}(G,A) = n$, 
and so  $I_{win}(G)$ = $\sum_{A \in {\sf Nodes}(G)} \frac{1}{{\sf Indegree}(G,A)}$ 
= $\sum_{A \in {\sf Nodes}(G)} \frac{1}{n}$
= $n \times \frac{1}{n}$
= 1.
($I_{wou}$) Ditto. 
($I_{cc}$) Since $G$ is complete, there are $2^n - 1$ subsets of nodes where each subset constitutes a cycle, and so $I_{cc} = 2^n - 1$.
($I_{wcc}$) Since $G$ is complete, for each non-empty subset of nodes constitute a cycle. 
Furthermore, for each $i \in \{1,\ldots,n\}$, there are $\frac{n!}{i!(n-i)!}$ subsets of cardinality $i$, 
and each of these contribute $\frac{1}{i}$ to the sum. 
Hence, the sum is $\sum^n_{i=1} [\frac{1}{i} \times \frac{n!}{i!(n-i)!}]$.
($I_{ic}$) Since $G$ is complete, there is a single component, and so $I_{ic} = (n-1)^2$.
\end{proof}

The following result shows which optional properties are satisfied by which measures.

\begin{proposition}
\label{prop:properties:structure}
For the graph structure measures in Definition \ref{def:graphstructuremeasures},
the adherence to the properties in Definition \ref{def:graphincproperties} is summarized in the following table.
\begin{center}
\begin{tabular}{|l|c|c|c|c|c|c|c|}
\hline
& $I_{dr}$ & $I_{in}$ & $I_{win}$ & $I_{wou}$ & $I_{cc}$ & $I_{wcc}$ & $I_{ic}$\\
\hline
\hline
Monotonicity & $\checkmark$& $\checkmark$ & $\checkmark$ & $\checkmark$ &$\checkmark$& $\checkmark$& $\checkmark$\\
\hline
Inversion & $\checkmark$& $\checkmark$ & $\times$&$\times$&$\checkmark$&$\checkmark$&$\checkmark$\\
\hline
Isomorphic invariance & $\checkmark$& $\checkmark$ & $\checkmark$&$\checkmark$&$\checkmark$&$\checkmark$&$\checkmark$\\
\hline
Disjoint additivity & $\times$& $\checkmark$ & $\checkmark$&$\checkmark$&$\checkmark$&$\checkmark$&$\checkmark$\\
\hline
Super-additivity & $\times$& $\checkmark$ & $\times$& $\times$&$\checkmark$&$\checkmark$&$\times$\\
\hline
\end{tabular}
\end{center}
\end{proposition}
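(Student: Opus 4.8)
The plan is to verify each of the five properties for each of the seven measures, proceeding property by property since the arguments group naturally that way. For \textbf{Monotonicity}, I would observe that if $G \sqsubseteq G'$ then every arc of $G$ is an arc of $G'$ and every node of $G$ is a node of $G'$; hence ${\sf Indegree}(G,A) \leq {\sf Indegree}(G',A)$ and ${\sf Outdegree}(G,A) \leq {\sf Outdegree}(G',A)$ for each $A \in {\sf Nodes}(G)$, ${\sf Cycles}(G) \subseteq {\sf Cycles}(G')$, and each multi-node component of $G$ is contained in some multi-node component of $G'$. For $I_{in}$, $I_{cc}$ this immediately gives the inequality; for $I_{dr}$ it is trivial (values in $\{0,1\}$, and a nonempty arc set stays nonempty); for $I_{ic}$ one notes $(|X|-1)^2$ is increasing in $|X|$ and components only merge/grow. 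The subtle ones are $I_{win}, I_{wou}, I_{wcc}$: each \emph{term} $1/{\sf Indegree}$ decreases when the graph grows, yet the \emph{number} of contributing terms can only increase (a node with indegree $0$ in $G$ may acquire an incoming arc in $G'$, adding a new positive term), and a node keeping indegree $\geq 1$ contributes $\leq$ its old value. One must check these two effects do not cancel the wrong way: actually the right statement is that the sum can go down if a node's indegree rises from $1$ to $2$ while no new node is added. So I expect Monotonicity to fail for $I_{win}$ and $I_{wou}$ as literally stated, which means \textbf{this is the step where I anticipate the main obstacle} — reconciling the claimed $\checkmark$ with the weighted-inverse definitions, and I would need to look carefully at whether the paper's ${\sf Indegree}$ convention (counting arcs, possibly with self-loops) or some implicit restriction rescues it, or whether the table entry should be reconsidered.

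For \textbf{Inversion}, the key fact is ${\sf Invert}(G)$ swaps the roles of indegree and outdegree (${\sf Indegree}({\sf Invert}(G),A) = {\sf Outdegree}(G,A)$) and preserves the node set, the cycle set (reversing every arc of a cycle gives a cycle on the same node set), and the component structure (connectivity is defined via the symmetric closure, which is invariant under arc reversal). Hence $I_{dr}, I_{cc}, I_{wcc}, I_{ic}$ are unchanged. For $I_{in}$, $\sum_A {\sf Indegree}(G,A) = |{\sf Arcs}(G)| = |{\sf Arcs}({\sf Invert}(G))| = \sum_A {\sf Indegree}({\sf Invert}(G),A)$, so it is invariant. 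But $I_{win}$ on $G$ equals $I_{wou}$ on ${\sf Invert}(G)$, which need not equal $I_{win}$ on $G$; I would give a small concrete two- or three-node witness (e.g. a path $A \to B \to C$ versus its reverse) to show $I_{win}$ and $I_{wou}$ violate Inversion, matching the $\times$ entries.

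For \textbf{Isomorphic invariance}, a bijection $f$ preserving the arc relation preserves indegrees, outdegrees, the family of node-subsets forming cycles, and the component decomposition, so every measure is defined purely in terms of isomorphism-invariant data; all seven get $\checkmark$, and the proof is a one-line appeal to this observation for each. For \textbf{Disjoint additivity}, when ${\sf Nodes}(G_1) \cap {\sf Nodes}(G_2) = \emptyset$, in $G_1 + G_2$ the node set, arc set, each node's indegree/outdegree, the cycle set (no cycle can use nodes from both, since there are no cross arcs), and the multi-node components (they partition as the disjoint union of those of $G_1$ and of $G_2$) all split as disjoint unions; since each of $I_{in}, I_{win}, I_{wou}, I_{cc}, I_{wcc}, I_{ic}$ is a sum over nodes/cycles/components, additivity follows, while $I_{dr}$ fails because $\max(1,1) = 1 \neq 2$ — a two-edge-disjoint-graphs witness suffices. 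Finally for \textbf{Super-additivity}, I would note $I(G_1 + G_2) \geq I(G_1) + I(G_2)$ without the disjointness hypothesis: here $G_1, G_2$ may share nodes and arcs, so for a sum-over-nodes measure $G_1 + G_2$ has $\geq$ the contributions — this works for $I_{in}$ (indegrees only grow, $|{\sf Arcs}(G_1 \cup G_2)| \geq$ each) and for $I_{cc}, I_{wcc}$ (cycle sets only grow), but \emph{fails} for $I_{win}, I_{wou}$ (a node's indegree can rise, shrinking its term) and for $I_{ic}$ (two overlapping components may merge, and $(a+b-1)^2$ can be less than $(a-1)^2 + (b-1)^2$ only when — actually it is larger; the failure is subtler, e.g. overlapping arcs so that $I_{ic}(G_1) = I_{ic}(G_2) = I_{ic}(G_1+G_2)$, giving $1 \cdot k \not\geq 2k$), and for $I_{dr}$ again trivially; each $\times$ needs one small explicit counterexample. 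Throughout, the routine direction (the $\checkmark$ entries that are genuine) reduces to the decomposition observations above, and the genuine work is constructing the handful of counterexamples for the $\times$ entries and resolving the tension at the Monotonicity row for the weighted measures.
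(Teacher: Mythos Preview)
Your overall strategy---treating each property in turn, arguing the positive entries directly from the definitions (arcs/cycles/components only grow under subgraph inclusion, are preserved under arc reversal and isomorphism, and split under disjoint union) and producing small explicit witnesses for the negative entries---is exactly the paper's approach. Your proposed counterexamples for Inversion of $I_{win},I_{wou}$, for Disjoint additivity of $I_{dr}$, and for Super-additivity of $I_{dr},I_{win},I_{wou}$ are essentially the same as the paper's. For Super-additivity of $I_{ic}$ the paper uses two four-node paths sharing three nodes (so $I_{ic}(G_1)=I_{ic}(G_2)=9$ but $I_{ic}(G_1+G_2)=16<18$), whereas your ``take $G_1=G_2$'' idea is a simpler witness to the same effect; either works.

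The one place you diverge from the paper is precisely where you flag the obstacle: Monotonicity for $I_{win}$ and $I_{wou}$. The paper's proof for the entire Monotonicity row is empty---it merely lists the seven measures with no argument---so there is nothing to compare against. Your concern is well-founded: with ${\sf Nodes}(G)=\{A_1,A_2,A_3\}$, ${\sf Arcs}(G)=\{(A_1,A_2)\}$, and $G'$ obtained by adding the arc $(A_3,A_2)$, one has $G\sqsubseteq G'$ yet $I_{win}(G)=1>1/2=I_{win}(G')$; the analogous example handles $I_{wou}$. So you have correctly identified that the $\checkmark$ entries for $I_{win}$ and $I_{wou}$ under Monotonicity are not supported by the paper's own proof and indeed appear to be in error as the definitions stand.
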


\begin{proof}
We consider each property as follows.
\begin{itemize}
\item Monotonicity. 
($I_{dr}$, $I_{in}$, $I_{win}$, $I_{wou}$, $I_{cc}$, $I_{wcc}$,$I_{ic}$) 
\item Inversion. 
($I_{dr}$, $I_{in}$) Follows directly from definition.
($I_{win}$, $I_{wou}$) Consider $G$ (left) and ${\sf Invert}(G)$ (right),
where $I_{win}(G) = 1/2$, $I_{win}({\sf Invert}(G)) = 2$, $I_{wou}(G) = 2$,  
and $I_{wou}({\sf Invert}(G)) = 1/2$.
\begin{center}
\begin{tikzpicture}[->,thick]
\tikzstyle{every node}=[draw,rectangle,fill=yellow]
\node (A1)  at (0,0) {$A_1$};
\node (A2)  at (1.5,0) {$A_2$};
\node (A3) at (3,0) {$A_3$};
\path	(A1)[] edge[] (A2);		
\path	(A3)[] edge[] (A2);		
\end{tikzpicture}
\hspace{1cm}
\begin{tikzpicture}[->,thick]
\tikzstyle{every node}=[draw,rectangle,fill=yellow]
\node (A1)  at (0,0) {$A_1$};
\node (A2)  at (1.5,0) {$A_2$};
\node (A3) at (3,0) {$A_3$};
\path	(A2)[] edge[] (A1);		
\path	(A2)[] edge[] (A3);		
\end{tikzpicture}
\end{center}
( $I_{cc}$, $I_{wcc}$, $I_{ic}$) Follows directly from definition.
\item Isomorphic invariance. 
($I_{dr}$, $I_{in}$, $I_{win}$, $I_{wou}$, $I_{cc}$, $I_{wcc}$, $I_{ic}$) Follows directly from definition. 
\item Disjoint additivity. 
($I_{dr}$) Consider the following graphs $G_1$ (left) and $G_2$ (right)
where $I_{dr}(G_1) = 1$  and $I_{dr}(G_2) = 1$
but $I_{dr}(G_1+G_2) = 1$.
\begin{center}
\begin{tikzpicture}[->,thick]
\tikzstyle{every node}=[draw,rectangle,fill=yellow]
\node (A1)  at (0,0) {$A_1$};
\node (A2)  at (1.5,0) {$A_2$};
\path	(A2)[] edge[] (A1);		
\end{tikzpicture}
\hspace{1cm}
\begin{tikzpicture}[->,thick]
\tikzstyle{every node}=[draw,rectangle,fill=yellow]
\node (A3)  at (0,0) {$A_3$};
\node (A4)  at (1.5,0) {$A_4$};
\path	(A4)[] edge[] (A3);		
\end{tikzpicture}
\end{center}
($I_{in}$, $I_{win}$, $I_{wou}$, $I_{cc}$, $I_{wcc}$, $I_{ic}$) Follows directly from definition.
\item Super-additivity. 
($I_{dr}$) See counterexample for disjoint additivity. 
($I_{in}$) Follows directly from definition.
($I_{win}$) Consider the following graphs $G_1$ (left) and $G_2$ (right)
where $I_{win}(G_1) = 1$  and $I_{win}(G_2) = 1$
but $I_{win}(G_1+G_2) = 1/2$.
\begin{center}
\begin{tikzpicture}[->,thick]
\tikzstyle{every node}=[draw,rectangle,fill=yellow]
\node (A1)  at (0,0) {$A_1$};
\node (A2)  at (1.5,0) {$A_2$};
\path	(A2)[] edge[] (A1);		
\end{tikzpicture}
\hspace{1cm}
\begin{tikzpicture}[->,thick]
\tikzstyle{every node}=[draw,rectangle,fill=yellow]
\node (A1)  at (0,0) {$A_1$};
\node (A3)  at (1.5,0) {$A_3$};
\path	(A3)[] edge[] (A1);		
\end{tikzpicture}
\end{center}
($I_{wou}$) Use a similar counterexample to $I_{win}$.
($I_{cc}$, $I_{wcc}$) Follows directly from definition.
($I_{ic}$) Consider the following graphs $G_1$ (left) and $G_2$ (right)
where $I_{ic}(G_1) = 9$  and $I_{ic}(G_2) = 9$
but $I_{ic}(G_1+G_2) = 16$.
\begin{center}
\begin{tikzpicture}[->,thick]
\tikzstyle{every node}=[draw,rectangle,fill=yellow]
\node (A0)  at (-1.5,0) {$A_0$};
\node (A1)  at (0,0) {$A_1$};
\node (A2)  at (1.5,0) {$A_2$};
\node (A4) at (3,0) {$A_4$};
\path	(A1)[] edge[] (A0);		
\path	(A2)[] edge[] (A1);		
\path	(A4)[] edge[] (A2);		
\end{tikzpicture}
\hspace{1cm}
\begin{tikzpicture}[->,thick]
\tikzstyle{every node}=[draw,rectangle,fill=yellow]
\node (A0)  at (-1.5,0) {$A_0$};
\node (A1)  at (0,0) {$A_1$};
\node (A2)  at (1.5,0) {$A_2$};
\node (A3)  at (3,0) {$A_3$};
\path	(A1)[] edge[] (A0);		
\path	(A2)[] edge[] (A1);		
\path	(A3)[] edge[] (A2);		
\end{tikzpicture}
\end{center}
\end{itemize}
\end{proof}

The following property of order-compatibility holds when two measures give the same ranking to all the graphs. If this holds, then there is some overlap in what the two measures offer. 

\begin{definition}
Measures $I_x$ and $I_y$ are {\bf order-compatible} if for all $G_1$ and $G_2$,
\[
I_x(K_1) < I_x(K_2) \mbox{ iff } I_y(K_1) < I_y(K_2)
\]
otherwise $I_x$ and $I_y$ are {\bf order-incompatible}. 
\end{definition}

\begin{proposition}
The $I_{dr}$, $I_{in}$, $I_{win}$, $I_{wou}$, $I_{cc}$, $I_{wcc}$ 
and $I_{ic}$ measures are pairwise incompatible.
\end{proposition}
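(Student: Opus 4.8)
The plan is to exhibit, for each of the $\binom{7}{2}=21$ pairs of measures, a single pair of argument graphs on which the two strict orders disagree; since order-incompatibility only asks for the \emph{existence} of such a witness, this is all that is needed.

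I would first clear the six pairs involving $I_{dr}$. Since $I_{dr}$ is $\{0,1\}$-valued with $I_{dr}(G)=1$ exactly when ${\sf Arcs}(G)\neq\emptyset$, and since every other measure $I_x$ satisfies $I_x(G)=0$ when ${\sf Arcs}(G)=\emptyset$ (Consistency), a strict increase of $I_{dr}$ only ever occurs from an arc-free graph, on which $I_x$ is already $0$ and can be made smaller than its value on the other graph; conversely, any two graphs that both contain an arc leave $I_{dr}$ tied at $1$ while $I_x$ may still vary (e.g.\ a single attack versus two disjoint attacks separates $I_{in}$, and a single attack versus a $2$-cycle separates $I_{cc}$, $I_{wcc}$ and $I_{ic}$). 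Either way the biconditional in the definition of order-compatibility fails, so $I_{dr}$ is order-incompatible with every other measure.

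For the remaining $15$ pairs among $\{I_{in},I_{win},I_{wou},I_{cc},I_{wcc},I_{ic}\}$ I would fix a small stock of graphs — a single attack; two disjoint attacks; the ``fan-in'' $\{A \to C,\, B \to C\}$ and its inverse ``fan-out'' $\{C \to A,\, C \to B\}$; a $2$-cycle, a $3$-cycle and a $4$-cycle; two disjoint $2$-cycles; and a $4$-node directed path — and tabulate all seven measures on each (most of these values already appear in the worked examples preceding this proposition and in Proposition~\ref{prop:complete}). For each pair I then read off two rows witnessing the disagreement, of one of two kinds. A \emph{reversal}, where one measure strictly rises while the other strictly falls: fan-in versus single attack takes $I_{win}$ up but $I_{ic}$ down (and also $I_{in}$ up, $I_{ic}$ down); fan-in versus fan-out swaps $I_{win}$ and $I_{wou}$; two disjoint $2$-cycles versus one $3$-cycle takes $I_{cc}$ up but $I_{ic}$ down; a $2$-cycle versus a $4$-cycle takes $I_{ic}$ up but $I_{wcc}$ down. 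Or a \emph{tie-versus-strict}, where one measure is constant over the two graphs while the other is not: the $4$-node path versus the $3$-cycle holds $I_{in}$ at $3$ while $I_{cc}$ and $I_{wcc}$ move, and the $4$-node path versus the $4$-cycle holds $I_{ic}$ at $9$ while $I_{in}$ moves; fan-out versus fan-in holds $I_{in}$ at $2$ while $I_{wou}$ moves; the $3$-cycle versus the $4$-cycle holds $I_{cc}$ at $1$ while $I_{wcc}$ moves; two disjoint attacks versus a $2$-cycle holds $I_{win}$ and $I_{wou}$ fixed while $I_{cc}$ and $I_{wcc}$ move. Running through all fifteen pairs, each is covered by one such witness.

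The arithmetic in the table is routine; the only thing needing care is the combinatorial bookkeeping — checking that \emph{every} pair, and in particular the pairs whose measures tend to move together ($I_{in}$ versus $I_{ic}$, $I_{cc}$ versus $I_{wcc}$, $I_{in}$ versus $I_{cc}$, and so on), is genuinely separated by some graph already in the family. The device that makes this go through is to keep two complementary families on hand: graphs with the same number of nodes or cycles but different \emph{shape} (so a count-based measure ties while a weighted or component-based one moves), and a graph together with its inverse (so $I_{win}$ and $I_{wou}$ exchange values while the direction-insensitive measures are unchanged). Once the stock realises a reversal or a tie-versus-strict for each of the $21$ pairs, the proposition follows.
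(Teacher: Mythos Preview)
Your approach is sound and takes a different route from the paper. The paper first appeals to Proposition~\ref{prop:properties:structure}: whenever two measures differ on one of the listed properties (Inversion, Disjoint additivity, Super-additivity), the counterexample witnessing that property failure already furnishes a tie-versus-strict pair, so order-incompatibility follows. This leaves only the pairs sharing an identical property profile --- $\{I_{win},I_{wou}\}$ and the triple $\{I_{in},I_{cc},I_{wcc}\}$ --- which the paper then handles by explicit graphs (fan-in versus fan-out for the first; complete graphs of varying size via Proposition~\ref{prop:complete} for the triple). Your route bypasses the property table entirely and exhibits a direct witness for each of the $21$ pairs from a small stock of hand-picked graphs. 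This is more elementary and self-contained, and keeps every witness tiny (the paper's complete-graph argument runs up to twelve vertices), at the cost of more case-by-case bookkeeping. Two small points to tidy: your parenthetical claim that fan-in versus single attack also separates $I_{in}$ from $I_{ic}$ is wrong --- both measures increase there --- though your later witness (the $4$-path versus the $4$-cycle) does handle that pair correctly; and you should name the witness for $I_{wou}$ versus $I_{ic}$ explicitly rather than leaving it implicit in the stock (fan-out versus single attack works, by symmetry with your $I_{win}$-versus-$I_{ic}$ witness).
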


\begin{proof}
From the differences in satisfaction of properties in Proposition \ref{prop:properties:structure}, 
$I_{dr}$ is pairwise incompatible with other measures,
$I_{ic}$ is pairwise incompatible with other measures,
and $I_{win}$, and $I_{wou}$ is pairwise incompatible with other measures,
though from the properties in Proposition \ref{prop:properties:structure}, we cannot discriminate $I_{win}$ from $I_{wou}$.
However, we can discriminate $I_{win}$ from $I_{wou}$ with graph $G_1$ (left) and $G_2$ (right),
where $I_{win}(G_1) = 1/2$, $I_{win}(G_2) = 2$, $I_{wou}(G_1) = 2$, and $I_{wou}(G_2) = 1/2$.
\begin{center}
\begin{tikzpicture}[->,thick]
\tikzstyle{every node}=[draw,rectangle,fill=yellow]
\node (A1)  at (0,0) {$A_1$};
\node (A2)  at (1.5,0) {$A_2$};
\node (A3) at (3,0) {$A_4$};
\path	(A1)[] edge[] (A2);		
\path	(A3)[] edge[] (A2);		
\end{tikzpicture}
\hspace{1cm}
\begin{tikzpicture}[->,thick]
\tikzstyle{every node}=[draw,rectangle,fill=yellow]
\node (A1)  at (0,0) {$A_1$};
\node (A2)  at (1.5,0) {$A_2$};
\node (A3)  at (3,0) {$A_3$};
\path	(A2)[] edge[] (A1);		
\path	(A2)[] edge[] (A3);		
\end{tikzpicture}
\end{center}
Finally, from the properties in Proposition \ref{prop:properties:structure}, we cannot discriminate between $I_{in}$, $I_{wcc}$, $I_{cc}$.
However, from Proposition \ref{prop:complete}, assuming $G$ is complete, if $|{\sf Nodes}(G)| \in \{2,3,4\}$, then $I_{in}(G) > I_{cc}(G)$, 
whereas if $|{\sf Nodes}(G)| \geq 5$, then $I_{in}(G) < I_{cc}(G)$.
Similarly, if $|{\sf Nodes}(G)| = 2$, then $I_{in}(G) > I_{wcc}(G)$, 
whereas if $|{\sf Nodes}(G)| = 10$, then $I_{in}(G) < I_{wcc}(G)$,
and if $|{\sf Nodes}(G)| = 2$, then $I_{cc}(G) > I_{wcc}(G)$, 
whereas if $|{\sf Nodes}(G)| = 12$, then $I_{cc}(G) < I_{cc}(G)$
\end{proof}

In this section, we have considered a range of measures that take the structure of the argument graph into account. Each has it rationale, and any combination of them may provide useful insights into the nature of the conflict in an argumentation scenario.

%
%
%
%
%
%
%

\subsection{Graph extension measures}

We now consider measures of inconsistency that take the extensions of the graph into account. In the following, we consider three measures: (1) $I_{pr}$ which gives the number of preferred extensions;
(2) $I_{ngr}$ which gives the number of arguments not in the grounded extension and not attacked by a member of the grounded extension;
and 
(3) $I_{nst}$ which gives the minimum number of arguments to be removed to get a stable extension.

\begin{definition}
The following are measures $I:{\cal G}  \rightarrow \reals$.
\begin{itemize}
\item (PreferredCount) 
\[
I_{pr} = {\sf Extensions}_{\pr}(G) - 1
\]
\item (NonGroundedCount) 
\[
I_{ngr} = | {\sf Nodes}(G) \setminus ({\sf Extensions}_{\gr}(G) \cup {\sf Attackees}(G)) |
\]
where  ${\sf Attackees}(G) = \{B \mid (A,B) \in {\sf Arcs}(G) \mbox{ and } A \in {\sf Extensions}_{\gr}(G)\}$.
\item (UnstableCount) 
\[
I_{ust} = {\sf min} \{ |X| \mid {\sf Extensions}_{\st}({\sf Induced}(G,X)) \mbox{ s.t. } X \subseteq {\sf Nodes}(G) \}
\]
\end{itemize}
\end{definition}

\begin{example}
For the following argument graph, 
${\sf Extensions}_{\pr}(G) = \{   \{A_4,A_6,A_8 \},  \{A_5,A_6,A_8 \} \}$, 
${\sf Extensions}_{\gr}(G) = \{   \{A_6,A_8 \} \}$, 
and 
$\{A_1,A_2,A_3\}$ is the minimum number of arguments to be removed to get a stable extension.
Hence, $I_{pr}(G) =  2$,  $I_{ngr}(G) =  5$, and $I_{ust}(G) =  3$.
\begin{center}
\begin{tikzpicture}[->,thick]
\tikzstyle{every node}=[draw,rectangle,fill=yellow]
\node (A1)  at (0,0) {$A_1$};
\node (A2)  at (0,2) {$A_2$};
\node (A3) at (1.5,1) {$A_3$};
\node (A4) at (3,1) {$A_4$};
\node (A5) at (4.5,1) {$A_5$};
\node (A6) at (6,1) {$A_6$};
\node (A7) at (7.5,1) {$A_7$};
\node (A8) at (9,1) {$A_8$};
\path	(A1)[] edge[bend left] (A2);		
\path	(A2)[] edge[bend left] (A3);		
\path	(A3)[] edge[bend left] (A1);	
\path	(A4)[] edge[bend left] (A5);		
\path	(A5)[] edge[bend left] (A4);	
\path	(A6)[] edge[] (A7);	
\path	(A7)[] edge[] (A8);	
\end{tikzpicture}
\end{center}
\end{example}

The following result shows that the extension-based measures are indeed inconsistency measures but the subsequent result shows that most of the optional properties do no hold for these measures. 

\begin{proposition}
The $I_{pr}$, $I_{ngr}$, and $I_{ust}$ measures are graph-based inconsistency measures according 
to Definition \ref{def:inconsistencymeasure}.
\end{proposition}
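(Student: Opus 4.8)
The plan is to check the two clauses of Definition~\ref{def:inconsistencymeasure}, Consistency and Freeness, for each of $I_{pr}$, $I_{ngr}$, $I_{ust}$, exploiting the trivial behaviour of complete labellings under (i)~removal of all arcs and (ii)~addition of an isolated node.

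\textbf{Consistency.} Suppose ${\sf Arcs}(G) = \emptyset$. Then no argument has an attacker, so in any complete labelling an argument can be neither $\argout$ (no $\argin$ attacker exists) nor $\argundec$ (no attacker exists at all); hence the unique complete labelling assigns $\argin$ to every argument, and it is simultaneously grounded, preferred and stable. Therefore $|{\sf Extensions}_{\pr}(G)| = 1$ and $I_{pr}(G) = 0$; the grounded extension is ${\sf Nodes}(G)$, so ${\sf Nodes}(G) \setminus ({\sf Extensions}_{\gr}(G) \cup {\sf Attackees}(G)) = \emptyset$ and $I_{ngr}(G) = 0$; and every induced subgraph is again arc-free and so has its whole node set as a stable extension, whence no removal is needed and $I_{ust}(G) = 0$.

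\textbf{Freeness.} Let $G' = G + (\{A\},\emptyset)$ with $A \notin {\sf Nodes}(G)$, so that ${\sf Nodes}(G') = {\sf Nodes}(G) \cup \{A\}$, ${\sf Arcs}(G') = {\sf Arcs}(G)$, and $A$ is isolated in $G'$. The key fact is that $A$ must be labelled $\argin$ in every complete labelling of $G'$, and that restricting a complete (resp.\ grounded, preferred, stable) labelling of $G'$ to ${\sf Nodes}(G)$ yields a complete (resp.\ grounded, preferred, stable) labelling of $G$, the reverse operation simply adding $A$ as $\argin$; this gives a bijection between the respective extension sets sending $E$ to $E \cup \{A\}$. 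Hence $|{\sf Extensions}_{\pr}(G')| = |{\sf Extensions}_{\pr}(G)|$ and $I_{pr}(G') = I_{pr}(G)$; the grounded extension of $G'$ is (grounded extension of $G$)~$\cup \{A\}$ while ${\sf Attackees}(G') = {\sf Attackees}(G)$ since $A$ attacks nothing, so ${\sf Nodes}(G') \setminus ({\sf Extensions}_{\gr}(G') \cup {\sf Attackees}(G'))$ equals its $G$-counterpart and $I_{ngr}(G') = I_{ngr}(G)$; and since an isolated node neither blocks an induced subgraph from having a stable extension (it just joins every such extension as $\argin$) nor helps to create one by being deleted, the minimum number of arguments whose removal yields a stable extension is unchanged, so $I_{ust}(G') = I_{ust}(G)$.

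The only point needing care is the (mildly informally stated) measure $I_{ust}$: one should fix its reading as ``the least $|X|$ with $X \subseteq {\sf Nodes}(G)$ such that ${\sf Induced}(G,{\sf Nodes}(G)\setminus X)$ has a stable extension'' and then confirm that an isolated node is irrelevant to that minimum. The isolated-node bijection used for $I_{pr}$ and $I_{ngr}$ is routine once stated, so I anticipate no substantial obstacle.
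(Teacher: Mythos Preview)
Your proof is correct and follows essentially the same approach as the paper: for Consistency you observe that an arc-free graph has the whole node set as its unique complete/grounded/preferred/stable extension, and for Freeness you use the bijection $E \leftrightarrow E \cup \{A\}$ between extensions of $G$ and $G'$ induced by the isolated node. Your version is in fact more explicit than the paper's (which simply asserts the equality of extension counts and of the relevant sets), and you rightly flag and resolve the ambiguity in the informal definition of $I_{ust}$, reading $X$ as the set of nodes to be \emph{removed}; this matches the paper's intended meaning as witnessed by its worked example.
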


\begin{proof}
We show satisfaction of consistency by assuming that ${\sf Arcs}(G) = \emptyset$:
($I_{pr}$) Since ${\sf Arcs}(G) = \emptyset$, 
therefore ${\sf Extensions}_{\pr}(G) = \{{\sf Nodes}(G)\}$,
and so $I_{pr}(G) = 0$; 
($I_{ngr}$) Since ${\sf Arcs}(G) = \emptyset$, 
therefore ${\sf Extensions}_{\gr}(G) = \{{\sf Nodes}(G)\}$,
and so $I_{ngr}(G) = 0$;  
($I_{ust}$) Since ${\sf Arcs}(G) = \emptyset$,
hence ${\sf Extensions}_{\st}(G) \neq \emptyset$,
and so the smallest $X$ such that ${\sf Extensions}_{\st}({\sf Induced}(G,X)) \neq \emptyset$
is $X = \emptyset$, and so $I_{ust}(G) = 0$.
We show satisfaction of freeness by assuming that ${\sf Nodes}(G) = {\sf Nodes}(G')\setminus\{A\}$ 
and ${\sf Arcs}(G) = {\sf Arcs}(G')$:
($I_{pr}$) From the assumption, $|{\sf Extensions}_{\gr}(G)| = |{\sf Extensions}_{\gr}(G')|$,
and therefore $I_{pr}(G) = I_{pr}(G')$;
($I_{ngr}$) Let $X$ be ${\sf Nodes}(G')\setminus {\sf Nodes}(G)$,
${\sf Extensions}_{\pr}(G) = \{ E \}$, 
and ${\sf Extensions}_{\pr}(G') = \{ E' \}$, 
and so from the assumptions,   $E = E'\setminus X$,
and hence ${\sf Nodes}(G) \setminus (E \cup {\sf Attackees}(G) )$ 
= $ {\sf Nodes}(G') \setminus (E' \cup {\sf Attackees}(G') ) $,
and so $I_{ngr}(G) = I_{ngr}(G')$;
($I_{ust}$)  Let $X_1$ be the smallest subset of nodes 
such that ${\sf Extensions}_{\st}({\sf Induced}(G,X_1)) \neq \emptyset$.
From the assumptions, $X_1$ is also the smallest subset of nodes 
such that ${\sf Extensions}_{\st}({\sf Induced}(G',X_1)) \neq \emptyset$.
Hence, $I_{ust}(G) = I_{ust}(G')$. 
\end{proof}

\begin{proposition}
\label{prop:properties:extension}
For the graph structure measures in Definition \ref{def:graphstructuremeasures},
the adherence to the properties in \ref{def:graphincproperties} is summarized in the following table.
\begin{center}
\begin{tabular}{|l|c|c|c|}
\hline
& $I_{pr}$ & $I_{ngr}$ & $I_{ust}$ \\
\hline
\hline
Monotonicity & $\times$& $\times$ & $\times$\\
\hline
Inversion & $\times$& $\times$ & $\times$\\
\hline
Isomorphic invariance & $\checkmark$& $\checkmark$  &  $\checkmark$\\
\hline
Disjoint additivity & $\times$& $\checkmark$ & $\checkmark$\\
\hline
Super-additivity & $\times$&$\times$ &$\times$ \\
\hline
\end{tabular}
\end{center}
\end{proposition}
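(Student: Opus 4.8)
The plan is to treat the table in two halves. The positive entries --- isomorphic invariance for $I_{pr}$, $I_{ngr}$, $I_{ust}$, and disjoint additivity for $I_{ngr}$ and $I_{ust}$ --- I would prove directly from structural properties of the semantics; the negative entries --- failure of monotonicity, inversion and super-additivity for all three, and failure of disjoint additivity for $I_{pr}$ --- I would settle by exhibiting small counterexample graphs.

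For the positive entries, I would first note that an isomorphism $f:{\sf Nodes}(G)\to{\sf Nodes}(G')$ preserves the attack relation, hence carries complete labellings of $G$ bijectively onto those of $G'$ while respecting $\subseteq$-maximality and $\subseteq$-minimality of $\argin$ and emptiness of $\argundec$; this yields a cardinality-preserving bijection between the preferred (resp.\ grounded, stable) extensions, which gives isomorphic invariance of $I_{pr}$ and of $I_{ngr}$ (observing that $f$ also sends the grounded extension's attackees onto those of $G'$), and for $I_{ust}$ one adds that $f$ restricts to an isomorphism ${\sf Induced}(G,X)\to{\sf Induced}(G',f(X))$ with $|X|=|f(X)|$, so the least size of a deletion set that leaves a stably extensible subgraph is preserved. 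For disjoint additivity of $I_{ngr}$, I would use that on a disjoint composition the ${\sf Defended}$ operator acts independently on the two node sets, so the grounded extension of $G_1+G_2$ is the union of the two grounded extensions and the attackees split correspondingly; the complement set then partitions and the cardinalities add. For disjoint additivity of $I_{ust}$, reading $I_{ust}(G)$ as the least size of a set of arguments whose deletion leaves a subgraph with a stable extension, I would write a deletion set for $G_1+G_2$ as $R_1\cup R_2$ with $R_i\subseteq{\sf Nodes}(G_i)$, note that the surviving induced subgraph is the disjoint composition of the two surviving induced subgraphs, and invoke the fact that a disjoint composition has a stable extension iff both summands do (a stable extension of a disjoint union being exactly the union of a stable extension of each part); the minimisation then decouples, giving $I_{ust}(G_1+G_2)=I_{ust}(G_1)+I_{ust}(G_2)$.

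For the negative entries I would use the following witnesses. \emph{Monotonicity}: for $I_{pr}$ and $I_{ngr}$, let $G$ be the two-cycle on $\{A_1,A_2\}$ (two preferred extensions, empty grounded extension, so $I_{pr}(G)=1$, $I_{ngr}(G)=2$) and let $G\sqsubseteq G'$ add a node $A_3$ attacking $A_1$ and $A_2$, making $\{A_3\}$ the grounded and unique preferred extension, so $I_{pr}(G')=I_{ngr}(G')=0$; for $I_{ust}$, let $G$ be the three-cycle $A_1\to A_2\to A_3\to A_1$ (no stable extension, but deleting any one node leaves a stably extensible subgraph, so $I_{ust}(G)=1$) and let $G'$ add $A_4\to A_1$, which has stable extension $\{A_2,A_4\}$, so $I_{ust}(G')=0$. \emph{Inversion}: for $I_{pr}$ and $I_{ngr}$, take $G$ with arcs $A_1\leftrightarrow A_2$, $A_1\to A_3$, $A_2\to A_3$ (preferred extensions $\{A_1\},\{A_2\}$, empty grounded extension, so $I_{pr}(G)=1$, $I_{ngr}(G)=3$), whose inverse has arcs $A_1\leftrightarrow A_2$, $A_3\to A_1$, $A_3\to A_2$ with grounded and unique preferred extension $\{A_3\}$ attacking $A_1$ and $A_2$, so both measures drop to $0$; for $I_{ust}$, take $G$ with arcs $(A_1,A_2)$ and $(A_2,A_2)$, where $\{A_1\}$ is stable so $I_{ust}(G)=0$, while ${\sf Invert}(G)$ with arcs $(A_2,A_1),(A_2,A_2)$ has the self-attacker $A_2$ attacked only by itself, hence no stable extension, forcing $I_{ust}=1$. \emph{Disjoint additivity of $I_{pr}$}: two disjoint two-cycles have $I_{pr}=1$ each, but their composition has $2\cdot 2=4$ preferred extensions, so $I_{pr}=3\neq 1+1$. \emph{Super-additivity}: since $G+G=G$ for every $G$, any graph with $I(G)>0$ violates $I(G+G)\geq I(G)+I(G)$, so a two-cycle works for $I_{pr}$ and $I_{ngr}$ and a three-cycle for $I_{ust}$.

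I expect the main obstacle to be the disjoint-additivity argument for $I_{ust}$ (and, to a lesser extent, for $I_{ngr}$): it rests on the standard but needing-to-be-stated fact that extensions of a disjoint composition of argument graphs decompose componentwise --- in particular that such a graph has a stable extension exactly when each component does. Once that lemma is in place the remaining steps are routine, being either immediate consequences of isomorphism-invariance of the semantics or direct verifications on the explicit small graphs above; a minor preliminary point is that the written definition of $I_{ust}$ must first be read as ``least number of arguments whose removal yields a subgraph with a stable extension'' before one can compute with it.
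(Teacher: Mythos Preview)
Your proposal is correct and follows the same overall scheme as the paper: small explicit counterexamples for the negative entries, and direct semantic arguments for isomorphic invariance and for disjoint additivity of $I_{ngr}$ and $I_{ust}$. Your concrete witnesses differ only cosmetically from the paper's (for monotonicity of $I_{pr}$ and $I_{ngr}$ you have the fresh node $A_3$ attack both $A_1$ and $A_2$, the paper has it attack only $A_2$; your inversion witness for $I_{ust}$ is exactly the paper's), and in a couple of places your stated values for $I_{ust}$ are actually correct where the paper's are off (the three-cycle has $I_{ust}=1$, not $3$ as the paper claims). The one place where you take a genuinely different route is super-additivity: the paper builds bespoke overlapping pairs $G_1,G_2$ for each measure, whereas your observation that $G+G=G$ converts any graph with positive measure into a uniform counterexample for all three measures at once --- this is both shorter and more general. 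Your handling of disjoint additivity for $I_{ust}$, via the componentwise decomposition of stable extensions of a disjoint union, is also more explicit than the paper's one-line assertion of the same fact.
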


\begin{proof}
We consider each property as follows.
\begin{itemize}
\item Monotonicity. 
($I_{pr}$) Consider $G_1$ (left) and $G_2$ (right) where $I_{pr}(G_1) = 1$, $I_{pr}(G_2) = 0$, and $G_1 \sqsubseteq G_2$. 
\begin{center}
\begin{tikzpicture}[->,thick]
\tikzstyle{every node}=[draw,rectangle,fill=yellow]
\node (A1)  at (0,0) {$A_1$};
\node (A2)  at (1.5,0) {$A_2$};
\path	(A2)[] edge[bend left] (A1);		
\path	(A1)[] edge[bend left] (A2);		
\end{tikzpicture}
\hspace{1cm}
\begin{tikzpicture}[->,thick]
\tikzstyle{every node}=[draw,rectangle,fill=yellow]
\node (A1)  at (0,0) {$A_1$};
\node (A2)  at (1.5,0) {$A_2$};
\node (A3) at (3,0) {$A_3$};
\path	(A2)[] edge[bend left] (A1);		
\path	(A1)[] edge[bend left] (A2);		
\path	(A3)[] edge[] (A2);		
\end{tikzpicture}
\end{center}

($I_{ngr}$) Consider $G_1$ (above left) and $G_2$ (above right) where $I_{ngr}(G_1) = 2$, $I_{ngr}(G_2) = 0$, and $G_1 \sqsubseteq G_2$. 
($I_{ust}$) 
 Consider $G_1$ (left) and $G_2$ (right) where $I_{ust}(G_1) = 3$, $I_{ust}(G_2) = 0$, and $G_1 \sqsubseteq G_2$. 
\begin{center}
\begin{tikzpicture}[->,thick]
\tikzstyle{every node}=[draw,rectangle,fill=yellow]
\node (A1)  at (0,0) {$A_1$};
\node (A2)  at (1.5,0) {$A_2$};
\node (A3)  at (3,0) {$A_3$};
\path	(A3)[] edge[bend right] (A1);		
\path	(A2)[] edge[] (A3);		
\path	(A1)[] edge[] (A2);		
\end{tikzpicture}
\hspace{1cm}
\begin{tikzpicture}[->,thick]
\tikzstyle{every node}=[draw,rectangle,fill=yellow]
\node (A1)  at (0,0) {$A_1$};
\node (A2)  at (1.5,0) {$A_2$};
\node (A3)  at (3,0) {$A_3$};
\node (A4)  at (4.5,0) {$A_4$};
\path	(A3)[] edge[bend right] (A1);		
\path	(A2)[] edge[] (A3);		
\path	(A1)[] edge[] (A2);		
\path	(A4)[] edge[] (A3);		
\end{tikzpicture}
\end{center}

\item Inversion. ($I_{pr}$) Consider $G$ (left) and ${\sf Inverted}(G)$ with $I_{pr}(G) = 0$, and $I_{pr}({\sf Inverted}(G)) = 1$. 
\begin{center}
\begin{tikzpicture}[->,thick]
\tikzstyle{every node}=[draw,rectangle,fill=yellow]
\node (A1)  at (0,0) {$A_1$};
\node (A2)  at (1.5,0) {$A_2$};
\node (A3) at (3,0) {$A_3$};
\path	(A2)[] edge[bend left] (A1);		
\path	(A1)[] edge[bend left] (A2);		
\path	(A3)[] edge[] (A2);		
\end{tikzpicture}
\hspace{1cm}
\begin{tikzpicture}[->,thick]
\tikzstyle{every node}=[draw,rectangle,fill=yellow]
\node (A1)  at (0,0) {$A_1$};
\node (A2)  at (1.5,0) {$A_2$};
\node (A3) at (3,0) {$A_3$};
\path	(A2)[] edge[bend left] (A1);		
\path	(A1)[] edge[bend left] (A2);		
\path	(A2)[] edge[] (A3);		
\end{tikzpicture}
\end{center}
($I_{ngr}$) Consider $G$ (above left) and ${\sf Inverted}(G)$ (above right) with $I_{ngr}(G) = 0$, and $I_{ngr}({\sf Inverted}(G))$ $= 3$. 
($I_{ust}$) Consider $G$ (left) and ${\sf Inverted}(G)$ (right) with $I_{ust}(G) = 0$, and $I_{ust}({\sf Inverted}(G)) = 2$. 
\begin{center}
\begin{tikzpicture}[->,thick]
\tikzstyle{every node}=[draw,rectangle,fill=yellow]
\node (A1)  at (0,0) {$A_1$};
\node (A2)  at (1.5,0) {$A_2$};
\path	(A1)[] edge[] (A2);		
\draw (A2) .. controls (3,0.5) and (3,-0.5) .. (A2);
\end{tikzpicture}
\hspace{1cm}
\begin{tikzpicture}[->,thick]
\tikzstyle{every node}=[draw,rectangle,fill=yellow]
\node (A1)  at (0,0) {$A_1$};
\node (A2)  at (1.5,0) {$A_2$};
\path	(A2)[] edge[] (A1);		
\draw (A2) .. controls (3,0.5) and (3,-0.5) .. (A2);
\end{tikzpicture}
\end{center}

\item Isomorphic invariance.  ($I_{pr}$,$I_{ngr}$,$I_{ust}$)  Follows directly from definition.

\item Disjoint additivity. ($I_{pr}$) Consider $G_1$ (left) and $G_2$ (right) 
where $I_{pr}(G_1) = 1$, $I_{pr}(G_2) = 1$, and $I_{pr}(G_1+G_2) = 3$. 
\begin{center}
\begin{tikzpicture}[->,thick]
\tikzstyle{every node}=[draw,rectangle,fill=yellow]
\node (A1)  at (0,0) {$A_1$};
\node (A2)  at (1.5,0) {$A_2$};
\path	(A2)[] edge[bend left] (A1);		
\path	(A1)[] edge[bend left] (A2);		
\end{tikzpicture}
\hspace{1cm}
\begin{tikzpicture}[->,thick]
\tikzstyle{every node}=[draw,rectangle,fill=yellow]
\node (A3)  at (0,0) {$A_3$};
\node (A4)  at (1.5,0) {$A_4$};
\path	(A4)[] edge[bend left] (A3);		
\path	(A3)[] edge[bend left] (A4);		
\end{tikzpicture}
\end{center}
($I_{ngr}$)  If $G_1$ and $G_2$ are disjoint, 
then ${\sf Extensions}_{\gr}(G_1)$ $\cap$ ${\sf Extensions}_{\gr}(G_2) = \emptyset$,
and  ${\sf Attackees}_{\gr}(G_1)$ $\cap$ ${\sf Attackees}_{\gr}(G_2) = \emptyset$, 
and  ${\sf Nodes}_{\gr}(G_1) \cap {\sf Nodes}_{\gr}(G_2) = \emptyset$.
Therefore, $I_{ngr}(G_1 + G_2) = I_{ngr}(G_1) + I_{ngr}(G_2)$.
($I_{ust}$)  If $G_1$ and $G_2$ are disjoint, 
${\sf min} \{ |X| \mid {\sf Extensions}_{\st}({\sf Induced}(G_1+G_2,X)) 
	\mbox{ s.t. } X \subseteq {\sf Nodes}(G_1+G_2) \}$
= ${\sf min} \{ |X| \mid {\sf Extensions}_{\st}({\sf Induced}(G_1,X)) 
	\mbox{ s.t. } X \subseteq {\sf Nodes}(G_1) \}$ 
	+ ${\sf min} \{ |X| \mid {\sf Extensions}_{\st}({\sf Induced}(G_2,X)) 
	\mbox{ s.t. } X \subseteq {\sf Nodes}(G_2) \}$.
Therefore, $I_{ust}(G_1 + G_2) = I_{ust}(G_1) + I_{ust}(G_2)$.

\item Super-additivity. ($I_{pr}$) Consider $G_1$ (left) and $G_2$ (right) 
where $I_{pr}(G_1) = 1$, $I_{pr}(G_2) = 0$, and $I_{pr}(G_1+G_2) = 0$. 
\begin{center}
\begin{tikzpicture}[->,thick]
\tikzstyle{every node}=[draw,rectangle,fill=yellow]
\node (A1)  at (0,0) {$A_1$};
\node (A2)  at (1.5,0) {$A_2$};
\path	(A2)[] edge[bend left] (A1);		
\path	(A1)[] edge[bend left] (A2);		
\end{tikzpicture}
\hspace{1cm}
\begin{tikzpicture}[->,thick]
\tikzstyle{every node}=[draw,rectangle,fill=yellow]
\node (A2)  at (0,0) {$A_2$};
\node (A3)  at (1.5,0) {$A_3$};
\path	(A3)[] edge[] (A2);		
\end{tikzpicture}
\end{center}
($I_{ngr}$) Consider $G_1$ (above left) and $G_2$ (above right) 
where $I_{ngr}(G_1) = 2$, $I_{ngr}(G_2) = 0$, and $I_{ngr}(G_1+G_2) = 0$. 
($I_{ust}$) Consider $G_1$ (left) and $G_2$ (right) 
where $I_{ust}(G_1) = 3$, $I_{ust}(G_2) = 0$, and $I_{ust}(G_1+G_2) = 0$. 
\begin{center}
\begin{tikzpicture}[->,thick]
\tikzstyle{every node}=[draw,rectangle,fill=yellow]
\node (A1)  at (0,0) {$A_1$};
\node (A2)  at (1.5,0) {$A_2$};
\node (A3)  at (3,0) {$A_3$};
\path	(A1)[] edge[] (A2);		
\path	(A2)[] edge[] (A3);		
\path	(A3)[] edge[bend right] (A1);		
\end{tikzpicture}
\hspace{1cm}
\begin{tikzpicture}[->,thick]
\tikzstyle{every node}=[draw,rectangle,fill=yellow]
\node (A3)  at (0,0) {$A_3$};
\node (A4)  at (1.5,0) {$A_4$};
\path	(A4)[] edge[] (A3);		
\end{tikzpicture}
\end{center}

\end{itemize}
\end{proof}

\begin{proposition}
The $I_{pr}$, $I_{ngr}$, $I_{ust}$ measures are pairwise incompatible.
Each is also pairwise incompatible with each of the 
$I_{dr}$, $I_{in}$, $I_{win}$, $I_{wou}$, $I_{cc}$, $I_{wcc}$ 
and $I_{ic}$ measures.
\end{proposition}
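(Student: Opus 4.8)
The plan is to follow the template of the earlier proof that the seven structure measures are pairwise incompatible: use the property tables of Propositions~\ref{prop:properties:structure} and~\ref{prop:properties:extension} wherever they already separate two measures, and fall back on small explicit counterexamples wherever they do not. Concretely I would split the claim into two blocks: first the $3\times 7$ pairs consisting of one extension measure and one structure measure, and then the three pairs internal to $\{I_{pr},I_{ngr},I_{ust}\}$.

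For the first block there is a single uniform argument. Every one of $I_{dr},I_{in},I_{win},I_{wou},I_{cc},I_{wcc},I_{ic}$ satisfies Monotonicity (Proposition~\ref{prop:properties:structure}), while none of $I_{pr},I_{ngr},I_{ust}$ does (Proposition~\ref{prop:properties:extension}). So for each $I_e\in\{I_{pr},I_{ngr},I_{ust}\}$ I would take the pair $G_1\sqsubseteq G_2$ witnessing the failure of monotonicity for $I_e$ in the proof of Proposition~\ref{prop:properties:extension}, so that $I_e(G_1)>I_e(G_2)$. For any structure measure $I_s$, monotonicity gives $I_s(G_1)\le I_s(G_2)$. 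If $I_s(G_1)<I_s(G_2)$, the ordered pair $(G_1,G_2)$ already shows incompatibility ($I_s$ strictly increases, $I_e$ strictly decreases); if $I_s(G_1)=I_s(G_2)$, the ordered pair $(G_2,G_1)$ does, since $I_e(G_2)<I_e(G_1)$ holds but $I_s(G_2)<I_s(G_1)$ does not. Either way $I_e$ and $I_s$ are order-incompatible, and this disposes of all cross pairs at once.

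For the second block the tables are insufficient: Proposition~\ref{prop:properties:extension} separates $I_{pr}$ from $I_{ngr}$ and from $I_{ust}$ only through Disjoint additivity, which by itself does not force an order reversal, and it does not separate $I_{ngr}$ from $I_{ust}$ at all. So I would supply explicit witnesses, exactly as was done for $I_{win}$ versus $I_{wou}$. For $I_{pr}$ versus $I_{ngr}$: compare the $2$-cycle $G_a$ with the $3$-cycle $G_b$; here $I_{pr}(G_a)=1$ (two preferred extensions, one per node) while $I_{pr}(G_b)=0$ (the only preferred extension of the $3$-cycle is $\emptyset$), whereas $I_{ngr}(G_a)=2<3=I_{ngr}(G_b)$ (in both cases the grounded extension is empty and attacks nothing), so on the pair $(G_b,G_a)$ we get $I_{pr}(G_b)<I_{pr}(G_a)$ but not $I_{ngr}(G_b)<I_{ngr}(G_a)$. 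For $I_{ust}$ against $I_{pr}$ and against $I_{ngr}$: compare a single $2$-cycle $G_c$ with the disjoint union $G_d$ of two $2$-cycles; both graphs possess stable extensions, so $I_{ust}(G_c)=I_{ust}(G_d)=0$, while $I_{pr}$ rises from $1$ to $3$ and $I_{ngr}$ rises from $2$ to $4$, so on the pair $(G_c,G_d)$ the measure $I_{pr}$ (respectively $I_{ngr}$) strictly increases while $I_{ust}$ does not, which is incompatibility in both cases.

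I expect the second block to be the main obstacle, precisely because the structural properties catalogued so far are blind to the distinction between $I_{ngr}$ and $I_{ust}$ (and nearly blind between $I_{pr}$ and $I_{ust}$), so the argument there rests entirely on hand-built examples whose grounded extension, preferred extensions, and minimum stabilising deletion must be computed correctly. The delicate facts to pin down are that an odd cycle has an empty grounded extension and $\{\emptyset\}$ as its set of preferred extensions, and that the disjoint composition of graphs each having a stable extension again has one, so that $I_{ust}$ stays $0$. Everything else — the uniform monotonicity argument of the first block, and the routine bookkeeping that a mismatch in a structural property yields an order reversal — is straightforward.
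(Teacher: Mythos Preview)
Your proposal is correct and follows essentially the same two-part template as the paper: invoke the property tables (Propositions~\ref{prop:properties:structure} and~\ref{prop:properties:extension}) to handle the extension-versus-structure pairs, and then supply explicit small graphs for the three internal pairs. Your first block is actually more carefully argued than the paper's (you spell out via monotonicity why a property mismatch yields an order reversal), and your internal counterexamples (a $2$-cycle versus a $3$-cycle; one $2$-cycle versus two disjoint $2$-cycles) differ from the paper's choices but are equally valid and arguably tidier.
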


\begin{proof}
From the differences in satisfaction of properties in Proposition \ref{prop:properties:structure}, and Proposition \ref{prop:properties:extension}, the extension-based measures are pairwise incompatible with the structure-based measures. 
We now consider discriminating between the extension-based measures.
We can discriminate $I_{pr}$ from $I_{ngr}$ with graph $G_1$ (left) and $G_2$ (right),
where $I_{pr}(G_1) = 1$, $I_{pr}(G_2) = 7$, $I_{ngr}(G_1) = 2$, and $I_{ngr}(G_2) = 6$.
\begin{center}
\begin{tikzpicture}[->,thick]
\tikzstyle{every node}=[draw,rectangle,fill=yellow]
\node (A1)  at (0,0) {$A_1$};
\node (A2)  at (1.5,0) {$A_2$};
\path	(A1)[] edge[bend left] (A2);		
\path	(A2)[] edge[bend left] (A1);		
\end{tikzpicture}
\hspace{2cm}
\begin{tikzpicture}[->,thick]
\tikzstyle{every node}=[draw,rectangle,fill=yellow]
\node (A1)  at (0,0) {$A_1$};
\node (A2)  at (1.5,0) {$A_2$};
\node (A3)  at (2.5,0) {$A_3$};
\node (A4)  at (4,0) {$A_4$};
\node (A5)  at (5,0) {$A_5$};
\node (A6)  at (6.5,0) {$A_6$};
\path	(A2)[] edge[bend left] (A1);		
\path	(A1)[] edge[bend left] (A2);		
\path	(A4)[] edge[bend left] (A3);		
\path	(A3)[] edge[bend left] (A4);		
\path	(A5)[] edge[bend left] (A6);		
\path	(A6)[] edge[bend left] (A5);		
\end{tikzpicture}
\end{center}
We can discriminate $I_{pr}$ from $I_{ust}$ with graph $G_1$ (left) and $G_2$ (right),
where $I_{pr}(G_1) = 1$, $I_{pr}(G_2) = 3$, $I_{ust}(G_1) = 3$, and $I_{ust}(G_2) = 0$.
\begin{center}
\begin{tikzpicture}[->,thick]
\tikzstyle{every node}=[draw,rectangle,fill=yellow]
\node (A1)  at (0,0) {$A_1$};
\node (A2)  at (1.5,0) {$A_2$};
\node (A3)  at (3,0) {$A_3$};
\path	(A1)[] edge[] (A2);		
\path	(A2)[] edge[] (A3);		
\path	(A3)[] edge[bend right] (A1);		
\end{tikzpicture}
\hspace{2cm}
\begin{tikzpicture}[->,thick]
\tikzstyle{every node}=[draw,rectangle,fill=yellow]
\node (A1)  at (0,0) {$A_1$};
\node (A2)  at (1.5,0) {$A_2$};
\node (A3)  at (2.5,0) {$A_3$};
\node (A4)  at (4,0) {$A_4$};
\path	(A2)[] edge[bend left] (A1);		
\path	(A1)[] edge[bend left] (A2);		
\path	(A4)[] edge[bend left] (A3);		
\path	(A3)[] edge[bend left] (A4);		
\end{tikzpicture}
\end{center}
We can discriminate $I_{pr}$ from $I_{ust}$ with graph $G_1$ (left) and $G_2$ (right),
where $I_{ngr}(G_1) = 2$, $I_{ngr}(G_2) = 5$, $I_{ust}(G_1) = 0$, and $I_{ust}(G_2) = 3$.
\begin{center}
\begin{tikzpicture}[->,thick]
\tikzstyle{every node}=[draw,rectangle,fill=yellow]
\node (A1)  at (0,0) {$A_1$};
\node (A2)  at (1.5,0) {$A_2$};
\path	(A2)[] edge[bend left] (A1);		
\path	(A1)[] edge[bend left] (A2);		
\end{tikzpicture}
\hspace{2cm}
\begin{tikzpicture}[->,thick]
\tikzstyle{every node}=[draw,rectangle,fill=yellow]
\node (A1)  at (0,0) {$A_1$};
\node (A2)  at (0,2) {$A_2$};
\node (A3) at (1.5,1) {$A_3$};
\node (A4) at (3,1) {$A_4$};
\node (A5) at (4.5,1) {$A_5$};
\path	(A1)[] edge[bend left] (A2);		
\path	(A2)[] edge[bend left] (A3);		
\path	(A3)[] edge[bend left] (A1);	
\path	(A4)[] edge[bend left] (A5);		
\path	(A5)[] edge[bend left] (A4);	
\end{tikzpicture}
\end{center}
\end{proof}

In this subsection, we have considered an alternative class of measures based on the extensions of the graph. As can be seen from the properties that hold for them, they are very different in behaviour to the structured-based measures. On the one hand, measuring inconsistency in terms of the extensions seems a natural option, but on the other hand, they fail to satisfy most of the optional properties that we have considered, though this may be more a reflection of the nature of these properties, and that we need to consider a wider range of properties to justifiy and characterise extension-based measures.

\section{Logic-based instantiations}
\label{section:logic}

We now consider structured argumentation. This is where the arguments in the argument graph are instantiated with logical structure. We will proceed by reviewing a specific approach to structured argumentation called deductive argumentation, and then we consider two approaches to measuring inconsistency in deductive argumentation. The first is the degree of undercut approach proposed in \cite{BH05aaai,BH08book} and the second is the application of existing logic-based measures of inconsistency to deductive arguments in an argument graph.

\subsection{Review of deductive argumentation}


In deductive argumentation, each argument is a pair where the first item is a set of premises that logically entails the second item according to some logical consequence relation (for a review see \cite{BesnardHunter2014tutorial}). So we have a logical language to express the set of premises, and the claim, and we have a logical consequence relation to relate the premises to the claim. So in order to construct argument graphs with deductive arguments, we need to specify the choice of logic (which we call the base logic) that we use to define arguments and counterarguments.

Classical logic is appealing as the choice of base logic as it reflects the richer deductive reasoning often seen in arguments arising in discussions and debates. 
We assume the usual propositional and predicate (first-order) languages for classical logic, and the usual 
the {\bf classical consequence relation}, denoted $\vdash$.
A {\bf classical knowledgebase} is a set of classical propositional or predicate formulae.

\begin{definition}
For a classical knowledgebase $\Phi$,
and a classical formula $\alpha$,
$\langle \Phi, \alpha \rangle$ is a {\bf classical argument}
iff $\Phi \vdash \alpha$ 
and $\Phi \not\vdash\bot$
and there is no proper subset $\Phi'$ of $\Phi$ 
such that $\Phi' \vdash \alpha$. 
For an argument $A = \langle \Phi, \alpha \rangle$, 
the function ${\sf Support}(A)$ returns $\Phi$ 
and the function ${\sf Claim}(A)$ returns $\alpha$. 
\end{definition}


\begin{example}
The following classical argument uses a universally quantified formula in contrapositive reasoning to obtain the claim about number 77.
\[
\tt \langle \{ \forall X. multipleOfTen(X) \rightarrow even(X), \neg even(77) \}, \neg multipleOfTen(77)  \rangle
\]
\end{example}

A counterargument is an argument that attacks another argument. In deductive argumentation, we define the notion of counterargument in terms of logical contradiction between the claim of the counterargument and the premises of claim of the attacked argument. 
Given the expressivity of classical logic (in terms of language and inferences), there are a number of natural ways to define counterarguments.


\begin{definition}
\label{def:classical:attack}
Let $A$ and $B$ be two arguments. 
We define the following types of {\bf attack}.
\begin{itemize}
\item $A$ is a {\bf defeater} of $B$ 
if ${\sf Claim}(A) \vdash\neg \bigwedge {\sf Support}(B)$.
\item $A$ is a {\bf direct defeater} of $B$ if 
$\exists \phi \in {\sf Support}(B)$  s.t. ${\sf Claim}(A) \vdash \neg \phi$.
\item $A$ is a {\bf  undercut} of $B$ if 
$\exists \Psi \subseteq {\sf Support}(B)$ 
s.t. ${\sf Claim}(A) \equiv \neg \bigwedge \Psi$. 
\item $A$ is a {\bf  direct undercut} of $B$ if 
$\exists \phi \in {\sf Support}(B)$ s.t. ${\sf Claim}(A) \equiv \neg \phi$.
\item $A$ is a {\bf canonical undercut} of $B$ if 
${\sf Claim}(A) \equiv \neg \bigwedge {\sf Support}(B)$.
\item $A$ is a {\bf rebuttal} of $B$ if
${\sf Claim}(A) \equiv \neg {\sf Claim}(B)$.
\item $A$ is a {\bf  defeating rebuttal} of $B$ if 
${\sf Claim}(A) \vdash \neg {\sf Claim}(B)$.
\end{itemize} 
\end{definition}

To illustrate these different notions of counterargument, we consider the following examples,
and we relate these definitions in Figure \ref{fig:classical:attack} where we show that classical defeaters are the most general of these definitions.

\begin{example}
Let $\Delta = \{ 
	a\vee b, a \leftrightarrow b,
	c\rightarrow a,
	\neg a\wedge \neg b,
	a,b,c,
	a \rightarrow b,
	\neg a, \neg b, \neg c
	\}$
\[
\begin{array}{l}
\langle \{ a \vee b, c \}, (a \vee b) \wedge c \rangle
	\mbox{ is a defeater of } 
	\langle \{ \neg a, \neg b \}, \neg a \wedge \neg b \rangle\\
	
\langle \{ a \vee b, c \}, (a \vee b) \wedge c \rangle
	\mbox{ is a direct defeater of }
	\langle \{ \neg a \wedge \neg b \}, \neg a \wedge \neg b \rangle\\

\langle \{ \neg a\wedge \neg b \}, \neg (a \wedge b) \rangle
	\mbox{ is a undercut of } 
	\langle \{ a,b,c \}, a \wedge b \wedge c \rangle\\
	
\langle \{ \neg a\wedge \neg b \}, \neg a \rangle
	\mbox{ is a direct undercut of } 
	\langle \{ a,b,c \}, a \wedge b \wedge c \rangle\\

\langle \{ \neg a\wedge \neg b \}, \neg (a \wedge b \wedge c) \rangle
	\mbox{ is a canonical undercut of } 
	\langle \{ a,b,c \}, a \wedge b \wedge c \rangle\\

\langle \{ a, a \rightarrow b \}, b \vee c  \rangle
	\mbox{ is a rebuttal of } 
	\langle \{ \neg a \wedge \neg b, \neg c \}, \neg(b \vee c) \rangle\\
	
\langle \{ a, a \rightarrow b \}, b  \rangle
	\mbox{ is a defeating rebuttal of } 
	\langle \{ \neg a \wedge \neg b, \neg c \}, \neg(b \vee c) \rangle\\
\end{array}
\]

\end{example}

\begin{figure}
\begin{center}
\begin{tikzpicture}[->,>=latex,thick]

\node (d) [text centered,text width = 3cm,shape=rectangle,fill=blue!20,draw] {defeater};
\node (dd) [below left=of d,text centered,text width = 3cm,shape=rectangle,fill=blue!20,draw] {direct defeater};
\node (u) [below=of d,text centered,text width = 3cm,shape=rectangle,fill=blue!20,draw] {undercut};
\node (dr) [below right=of d,text centered,text width = 3cm,shape=rectangle,fill=blue!20,draw] {defeating rebuttal};
			
\node (du) [below=of dd,text centered,text width = 3cm,shape=rectangle,fill=blue!20,draw] {direct undercut};
\node (cu) [below=of u,text centered,text width = 3cm,shape=rectangle,fill=blue!20,draw] {canonical undercut};
\node (r) [below=of dr,text centered,text width = 3cm,shape=rectangle,fill=blue!20,draw] {rebuttal};
			
			\path	(dd) edge  (d);
			\path	(u) edge (d);
			\path	(dr) edge (d);
			
			\path	(du) edge (dd);
			\path	(du) edge (u);
			\path	(cu) edge (u);
			\path	(r) edge (dr);
			
\end{tikzpicture} 
\end{center}
\caption{\label{fig:classical:attack}We can represent the containment between the attack relations as above where an arrow from $R_1$ to $R_2$
indicates that $R_1 \subseteq R_2$. }
\end{figure}
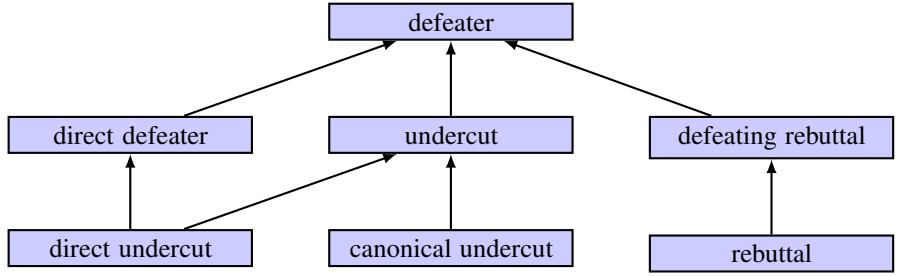

\begin{figure}
\begin{center}
\begin{tikzpicture}[->,>=latex,thick]
\node (a1) [text centered,text width=10cm,shape=rectangle,fill=yellow,draw] {
						\[
						\begin{array}{l}
						\mbox{\small \tt bp(high)}\\
						\mbox{\small \tt ok(diuretic)}\\
						\mbox{\small \tt bp(high)} \wedge \mbox{\small \tt ok(diuretic)} \rightarrow \mbox{\small \tt give(diuretic)}\\
						\neg \mbox{\small \tt ok(diuretic)} \vee \neg \mbox{\small \tt ok(betablocker)}\\
						\hline
						\mbox{\small \tt give(diuretic)} \wedge \neg \mbox{\small \tt ok(betablocker)}						
						\end{array}
						\]};
			\node (a2) [below=of a1,text centered,text width=10cm,shape=rectangle,fill=yellow,draw] {
						\[
						\begin{array}{l}
						\mbox{\small \tt bp(high)} \\
						\mbox{\small \tt ok(betablocker)}\\
						\mbox{\small \tt bp(high)} \wedge \mbox{\small \tt ok(betablocker)} \rightarrow \mbox{\small \tt give(betablocker)}\\
						\neg \mbox{\small \tt ok(diuretic)} \vee \neg \mbox{\small \tt ok(betablocker)}\\
						\hline
						\mbox{\small \tt give(betablocker)} \wedge \neg \mbox{\small \tt ok(diuretic)}
						\end{array}
						\]};
			\node (a3) [below=of a2,text centered,text width=8cm,shape=rectangle,fill=yellow,draw] {
						\[
						\begin{array}{l}
						\mbox{\small \tt symptom(emphysema)}, \\
						\mbox{\small \tt symptom(emphysema)} \rightarrow \neg \mbox{\small \tt ok(betablocker)} \\
						\hline
						\neg \mbox{\small \tt ok(betablocker)}
						\end{array}
						\]};
			\path	(a1)[bend left] edge node[auto] {} (a2);
			\path	(a2)[bend left] edge node[] {} (a1);
			\path	(a3) edge node[] {} (a2);
\end{tikzpicture}
\end{center}
\caption{\label{fig:classical:medical}An instantiated argument graph  for the abstract argument graph in Example \ref{ex:intro1}. The atom ${\tt bp(high)}$ denotes that the patient has high blood pressure.  The top two arguments rebut each other (i.e. the attack is  defeating rebut). For this, each argument has an integrity constraint in the premises that says that it is not ok to give both betablocker and diuretic. So the top argument is attacked on the premise ${\tt ok(diuretic)}$ and the middle argument is attacked on the premise ${\tt ok(betablocker)}$. So we are using the ${\tt ok}$ predicate as a normality condition for the rule to be applied.}
\end{figure}
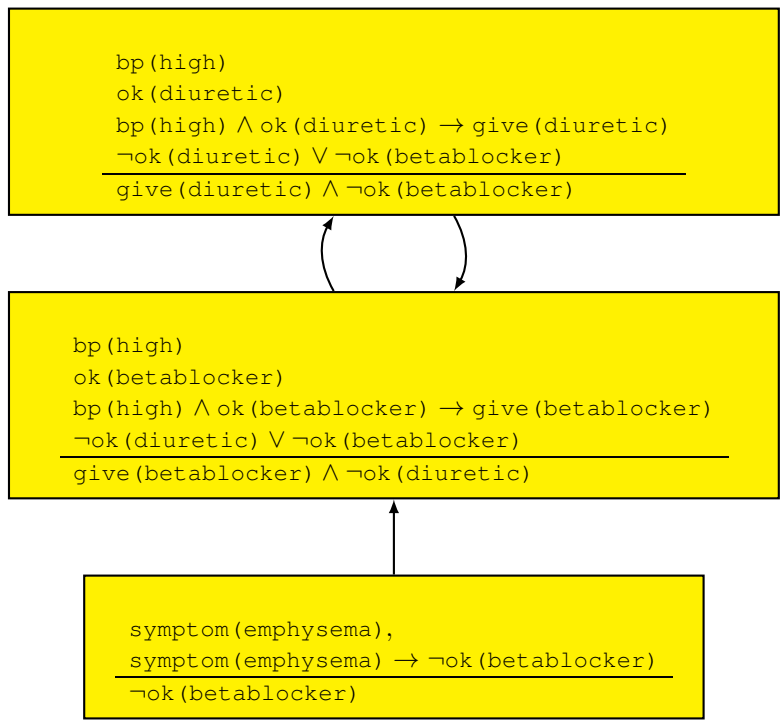


\begin{example}
\label{ex:classical:counterarguments:airline}
Consider the following arguments. 
$A_1$ is attacked by $A_2$ as $A_2$ is an undercut of $A_1$ though it is neither a direct undercut nor a canonical undercut.
Essentially, the attack says that the flight cannot be both a low cost flight and a luxury flight.
\[
\begin{array}{l}
A_1 = \langle \{ {\tt lowCostFly}, {\tt luxFly}, {\tt lowCostFly} \wedge {\tt luxFly} \rightarrow {\tt goodFly} \}, {\tt goodFly} \rangle\\						
A_2 = \langle \{ \neg {\tt lowCostFly} \vee \neg {\tt luxFly} \}, \neg {\tt lowCostFly} \vee \neg {\tt luxFly} \rangle\\
\end{array}
\]
\end{example}

Trivially, undercuts are defeaters but it is also quite simple to establish that rebuttals are defeaters. Furthermore,  if an argument has defeaters then it has undercuts. It may happen that an argument has defeaters but no rebuttals as illustrated next.

\begin{example}
Let $\Delta=\{\tt \neg containsGarlic \wedge goodDish,\neg goodDish\}$.
Then the following argument has at least one defeater 
but no rebuttal.
\[
\langle\{\tt \neg containsGarlic \wedge goodDish \}, \neg containsGarlic \rangle
\]
\end{example}

There are some important differences between rebuttals and undercuts that can be seen in the following examples. 

\begin{example}
Consider the following arguments. The first argument $A_1$ is a direct undercut to the second argument $A_2$, but neither rebuts each other.
Furthermore, $A_1$ ``agrees" with the claim of $A_2$ since the premises of $A_1$ could be used for an alternative argument with the same claim as $A_2$.
\[
\begin{array}{l}
A_1 = {\langle\{\tt \neg containsGarlic \wedge \neg goodDish \}, \neg containsGarlic \rangle} \\
A_2 = {\langle\{\tt containsGarlic, containsGarlic \rightarrow  \neg goodDish \}, \neg goodDish \rangle} \\
\end{array}
\]
\end{example}

\begin{example}
Consider the following arguments. The first argument is a rebuttal of the second argument, but it is not an undercut because the claim of the first argument is not equivalent to the negation of some subset of the premises of the second argument.
\[
\begin{array}{l}
A_1 = {\langle\{\tt goodDish \}, goodDish \rangle} \\
A_2 = {\langle\{\tt containsGarlic, containsGarlic \rightarrow  \neg goodDish \}, \neg goodDish \rangle} \\
\end{array}
\]
\end{example}

So an undercut for an argument need not be a rebuttal for that argument, and a rebuttal for an argument need not be an undercut for that argument.

An instantiated argument graph is an argument graph where each node is a deductive argument, and each arc is an attack confirming to the definitions for attack (Definition \ref{def:classical:attack}). We provide illustrations of instantiated argument graphs in the following example and in Figure \ref{fig:classical:medical}.

\begin{example}
\label{ex:classical:airline}
Consider the following argument graph
where $A_1$ is ``The flight is low cost and luxury, therefore it is a good flight", 
and $A_2$ is ``A flight cannot be both low cost and luxury".
 \begin{center}
\begin{tikzpicture}[->,>=latex,thick,every node/.style={draw,rectangle,fill=yellow}]
			\node (1) [] {$A_1$};
			\node (2) [right=of 1] {$A_2$};
			\path	(2)[] edge (1);
\end{tikzpicture}
\end{center}
For this, we instantiate the arguments in the above abstract argument graph to give the following instantiated argument graph where $A_2$ is an undercut to $A_1$. 
\begin{center}
\begin{tikzpicture}[->,>=latex,thick,every node/.style={draw,rectangle,fill=yellow}]
			\node (1) [] {$A_1 = \langle \{ {\tt lowCostFly}, {\tt luxFly}, {\tt lowCostFly} \wedge {\tt luxFly} \rightarrow {\tt goodFly} \}, {\tt goodFly} \rangle$};
			\node (2) [below=of 1] {$A_2 = \langle \{ \neg ( {\tt  lowCostFly} \wedge {\tt luxFly}) \}, \neg {\tt lowCostFly} \vee \neg {\tt luxFly} \rangle$};
			\path	(2)[] edge (1);
\end{tikzpicture}
\end{center}
\end{example}

Perhaps the first paper to consider instantiating argument graphs with deductive arguments based on classical logic is by Cayrol  \cite{Cayrol95} using direct undercut. For more details on deductive argumentation and how it can be used to instantiate argument graphs, see \cite{BesnardHunter2014tutorial}.

\subsection{Degree of undercut}
\label{section:more}

An argument conflicts with each of its undercuts, by the very definition  of an undercut. Now, some may conflict more than others, and some may conflict a little while  others conflict a lot.

\begin{example}
Consider the following argument graph $G$.
Each undercut has a premise that negates some or all of the premises in the root. 
The left child has the weakest premise which can be read as saying that one of the premises in the root is false without saying which, the middle child says that one of the premises in the root is false and states which one, and the right child says that all of the premises are false. 
\begin{center}
\begin{tikzpicture}[->,>=latex,thick,every node/.style={draw,rectangle,fill=yellow}]
			\node (1) [] at (4,2) {$\langle \{ \alpha, \beta, \gamma \}, \alpha \wedge \beta \wedge \gamma \rangle$};
			\node (2) [] at (0,0) {$\langle \{ \neg (\alpha \wedge \beta \wedge \gamma) \}, \neg(\alpha\wedge\beta\wedge\gamma) \rangle$};
			\node (3) [] at (4,0) {$\langle \{ \neg \alpha \}, \neg\alpha \rangle$};
			\node (4) [] at (8,0) {$\langle \{ \neg \alpha \wedge \neg \beta \wedge \neg \gamma \}, \neg(\alpha\vee\beta\vee\gamma) \rangle$};
			\path	(2)[] edge (1);
			\path	(3)[] edge (1);
			\path	(4)[] edge (1);
\end{tikzpicture}
\end{center}
\end{example}


By these simple examples of undercuts, we see that there can be a  difference in the amount of conflict between supports, and hence can be taken as an intuitive starting point for defining the degree of undercut that an argument has against its parent. To address this, the degree of undercut is a measure of the conflict between a pair of arguments based on the supports of these arguments \cite{BH05aaai,BH08book}. There are some alternatives for defining the degree of undercut,  and we review one of these proposals in the next subsection.

\subsubsection{Degree of undercut based on Dalal distance}
\index{Degree of undercut!Dalal distance}


In this section, we investigate a degree of undercut based on the distance between pairs of models. For this, we use the Dalal distance \cite{Dalal1988}.

\begin{definition}
For the language ${\cal L}$, let ${\sf Atoms}({\cal L})$ be the set of atoms used in the language (and so the formulae in ${\cal L}$ are composed from ${\sf Atoms}({\cal L})$ and the logical connectives using the usual inductive definition). 
\end{definition}

\begin{definition}
Let $\Pi$ be a finite non-empty subset of ${\sf Atoms}({\cal L})$ and let $w_i,w_j \in \wp(\Pi)$.
The {\bf Dalal distance} between $w_i$ and $w_j$, denoted ${\sf Dalal}(w_i,w_j)$, 
is the difference in the number of atoms assigned true:
\[
{\sf Dalal}(w_i,w_j) = |w_i - w_j| + |w_j - w_i|
\]
\end{definition}

\begin{example}
Let $w_1 = \{\alpha, \gamma, \delta\}$ and $w_2 = \{ \beta, \gamma\}$ where $\{\alpha, \beta, \gamma, \delta\} \subseteq \Pi$.
Then, 
\[
{\sf Dalal}(w_1,w_2)  =  |\{\alpha, \delta\}| + |\{\beta\}|  =  3
\]
\end{example}

To evaluate the conflict between the support of an argument 
$A$ and the support of an undercut $A'$, we consider the models of ${\sf Support}(A)$ and ${\sf Support}(A')$, restricted to  a set of atoms $\Pi$. For this, we require the following definition.

\begin{definition}
Let $\Pi$ is a finite non-empty subset of ${\sf Atoms}({\cal L})$, 
let $\Phi$ be a set of formulae,
and let $\models$ be the classical satisfaction relation.
\[
[\sf Models(\Phi,\Pi) = \{ w \in \wp(\Pi) \mid \forall \phi \in \Phi \mbox{ and } w \models \phi \}
\]
\end{definition}


\begin{example}
Let $\Phi = \{ \alpha \wedge \delta, \neg \phi, \gamma \vee \delta, \neg \psi, \beta \vee \gamma\} \subseteq \Delta$, and let $\Pi$ be $\{\alpha,\beta,\gamma,\delta,\phi\} \subseteq {\sf Atoms}({\cal L})$. Should it be the case that $\Pi \neq {\sf Atoms}({\cal L})$, the set of all the models 
of $\Phi$ is a proper superset of ${\sf Models}(\Phi,\Pi)$ as the latter consists exactly 
of the following models.
\[
\{\alpha,\beta,\gamma,\delta\}
\hspace{1cm}
\{\alpha, \beta, \delta\}
\hspace{1cm}
\{\alpha,\gamma,\delta\}
\]
\end{example}

To evaluate the conflict between two sets of formulae, we take a pair of models restricted to $\Pi$, one for each set, such that the Dalal distance is minimized. The degree of conflict is this distance divided by the maximum possible Dalal distance between a pair of models 
(i.e.\ $\log_2$ of the total number of models in $\wp(\Pi)$ which is $|\Pi|$). 

\begin{definition}
\label{def:rational:conflict:dalal}
The {\bf degree of conflict} wrt $\Pi$, denoted ${\sf Conflict}(\Phi,\Psi,\Pi)$, is:
\[
\frac{\min\{ {\sf Dalal}(w_\Phi,w_\Psi) 
			\mid w_\Phi \in {\sf Models}(\Phi,\Pi), w_\Psi \in {\sf Models}(\Psi,\Pi) \}}{|\Pi|}
\]
\end{definition}

\begin{example}
Let $\Pi = \{\alpha,\beta,\gamma,\delta \}$.
\[
\begin{array}{l}
{\sf Conflict}(\{\alpha \wedge \beta \wedge \gamma \wedge \delta\}, \{\neg \alpha \vee \neg \beta \vee \neg \gamma\},\Pi) = 1/4\\
{\sf Conflict}(\{\alpha \wedge \beta \wedge \gamma \wedge \delta\}, \{\neg (\alpha \vee \beta)\}, \Pi) = 2/4\\
{\sf Conflict}(\{\alpha \wedge \beta \wedge \gamma \wedge \delta\}, \{\neg \alpha \wedge \neg \beta \wedge \neg \gamma\},\Pi) = 3/4
\end{array}
\]
\end{example}

We obtain a degree of undercut by applying ${\sf Conflict}$ to supports as defined next. 

\begin{definition}
Let $A_i = \langle \Phi_i, \alpha_i \rangle$  
and $A_j = \langle \Phi_j, \alpha_j \rangle$ be arguments.
\[{\sf Degree}(A_i,A_j)  = {\sf Conflict}(\Phi_i,\Phi_j,{\sf Atoms}(\Phi_i\cup\Phi_j))\]
\end{definition}

Clearly, if $A_i$ is an undercut for $A_j$, then ${\sf Degree}(A_i,A_j) > 0$.

\begin{example}
Below we give the values for the degree of undercut of the argument
$\langle \{\alpha, \beta, \gamma\}, \ldots \rangle$ (the consequent is unimportant here) by some of its canonical undercuts:
\[\begin{array}{l}
{\sf Degree}(\langle \{\alpha, \beta, \gamma\}, \ldots \rangle, \langle \{\neg \alpha \wedge \neg \beta \wedge \neg \gamma\}, \ldots \rangle) = 1\\
{\sf Degree}(\langle \{\alpha, \beta, \gamma\}, \ldots \rangle, \langle \{\neg \alpha \wedge \neg \beta\}, \ldots \rangle) = 2/3\\
{\sf Degree}(\langle \{\alpha, \beta, \gamma\}, \ldots \rangle, \langle \{\neg \alpha \vee \neg \beta \vee \neg \gamma\}, \ldots \rangle) = 1/3\\
{\sf Degree}(\langle \{\alpha, \beta, \gamma\}, \ldots \rangle, \langle \{\neg \alpha\}, \ldots \rangle) = 1/3
\end{array}\]
\end{example}

\begin{example}
\label{ex:degreeofundercut:4}
Consider the following argument graph where ${\sf Degree}( A_1, A_2) = 1/3$, ${\sf Degree}( A_1, A_3) = 2/3$, and ${\sf Degree}( A_1, A_4) = 3/4$. 
\begin{center}
\begin{tikzpicture}[->,thick]
\tikzstyle{every node}=[draw,rectangle,fill=yellow]
\node (A1)  at (4,2) {$A_1 =  \langle \{ \alpha \wedge \beta \wedge \gamma \}, \alpha \rangle$};
\node (A2)  at (0,1) {$A_2 = \langle \{ \neg \alpha \vee \neg \beta \vee \neg \gamma \}, \neg( \alpha \wedge \beta \wedge \gamma) \rangle$};
\node (A3)  at (4,0) {$A_3 = \langle \{ \neg \alpha \wedge \neg \gamma \}, \neg( \alpha \wedge \beta \wedge \gamma) \rangle$};
\node (A4) at (8,1) {$A_4 = \langle \{ \neg \alpha \wedge \neg \beta \wedge \neg \gamma \wedge \neg \delta\}, \neg( \alpha \wedge \beta \wedge \gamma) \rangle$};
\path	(A2)[] edge[] (A1);		
\path	(A3)[] edge[] (A1);		
\path	(A4)[] edge[] (A1);		
\end{tikzpicture}
\end{center}
\end{example}

\begin{example}
More generally, let 
$A_1 = \langle \{ \neg (\alpha_1 \vee\ldots\vee \alpha_n) \}, \neg (\alpha_1 \wedge\ldots\wedge \alpha_n) \rangle$,
$A_2 = \langle \{ \neg \alpha_1 \vee\ldots\vee \neg \alpha_n \}, \neg (\alpha_1 \wedge\ldots\wedge \alpha_n) \rangle$,
$A_3 = \langle \{ \neg \alpha_1 \}, \neg (\alpha_1 \wedge\ldots\wedge \alpha_n) \rangle$,
$A_4 = \langle \{ \alpha_1 \wedge\ldots\wedge \alpha_n \}, \alpha_1 \rangle$.
\[
\begin{array}{l}
{\sf Degree}( A_4, A_1) = n/n\\
{\sf Degree}( A_4, A_2) = 1/n\\
{\sf Degree}( A_4, A_3) = 1/n
\end{array}
\]
\end{example}

The above examples indicate how the ${\sf Degree}$ measure differentiates between different kinds of attack, and the following result shows the ${\sf Degree}$ measure has the basic properties that we require. 

\begin{proposition}
Let $A_i = \langle \Phi_i, \alpha_i \rangle$  
and $A_j = \langle \Phi_j, \alpha_j \rangle$ be arguments.
\[
\begin{array}{ll}
(1) & 0 \leq {\sf Degree}(A_i,A_j) \leq 1\\
(2) & {\sf Degree}(A_i,A_j) = {\sf Degree}_\mathsf{C}(A_j,A_i)\\
(3) & {\sf Degree}(A_i,A_j) = 0 \mbox{ iff } \Phi_i \cup \Phi_j \not\vdash \bot
\end{array}
\]
\end{proposition}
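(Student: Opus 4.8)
The plan is to dispatch the three claims separately, each being a short consequence of unwinding the definitions of ${\sf Degree}$, ${\sf Conflict}$, ${\sf Dalal}$ and ${\sf Models}$. Throughout, abbreviate $\Pi = {\sf Atoms}(\Phi_i \cup \Phi_j)$. First I would record the standing facts that make everything well-defined: since $A_i$ and $A_j$ are arguments we have $\Phi_i \not\vdash \bot$ and $\Phi_j \not\vdash \bot$, and since $\Pi$ contains every atom occurring in $\Phi_i$ (respectively in $\Phi_j$), a classical model of $\Phi_i$ restricted to $\Pi$ lies in ${\sf Models}(\Phi_i,\Pi)$, so both ${\sf Models}(\Phi_i,\Pi)$ and ${\sf Models}(\Phi_j,\Pi)$ are non-empty; hence the minimum in the definition of ${\sf Conflict}(\Phi_i,\Phi_j,\Pi)$ ranges over a non-empty set and is attained. (The sole degenerate situation is $\Phi_i\cup\Phi_j$ containing no atoms at all, making $\Pi=\emptyset$; I would simply note that the definitions presuppose $\Pi$ non-empty, exactly as in the definition of the Dalal distance.)

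For claim (1), the lower bound is immediate because ${\sf Dalal}(w_\Phi,w_\Psi) = |w_\Phi - w_\Psi| + |w_\Psi - w_\Phi| \geq 0$ for every pair, so the minimum, and hence ${\sf Degree}(A_i,A_j)$, is $\geq 0$. For the upper bound, observe that $w_\Phi - w_\Psi$ and $w_\Psi - w_\Phi$ are disjoint subsets of $\Pi$, so ${\sf Dalal}(w_\Phi,w_\Psi) \leq |\Pi|$ for every pair; dividing by $|\Pi|$ gives ${\sf Degree}(A_i,A_j) \leq 1$. For claim (2), ${\sf Dalal}$ is visibly symmetric in its two arguments and ${\sf Atoms}(\Phi_i \cup \Phi_j) = {\sf Atoms}(\Phi_j \cup \Phi_i)$, so the minimum, being taken over unordered pairs of models, is unchanged when the two argument supports are swapped; thus ${\sf Conflict}(\Phi_i,\Phi_j,\Pi) = {\sf Conflict}(\Phi_j,\Phi_i,\Pi)$, i.e. ${\sf Degree}(A_i,A_j) = {\sf Degree}(A_j,A_i)$.

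For claim (3) I would argue both directions using the locality of propositional satisfaction (the truth value of a formula depends only on the atoms it contains). If $\Phi_i \cup \Phi_j \not\vdash \bot$, pick a classical model $v$ of $\Phi_i \cup \Phi_j$ and set $w = \{a \in \Pi \mid v \mbox{ assigns } a \mbox{ true}\} \in \wp(\Pi)$; since all atoms of $\Phi_i$ and of $\Phi_j$ lie in $\Pi$, $w$ satisfies every formula of $\Phi_i \cup \Phi_j$ that $v$ does, so $w \in {\sf Models}(\Phi_i,\Pi) \cap {\sf Models}(\Phi_j,\Pi)$, and taking $w_\Phi = w_\Psi = w$ yields ${\sf Dalal}(w_\Phi,w_\Psi) = 0$, whence ${\sf Degree}(A_i,A_j) = 0$. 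Conversely, if ${\sf Degree}(A_i,A_j) = 0$ then there are $w_\Phi \in {\sf Models}(\Phi_i,\Pi)$ and $w_\Psi \in {\sf Models}(\Phi_j,\Pi)$ with $|w_\Phi - w_\Psi| + |w_\Psi - w_\Phi| = 0$, forcing $w_\Phi = w_\Psi =: w$; extending $w$ to any full valuation over ${\sf Atoms}({\cal L})$ gives a classical model of $\Phi_i$ and of $\Phi_j$ at once, so $\Phi_i \cup \Phi_j \not\vdash \bot$.

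The only step that needs any care — the ``hard part,'' such as it is — is making the passage between $\Pi$-restricted valuations and full valuations precise in claim (3); this is where one must invoke the standard fact that satisfaction depends only on the atoms occurring in a formula, and it is precisely the choice $\Pi = {\sf Atoms}(\Phi_i\cup\Phi_j)$ in the definition of ${\sf Degree}$ that makes restriction and extension of valuations preserve membership in the two model sets. Everything else is routine bookkeeping with the definitions.
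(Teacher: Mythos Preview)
Your proof is correct. The paper states this proposition without proof, so there is no reference argument to compare against; your approach of unwinding the definitions of ${\sf Degree}$, ${\sf Conflict}$, ${\sf Dalal}$, and ${\sf Models}$ and handling each of the three claims separately is exactly the natural one, and your care about the passage between $\Pi$-restricted and full valuations in claim~(3) is appropriate.
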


So the degree of undercut gives a value in the unit interval to represent the how much two arguments differ in terms of their premises. In addition to this proposal, \cite{BH08book} presents some alternative proposals for defining degree of undercut.  

\subsubsection{Cumulative degree of undercut}

We now consider how we can harness the notion of degree of undercut as  an inconsistency measure for an argument graph.

\begin{definition}
Let $G$ be an argument graph.
The {\bf cumulative degree of undercut} in $G$, denoted ${\sf I_{cu}}(G)$, 
is given by 
\[
I_{cu}(G) = \sum_{(A_i,A_j) \in {\sf Arcs}(G)} {\sf Degree}(A_i,A_j)
\]
\end{definition}

\begin{example}
Consider the argument graph in Example \ref{ex:degreeofundercut:4}. 
For this, $I_{cu}(G) = 1/3 + 2/3 + 3/4 = 7/4$. 
\end{example}

\begin{proposition}
The $I_{cu}$ measure is a gra[h-based inconsistency measure according 
to Definition \ref{def:inconsistencymeasure}.
\end{proposition}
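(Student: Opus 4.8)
The plan is to verify directly the two defining conditions of Definition \ref{def:inconsistencymeasure}, namely Consistency and Freeness, since these are the only requirements; the result is a routine consequence of the fact that $I_{cu}$ is defined as a sum indexed by ${\sf Arcs}(G)$ whose summands ${\sf Degree}(A_i,A_j)$ depend only on the two arguments incident to each arc.

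First I would check that $I_{cu}$ is well-defined as a function into $\reals$: by part (1) of the preceding proposition, each term satisfies $0 \le {\sf Degree}(A_i,A_j) \le 1$, so the finite sum over ${\sf Arcs}(G)$ is a (non-negative) real number. Next, for Consistency, I would observe that if ${\sf Arcs}(G) = \emptyset$ then $I_{cu}(G) = \sum_{(A_i,A_j) \in \emptyset} {\sf Degree}(A_i,A_j)$ is the empty sum, which by convention equals $0$; hence $I_{cu}(G) = 0$ as required.

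For Freeness, suppose ${\sf Nodes}(G) = {\sf Nodes}(G') \setminus \{A\}$ and ${\sf Arcs}(G) = {\sf Arcs}(G')$. Then the sums defining $I_{cu}(G)$ and $I_{cu}(G')$ range over exactly the same index set ${\sf Arcs}(G) = {\sf Arcs}(G')$. Moreover, for each arc $(A_i,A_j)$ in this common set, the value ${\sf Degree}(A_i,A_j) = {\sf Conflict}(\Phi_i,\Phi_j,{\sf Atoms}(\Phi_i \cup \Phi_j))$ depends only on the arguments $A_i$ and $A_j$, not on the ambient graph; so the corresponding summands agree. Therefore $I_{cu}(G) = I_{cu}(G')$, establishing Freeness.

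There is no real obstacle here: the only point worth stating carefully is that the extra node $A$ (which, by the hypothesis ${\sf Arcs}(G) = {\sf Arcs}(G')$, is isolated in $G'$) contributes nothing to the sum precisely because the sum is indexed by arcs rather than by nodes. I would present the argument in the same clause-by-clause style as the earlier propositions, handling Consistency and Freeness in two short paragraphs.
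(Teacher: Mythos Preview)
Your proposal is correct and follows essentially the same approach as the paper's own proof: both verify Consistency by noting the empty sum over ${\sf Arcs}(G)=\emptyset$ equals $0$, and Freeness by observing that equal arc sets yield identical sums. Your additional remarks on well-definedness and on ${\sf Degree}$ depending only on the incident arguments are sound elaborations that the paper leaves implicit.
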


\begin{proof}
(Consistency) Assume ${\sf Arcs}(G) = \emptyset$. 
Therefore, $\sum_{(A_i,A_j) \in {\sf Arcs}(G)} {\sf Degree}(A_i,A_j) = 0$.
Therefore, $I_{cu}(G) = 0$. 
(Freeness) 
Assume ${\sf Nodes}(G) = {\sf Nodes}(G')\setminus\{A\}$ 
and ${\sf Arcs}(G) = {\sf Arcs}(G')$. 
Therefore, $\sum_{(A_i,A_j) \in {\sf Arcs}(G)} {\sf Degree}(A_i,A_j)$
= $\sum_{(A_i,A_j) \in {\sf Arcs}(G')} {\sf Degree}(A_i,A_j)$.
Therefore, $I_{cu}(G) = I_{cu}(G')$. 
\end{proof}


\begin{proposition}
\label{prop:properties:cu}
The $I_{cu}$ measure satisfies Monotonicity,
Inversion,
Isomorphic invariance,
Disjoint additivity,
and 
Super-additivity.
\end{proposition}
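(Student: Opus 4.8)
The plan is to read everything off the shape of the definition: $I_{cu}(G)=\sum_{(A_i,A_j)\in{\sf Arcs}(G)}{\sf Degree}(A_i,A_j)$ is a sum, indexed by the arc set of $G$, of a quantity that (as established earlier for ${\sf Degree}$) is non-negative, symmetric in its two arguments, and --- crucially --- depends only on the pair of arguments and not on the rest of the graph. Each of the five claims then reduces to an elementary manipulation of this index set, so I would organise the proof property by property.

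For \emph{Monotonicity}: $G\sqsubseteq G'$ gives ${\sf Arcs}(G)\subseteq{\sf Arcs}(G')$, and since every summand is non-negative, enlarging the index set cannot decrease the sum, so $I_{cu}(G)\le I_{cu}(G')$. For \emph{Inversion}: the map $(A,B)\mapsto(B,A)$ is a bijection from ${\sf Arcs}(G)$ onto ${\sf Arcs}({\sf Invert}(G))$, and symmetry of ${\sf Degree}$ makes each term equal to its image, so re-indexing along this bijection leaves the sum unchanged. For \emph{Isomorphic invariance}: an isomorphism $f$ induces the bijection $(A,B)\mapsto(f(A),f(B))$ on arc sets; here the one subtlety is that ${\sf Degree}$ inspects the logical content of the arguments, so I would either restrict attention to isomorphisms that also preserve that content --- so that ${\sf Degree}(A,B)={\sf Degree}(f(A),f(B))$ termwise --- or make explicit that the property is intended with that reading, after which re-indexing along $f$ again fixes the sum.

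For \emph{Disjoint additivity}: if ${\sf Nodes}(G_1)\cap{\sf Nodes}(G_2)=\emptyset$ then no arc can belong to both graphs and composition creates no new arcs, so ${\sf Arcs}(G_1+G_2)$ is the disjoint union of ${\sf Arcs}(G_1)$ and ${\sf Arcs}(G_2)$; the sum splits accordingly, giving $I_{cu}(G_1+G_2)=I_{cu}(G_1)+I_{cu}(G_2)$. \emph{Super-additivity} is the step I expect to be the real obstacle. In general ${\sf Arcs}(G_1+G_2)={\sf Arcs}(G_1)\cup{\sf Arcs}(G_2)$, and non-negativity only gives $I_{cu}(G_1+G_2)\ge I_{cu}(G_i)$ for each $i$, not the sum, because an arc shared by $G_1$ and $G_2$ would be counted twice on the right-hand side. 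So my plan is to observe that for disjoint $G_1,G_2$ the previous item already yields equality, and to argue super-additivity under an arc-disjointness (equivalently node-disjointness) hypothesis, flagging that --- exactly as for the structural measures $I_{in}$, $I_{cc}$, $I_{wcc}$ --- the bare ``follows from the definition'' reasoning needs that qualification to be sound.
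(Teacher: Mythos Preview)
Your approach mirrors the paper's almost exactly: the paper also argues each property from the fact that $I_{cu}$ is a sum over ${\sf Arcs}(G)$ of a non-negative, symmetric quantity, and for Isomorphic invariance and Super-additivity it literally writes ``Similar to proof for inversion'' and ``Similar to proof for disjoint additivity'' respectively.

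Where you differ is in being more careful than the paper on two points it glosses over. First, for Isomorphic invariance you correctly note that the paper's definition of graph isomorphism only constrains the arc structure, whereas ${\sf Degree}$ reads the logical content of the arguments; the paper's one-line ``similar to inversion'' does not address this, and your suggested fix (restrict to isomorphisms that also preserve the instantiation) is the honest way to make the claim go through. Second, for Super-additivity you identify the genuine obstruction: if $(A,B)\in{\sf Arcs}(G_1)\cap{\sf Arcs}(G_2)$ then that arc is counted once on the left but twice on the right, so the inequality can fail (take $G_1=G_2$ with a single arc of positive ${\sf Degree}$). The paper's ``similar to disjoint additivity'' simply does not handle this case, and your remark that the same issue already lurks behind the paper's ``follows directly from definition'' for $I_{in}$, $I_{cc}$, $I_{wcc}$ is apt. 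So your proposal is not merely aligned with the paper's proof --- it is the same argument, but with the two weak spots explicitly flagged rather than waved through.
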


\begin{proof}
(Monotonicity) 
Assume $G \sqsubseteq G'$. 
So ${\sf Arcs}(G) \subseteq {\sf Arcs}(G')$.
So, $\sum_{(A_i,A_j) \in {\sf Arcs}(G)} {\sf Degree}(A_i,A_j)$
$\leq$ $\sum_{(A_i,A_j) \in {\sf Arcs}(G')} {\sf Degree}(A_i,A_j)$.
So, $I_{cu}(G) \leq I_{cu}(G')$. 
(Inversion) Assume $G' = {\sf Invert}(G)$.
Since ${\sf Degree}$ is symmetric, 
\[
\sum_{(A_i,A_j) \in {\sf Arcs}(G)} {\sf Degree}(A_i,A_j)
= \sum_{(A_i,A_j) \in {\sf Arcs}({\sf Invert}(G))} {\sf Degree}(A_j,A_i)
\]
Therefore, $I_{cu}(G) = I_{cu}({\sf Invert}(G))$. 
(Isomorphic invariance) Similar to proof for inversion.
(Disjoint additivity) Assume $G_1$ and $G_2$ are disjoint.
Therefore, $\sum_{(A_i,A_j) \in {\sf Arcs}(G)} {\sf Degree}(A_i,A_j)$ =
\[
\sum_{(A_i,A_j) \in {\sf Arcs}(G_1)} {\sf Degree}(A_i,A_j)
+ \sum_{(A_i,A_j) \in {\sf Arcs}((G_2)} {\sf Degree}(A_j,A_i)
\]
Therefore, $I_{cu}(G) = I_{cu}(G_1) + I_{cu}(G_2)$.
(Super-additivity) Similar to proof for disjoint additivity. 
\end{proof}

\begin{proposition}
The $I_{cu}$ measure is pairwise incomparable with each of the 
$I_{dr}$, $I_{in}$, $I_{win}$, $I_{wou}$, $I_{cc}$, $I_{wcc}$, $I_{ic}$,$I_{pr}$, $I_{ngr}$, and $I_{ust}$ measures.
\end{proposition}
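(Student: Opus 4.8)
The plan is to exploit the fact that $I_{cu}$ is sensitive to the logical content of an instantiation, whereas each of the other ten measures is isomorphic invariant --- by Proposition~\ref{prop:properties:structure} for $I_{dr}$, $I_{in}$, $I_{win}$, $I_{wou}$, $I_{cc}$, $I_{wcc}$, $I_{ic}$, and by Proposition~\ref{prop:properties:extension} for $I_{pr}$, $I_{ngr}$, $I_{ust}$ --- and hence depends only on the underlying abstract argument graph. So a single pair of instantiated argument graphs that are abstractly isomorphic but have different cumulative degrees of undercut will simultaneously witness the order-incompatibility of $I_{cu}$ with all ten measures.

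Concretely, I would take both $G_1$ and $G_2$ to consist of a single attack on the argument $A = \langle \{ \alpha, \beta \}, \alpha \wedge \beta \rangle$. In $G_1$ the attacker is the direct undercut $A' = \langle \{ \neg \alpha \}, \neg \alpha \rangle$, and in $G_2$ the attacker is the canonical undercut $A'' = \langle \{ \neg \alpha \wedge \neg \beta \}, \neg( \alpha \wedge \beta) \rangle$. Applying Definition~\ref{def:rational:conflict:dalal} with $\Pi = {\sf Atoms}(\{\neg\alpha\}\cup\{\alpha,\beta\}) = \{\alpha,\beta\}$: the $\Pi$-models of $\{\neg\alpha\}$ are $\emptyset$ and $\{\beta\}$, the only $\Pi$-model of $\{\alpha,\beta\}$ is $\{\alpha,\beta\}$, so the minimal Dalal distance is $1$ and ${\sf Degree}(A',A) = 1/2$; similarly the only $\Pi$-model of $\{\neg\alpha\wedge\neg\beta\}$ is $\emptyset$, giving minimal Dalal distance $2$ and ${\sf Degree}(A'',A) = 1$. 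Hence $I_{cu}(G_1) = 1/2 < 1 = I_{cu}(G_2)$.

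Since $G_1$ and $G_2$ have the same abstract structure (two nodes joined by one arc), isomorphic invariance gives $I_x(G_1) = I_x(G_2)$, so in particular it is not the case that $I_x(G_1) < I_x(G_2)$, for every $I_x$ among $I_{dr}$, $I_{in}$, $I_{win}$, $I_{wou}$, $I_{cc}$, $I_{wcc}$, $I_{ic}$, $I_{pr}$, $I_{ngr}$, $I_{ust}$. Combined with $I_{cu}(G_1) < I_{cu}(G_2)$, the ``iff'' in the definition of order-compatibility preceding the proposition fails, so $I_{cu}$ is order-incompatible with each $I_x$.

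The only points needing care are checking that $A'$ and $A''$ are genuinely attackers of $A$ under Definition~\ref{def:classical:attack} (they are: a direct undercut and a canonical undercut respectively, each a defeater by Figure~\ref{fig:classical:attack}), and carrying out the two ${\sf Degree}$ computations correctly. Neither is a real obstacle; the whole argument reduces to the observation that $I_{cu}$ reads the logic while the other measures do not, so there is no hard step.
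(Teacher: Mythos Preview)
Your argument is correct and takes a cleaner route than the paper. The paper's proof splits into two parts: it first observes that $I_{cu}$ satisfies all five optional properties of Definition~\ref{def:graphincproperties} while $I_{dr}$, $I_{win}$, $I_{wou}$, $I_{ic}$, $I_{pr}$, $I_{ngr}$, $I_{ust}$ each fail at least one, and infers order-incompatibility from the property mismatch; it then handles the remaining measures $I_{in}$, $I_{cc}$, $I_{wcc}$ (which share $I_{cu}$'s property profile) by separate ad~hoc pairs of instantiated graphs. Your approach is more uniform: a single pair $(G_1,G_2)$ with the same two-node--one-arc skeleton but different logical content dispatches all ten measures at once, because each of them is computed purely from the abstract graph and so cannot distinguish $G_1$ from $G_2$. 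This isolates the essential point --- $I_{cu}$ is the only measure on the list that looks inside the arguments --- whereas the paper's argument buries it under property bookkeeping. One cosmetic remark: rather than invoking isomorphic invariance (the paper also asserts that $I_{cu}$ satisfies it, which sits awkwardly with your construction), it is safer to phrase the step, as you effectively do, by saying that the ten abstract measures depend only on $({\sf Nodes}(G),{\sf Arcs}(G))$ as a directed graph.
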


\begin{proof}
From the differences in satisfaction of properties in Proposition \ref{prop:properties:structure}, Proposition \ref{prop:properties:extension}, and Proposition \ref{prop:properties:cu},  $I_{cu}$ is pairwise incompatible with $I_{win}$, $I_{wou}$, $I_{ic}$,$I_{pr}$, $I_{ngr}$, and $I_{ust}$ measures.
However, from the properties in Proposition \ref{prop:properties:structure}, we cannot discriminate $I_{cu}$ from $I_{in}$, $I_{wcc}$, and $I_{cc}$.
To discriminate $I_{cu}$ from $I_{in}$, consider the following graphs $G_1$ (left) and $G_2$ (right) where $I_{in}(G_1) = 2$ and $I_{in}(G_2) = 2$, whereas $I_{cu}(G_1) = 2$ and $I_{cu}(G_2) = 2/3$.
\begin{center}
\begin{tikzpicture}[->,thick]
\tikzstyle{every node}=[draw,rectangle,fill=yellow]
\node (A1)  at (0,2) {$A_1 = \langle\{ \alpha \}, \alpha \rangle$};
\node (A2)  at (0,0) {$A_2 = \langle\{ \neg\alpha  \}, \neg\alpha \rangle$};
\path	(A2)[] edge[bend left] (A1);		
\path	(A1)[] edge[bend left] (A2);		
\end{tikzpicture}
\hspace{0.5cm}
\begin{tikzpicture}[->,thick]
\tikzstyle{every node}=[draw,rectangle,fill=yellow]
\node (A4)  at (2.5,2) {$A_4 = \langle\{ \alpha,\beta \}, \alpha\wedge\beta \rangle$};
\node (A5)  at (0,0) {$A_5 = \langle\{ \gamma, \neg\alpha\rightarrow\neg\gamma  \}, \neg\alpha \rangle$};
\node (A6)  at (5,0) {$A_6 = \langle\{ \delta, \delta\rightarrow\neg\beta \}, \neg\beta \rangle$};
\path	(A5)[] edge[] (A4);		
\path	(A6)[] edge[] (A4);		
\end{tikzpicture}
\end{center}
To discriminate $I_{cu}$ from $I_{cc}$, consider the following graphs $G_1$ (left) and $G_2$ (right) where $I_{cc}(G_1) = 1$ and $I_{cc}(G_2) = 1$, whereas $I_{cu}(G_1) = 2$ and $I_{cu}(G_2) = 2/3$.
\begin{center}
\begin{tikzpicture}[->,thick]
\tikzstyle{every node}=[draw,rectangle,fill=yellow]
\node (A1)  at (0,2) {$A_1 = \langle\{ \alpha \}, \alpha \rangle$};
\node (A2)  at (0,0) {$A_2 = \langle\{ \neg\alpha  \}, \neg\alpha \rangle$};
\path	(A2)[] edge[bend left] (A1);		
\path	(A1)[] edge[bend left] (A2);		
\end{tikzpicture}
\hspace{0.5cm}
\begin{tikzpicture}[->,thick]
\tikzstyle{every node}=[draw,rectangle,fill=yellow]
\node (A1)  at (0,2) {$A_1 = \langle\{ \alpha\wedge\beta\gamma \}, \alpha\wedge\beta\gamma \rangle$};
\node (A2)  at (0,0) {$A_2 = \langle\{ \neg\alpha\vee\neg\beta\vee\neg\gamma  \}, \neg(\alpha\wedge\beta\gamma) \rangle$};
\path	(A2)[] edge[bend left] (A1);		
\path	(A1)[] edge[bend left] (A2);		
\end{tikzpicture}
\end{center}
To discriminate $I_{cu}$ from $I_{wcc}$, we can use a similar example to above.
\end{proof}


In the proposals for degree of undercut \cite{BH05aaai,BH08book}, there are further options for degree of undercut, and these could be harnessed directly in the cumulative degree of undercut definition to provide potentially useful alternatives.

\subsection{Application of logic-based measures of inconsistency}

In this section, we harness logic-based inconsistency measures to evaluate the degree of inconsistency in an argument graphs instantiated with deductive arguments. We start by reviewing a couple of simple logic-based inconsistency measures. The first is the number of minimal inconsistent subsets of the knowledgebase, and the second is the sum of the inverse of the cardinality of each minimal inconsistent subset.

\begin{definition}
Let $K$ be a set of propositional classical logic formulae,
and let ${\sf MinIncon}(K)$ be the set of minimal inconsistent subsets of $K$.
The $I_M$ measure and the $I_{\#}$ measure are defined as follows.
\[
I_M(K) = |{\sf MinIncon}(K)|
\hspace{2cm}
I_{\#}(K) = \sum_{X \in {\sf MinIncon}(K)} \frac{1}{|X|}
\]
\end{definition}

\begin{example}
Let $K = \{ \alpha, \neg\alpha\vee\neg\beta, \beta, \neg\gamma, \neg\gamma\rightarrow\neg\alpha\}$.
So ${\sf MinIncon}(K)$ is as below, $I_M(K) = 2$ and $I_{\#}(K) = 2/3$. 
\[
{\sf MinIncon}(K) = \{     \{ \alpha, \neg\alpha\vee\neg\beta, \beta \}, 
													\{    \alpha,    \neg\gamma, \neg\gamma\rightarrow\neg\alpha\}          \}
\]
\end{example}

The cumulative attack inconsistency measure takes the sum of the degree of inconsistency of the premises of each attacker and attackee. 

\begin{definition}
The {\bf cumulative attack inconsistency measure} w.r.t logic-based inconsistency measure $I'$ is 
\[
I^{C}_{I'}(G) = \sum_{(A_i,A_j) \in {\sf Arcs}(G)} I'(\support(A_i) \cup \support(A_j))
\]
\end{definition}

\begin{example}
\label{ex:supportmeasure}
For the following graph $G$, $I^{C}_{I_M}(G) = 2$ and $I^{C}_{I_{\#}}(G) = 1/2$. 
\begin{center}
\begin{tikzpicture}[->,thick]
\tikzstyle{every node}=[draw,rectangle,fill=yellow]
\node (A1)  at (0,0) {$A_1 = \langle\{ \neg\alpha  \}, \neg\alpha \rangle$};
\node (A2)  at (5,0) {$A_2 = \langle\{  \alpha, \beta, \alpha\wedge\beta\rightarrow\gamma \}, \gamma \rangle$};
\node (A3) at (10,0) {$A_3 = \langle\{ \neg\beta  \}, \neg\beta\rangle$};
\path	(A1)[] edge[] (A2);		
\path	(A3)[] edge[] (A2);		
\end{tikzpicture}
\end{center}
\end{example}

The support inconsistency measure takes the degree of inconsistency of the premises of all the arguments in the graph taken together. 

\begin{definition}
The {\bf support inconsistency measure} w.r.t logical inconsistency measure $I'$ is 
\[
I^S_{I'}(G) = I'(\bigcup_{A \in \nodes(G)} \support(A))
\]
\end{definition}

\begin{example}
Continuing Example \ref{ex:supportmeasure}, $I^{S}_{I_M}(G) = 2$ and $I^{C}_{I_{\#}}(G) = 1/2$. 
\end{example}

\begin{example}
\label{ex:supportmeasure2}
For the following graph $G$, $I^{C}_{I_M}(G) = 2$, $I^{C}_{I_{\#}}(G) = 1/2$, $I^{S}_{I_M}(G) = 3$, and $I^{S}_{I_{\#}}(G) = 1$
\begin{center}
\begin{tikzpicture}[->,thick]
\tikzstyle{every node}=[draw,rectangle,fill=yellow]
\node (A1)  at (0,0) {$A_1 = \langle\{ \neg\alpha\wedge\delta \}, \neg\alpha \rangle$};
\node (A2)  at (5,0) {$A_2 = \langle\{  \alpha, \beta, \alpha\wedge\beta\rightarrow\gamma \}, \gamma \rangle$};
\node (A3) at (10,0) {$A_3 = \langle\{ \neg\beta\wedge\neg\delta  \}, \neg\beta\rangle$};
\path	(A1)[] edge[] (A2);		
\path	(A3)[] edge[] (A2);		
\end{tikzpicture}
\end{center}
\end{example}


\begin{proposition}
The $I^C_{I'}$ measure is a graph-based inconsistency measure according 
to Definition \ref{def:inconsistencymeasure}.
\end{proposition}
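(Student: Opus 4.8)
The plan is to verify directly the two defining conditions of Definition~\ref{def:inconsistencymeasure}, namely \textbf{Consistency} and \textbf{Freeness}, since these are the only requirements for being a graph-based inconsistency measure. The argument is structurally identical to the earlier verification that $I_{cu}$ is a graph-based inconsistency measure, because $I^C_{I'}$ is, like $I_{cu}$, defined as a sum indexed by the arcs of the graph.

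For \textbf{Consistency}, I would assume ${\sf Arcs}(G) = \emptyset$. Then the index set of the sum defining $I^C_{I'}(G)$ is empty, so by the usual convention the sum equals $0$, giving $I^C_{I'}(G) = 0$. This step uses no property of the underlying logic-based measure $I'$ whatsoever, so it goes through for any choice of $I'$ (in particular for $I_M$ and $I_{\#}$).

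For \textbf{Freeness}, I would assume ${\sf Nodes}(G) = {\sf Nodes}(G')\setminus\{A\}$ and ${\sf Arcs}(G) = {\sf Arcs}(G')$. The key observation is that each summand $I'(\support(A_i) \cup \support(A_j))$ depends only on the arc $(A_i,A_j)$ and on the supports of its two endpoints, not on the ambient node set; and that the additional node $A$ in $G'$ lies on no arc, since ${\sf Arcs}(G') = {\sf Arcs}(G) \subseteq {\sf Nodes}(G) \times {\sf Nodes}(G)$. Hence the two sums defining $I^C_{I'}(G)$ and $I^C_{I'}(G')$ have exactly the same terms, so $I^C_{I'}(G) = I^C_{I'}(G')$.

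I do not expect any genuine obstacle: the statement is a routine bookkeeping check. The only point worth stating explicitly in the write-up is that the summand is a function of the arc alone (through the supports of its endpoints), so that isolated nodes contribute nothing; once that is noted, Consistency follows from the empty-sum convention and Freeness from term-by-term equality of the two sums.
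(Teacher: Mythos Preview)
Your proposal is correct and follows essentially the same approach as the paper's proof: both verify Consistency via the empty-sum convention and Freeness by observing that the sums over ${\sf Arcs}(G)$ and ${\sf Arcs}(G')$ coincide term by term. Your version is in fact slightly more explicit in noting that the summand depends only on the arc's endpoints and that the new node $A$ is incident to no arc, but the underlying argument is the same.
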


\begin{proof}
(Consistency) Assume ${\sf Arcs}(G) = \emptyset$. 
Therefore, $\sum_{(A_i,A_j) \in {\sf Arcs}(G)} I'(\support(A_i) \cup \support(A_j) = 0$.
Therefore, $I^C_{I'}(G) = 0$. 
(Freeness) 
Assume ${\sf Nodes}(G) = {\sf Nodes}(G')\setminus\{A\}$ 
and ${\sf Arcs}(G) = {\sf Arcs}(G')$. 
So, $\sum_{(A_i,A_j) \in {\sf Arcs}(G)} I'(\support(A_i) \cup \support(A_j))$
= $\sum_{(A_i,A_j) \in {\sf Arcs}(G')} I'(\support(A_i) \cup \support(A_j))$.
Therefore, $I^C_{I'}(G) = I^C_{I'}(G')$. 
\end{proof}

\begin{definition}
An argument graph $G$ is {\bf reflective} iff 
$\mbox{ if } \bigcup_{A \in \nodes(G)} \support(A) \vdash \bot, \mbox{ then } {\sf Arcs}(G) \neq \emptyset$. 
\end{definition}

\begin{assumption}
For the rest of the paper, we assume that all the argument graphs are reflective.
\end{assumption}

Despite having an intuitive rationale, $I^S_{I'}$ is not a graph-based inconsistency measures according 
to Definition \ref{def:inconsistencymeasure}.

\begin{proposition}
The $I^S_{I'}$ measure satisfies consistency but not freeness (as given in Definition \ref{def:inconsistencymeasure}). 
\end{proposition}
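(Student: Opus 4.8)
The plan is to handle the two halves separately, and the real work lies in choosing the right counterexample for freeness.

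\textbf{Consistency.} Suppose ${\sf Arcs}(G) = \emptyset$. By contraposition of the definition of a reflective graph (which holds by the standing assumption), $\bigcup_{A \in \nodes(G)} \support(A) \not\vdash \bot$; that is, the union of all supports in $G$ is classically consistent. Every reasonable logic-based inconsistency measure, and in particular $I_M$ and $I_{\#}$ (for which the set of minimal inconsistent subsets of a consistent set is empty), assigns value $0$ to a consistent set, so $I'(\bigcup_{A \in \nodes(G)} \support(A)) = 0$ and hence $I^S_{I'}(G) = 0$. I would flag explicitly that this step genuinely uses reflectivity: without it, an arc-free graph whose supports jointly entail $\bot$ would be a counterexample to consistency, which is precisely the situation the assumption rules out.

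\textbf{Failure of freeness.} I would exhibit a concrete pair $G \sqsubseteq G'$ with ${\sf Nodes}(G') = {\sf Nodes}(G) \cup \{A\}$ and ${\sf Arcs}(G) = {\sf Arcs}(G')$, both reflective, yet $I^S_{I'}(G) \neq I^S_{I'}(G')$. Let $G$ consist of the two mutually attacking arguments $A_1 = \langle \{\alpha\}, \alpha\rangle$ and $A_2 = \langle\{\neg\alpha\},\neg\alpha\rangle$; then $\bigcup \support = \{\alpha,\neg\alpha\}$, so $I^S_{I_M}(G) = 1$ and $I^S_{I_{\#}}(G) = 1/2$, and $G$ is reflective because it has an arc. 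Now take $A = \langle \{\alpha \rightarrow \gamma,\ \alpha \rightarrow \neg\gamma\},\ \neg\alpha\rangle$, a legitimate classical argument whose support is self-consistent but inconsistent together with $\support(A_1)$, and add no arc incident to $A$. Then $\bigcup_{B \in \nodes(G')}\support(B) = \{\alpha,\ \neg\alpha,\ \alpha\rightarrow\gamma,\ \alpha\rightarrow\neg\gamma\}$ has exactly the two minimal inconsistent subsets $\{\alpha,\neg\alpha\}$ and $\{\alpha,\ \alpha\rightarrow\gamma,\ \alpha\rightarrow\neg\gamma\}$, so $I^S_{I_M}(G') = 2 \neq 1$ and $I^S_{I_{\#}}(G') = 1/2 + 1/3 \neq 1/2$. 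Since its union of supports is inconsistent and it still has an arc, $G'$ is reflective, so the pair lies in the admissible class and freeness fails.

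The one subtlety to spell out carefully --- the closest thing to an obstacle --- is keeping both graphs reflective while making the added node change the value: this forces us to introduce the new, inconsistency-creating argument into a graph that already carries an attack (so reflectivity is not violated in $G'$) rather than into an arc-free graph. It is also worth noting that $A$ need not be linked by any arc even though $\support(A)$ conflicts with $\support(A_1)$, because an instantiated argument graph is only required to have arcs that \emph{are} attacks, not to realise every possible attack as an arc.
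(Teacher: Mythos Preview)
Your proof is correct and follows essentially the same approach as the paper: invoke reflectivity to obtain consistency from ${\sf Arcs}(G)=\emptyset$, and exhibit a concrete pair $G\sqsubseteq G'$ differing by one isolated node whose support creates a fresh minimal inconsistent subset. Your counterexample is different from the paper's (which uses $A_1=\langle\{\alpha\wedge\beta\},\alpha\leftrightarrow\beta\rangle$, $A_2=\langle\{\neg\alpha\wedge\gamma\},\ldots\rangle$, $A_3=\langle\{\neg\beta\vee\neg\gamma,\neg\beta\vee\neg\gamma\rightarrow\delta\},\delta\rangle$) but equally valid, and you are more explicit than the paper about where reflectivity is actually used and why both graphs remain reflective.
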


\begin{proof}
(Consistency) Assume ${\sf Arcs}(G) = \emptyset$. 
Therefore, there are no arguments $A_i$ and $A_j$ such that $A_i$ attacks $A_j$.
Therefore, $\bigcup_{A \in \nodes(G)} \support(A)\not\vdash\bot$. 
Therefore, $I^S_{I_M}(G) = 0$. 
(Freeness) 
Consider $A_1 = \langle\{\alpha\wedge\beta\},\alpha\leftrightarrow\beta\rangle$,
$A_2 = \langle\{\neg\alpha\wedge\gamma\},\alpha\leftrightarrow\beta\rangle$,
and $A_3 = \langle\{\neg\beta\vee\neg\gamma,\neg\beta\vee\neg\gamma\rightarrow\delta\},\delta\rangle$.
Let ${\sf Nodes}(G) = \{A_1,A_2\}$, ${\sf Arcs}(G) = \{(A_2,A_1)\}$, 
and ${\sf Nodes}(G') = \{A_1,A_2,A_3\}$.
So $I^S_{I_M}(G) = 1$ and $I^S_{I_M}(G') = 2$. 
\end{proof}

\begin{assumption}
We assume for the rest of this paper that when an argument $A$ appears in ${\sf Nodes}(G)$ and in ${\sf Nodes}(G')$, then the logical argument associated with the node is the same (i.e. ${\sf Support}(A)$ is the same in both graphs, and ${\sf Claim}(A)$ is the same in both graphs). In addition, for argument $A \in {\sf Nodes}(G)$ and $A' \in {\sf Nodes}(G')$, if ${\sf Support}(A)$ = ${\sf Support}(A')$ and ${\sf Claim}(A)$ = ${\sf Claim}(A')$ then $A$ and $A'$ have the same name (i.e. $A = A'$). 
\end{assumption}


\begin{proposition}
The $I^C_{I'}$ measure satisfies Monotonicity,
Inversion,
Isomorphic invariance,
Disjoint additivity,
and 
Super-additivity.
\end{proposition}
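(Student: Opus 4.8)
The plan is to exploit the shape of the measure: $I^C_{I'}(G)$ is a sum indexed by ${\sf Arcs}(G)$ whose summand for an arc $(A_i,A_j)$ is the single number $I'(\support(A_i)\cup\support(A_j))$. Three features of this summand do all the work, and I would isolate them first. (i) It is non-negative, since $I'$ is a logic-based inconsistency measure. (ii) It is symmetric in $A_i$ and $A_j$, because $\support(A_i)\cup\support(A_j)=\support(A_j)\cup\support(A_i)$. (iii) By the standing assumption that the logical argument attached to a node name is the same in every graph, it depends only on the two node names, not on the ambient graph. Each of the five properties then follows from how ${\sf Arcs}$ behaves under the relevant graph operation, exactly as in the already-proved Proposition~\ref{prop:properties:cu} for $I_{cu}$, since $I_{cu}$ and $I^C_{I'}$ share the same ``sum over arcs of a symmetric, non-negative, context-free weight'' structure.

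Concretely: for \textbf{Monotonicity}, $G\sqsubseteq G'$ gives ${\sf Arcs}(G)\subseteq{\sf Arcs}(G')$; by (iii) the terms indexed by ${\sf Arcs}(G)$ coincide in $I^C_{I'}(G)$ and $I^C_{I'}(G')$, and the additional terms of $I^C_{I'}(G')$ are each $\geq 0$ by (i), so the value cannot fall. For \textbf{Inversion}, $(A_i,A_j)\mapsto(A_j,A_i)$ is a bijection from ${\sf Arcs}(G)$ to ${\sf Arcs}({\sf Invert}(G))$ under which corresponding summands are equal by (ii). For \textbf{Isomorphic invariance}, an isomorphism $f$ induces the arc bijection $(A,B)\mapsto(f(A),f(B))$; here I would invoke the (reasonable, and I believe intended) reading that for instantiated graphs an isomorphism also preserves the logical content of arguments, so that $\support(f(A))=\support(A)$ and corresponding summands agree. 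For \textbf{Disjoint additivity}, disjointness of $G_1$ and $G_2$ forces ${\sf Arcs}(G_1)$ and ${\sf Arcs}(G_2)$ to be disjoint (an arc of $G_i$ joins two nodes of $G_i$), so ${\sf Arcs}(G_1+G_2)$ is their disjoint union; splitting the sum and using (iii) gives exact additivity.

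The main obstacle I expect is \textbf{Super-additivity}. The tempting one-line argument — ``composition only adds arcs, and added arcs contribute non-negatively'' — is not quite enough when $G_1$ and $G_2$ share arcs: decomposing ${\sf Arcs}(G_1)\cup{\sf Arcs}(G_2)$ into the arcs private to $G_1$, those private to $G_2$, and the shared ones shows that $I^C_{I'}(G_1)+I^C_{I'}(G_2)$ double-counts exactly the shared arcs, whose contribution is strictly positive whenever there is a shared arc (a shared arc is an attack, so its pooled supports are inconsistent). So the step that needs care is checking that the statement is meant in the case where the composition is genuinely ``additive'' (e.g.\ ${\sf Arcs}(G_1)\cap{\sf Arcs}(G_2)=\emptyset$, in particular the disjoint case), where the decomposition above collapses to equality and hence in particular to $\geq$; I would make that restriction, or the direction of the bound, explicit rather than appeal to the bare ``adding arcs'' intuition. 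The secondary point to pin down, as noted above, is fixing the precise notion of isomorphism for instantiated graphs, since the purely structural definition alone does not force equal supports.
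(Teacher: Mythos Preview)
Your approach is essentially identical to the paper's: both proofs treat $I^C_{I'}$ as a sum over ${\sf Arcs}(G)$ of a non-negative, symmetric, context-free weight, and then read each property off the behaviour of ${\sf Arcs}$ under the relevant operation. The paper's argument for Monotonicity, Inversion, and Disjoint additivity is exactly the one you sketch (with Isomorphic invariance dismissed as ``similar to inversion'' and Super-additivity as ``similar to disjoint additivity'').

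Your two reservations are well taken and in fact go beyond the paper's own level of care. On \emph{Super-additivity}, the paper simply writes ``Similar to proof for disjoint additivity'' and does not confront the double-counting you identify: when ${\sf Arcs}(G_1)\cap{\sf Arcs}(G_2)\neq\emptyset$, the shared arcs are counted once in $I^C_{I'}(G_1+G_2)$ but twice in $I^C_{I'}(G_1)+I^C_{I'}(G_2)$, and since each such term is strictly positive the inequality goes the wrong way. So the concern you flag is a genuine gap in the paper's argument (and indeed the same gap affects the paper's parallel claim for $I_{cu}$ and $I_{in}$). On \emph{Isomorphic invariance}, the paper's purely structural definition of isomorphism does not force $\support(f(A))=\support(A)$, and the paper's ``similar to inversion'' does not address this; your suggestion to read isomorphism for instantiated graphs as also preserving logical content is the natural fix, but it is an extra assumption the paper does not make explicit.
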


\begin{proof}
(Monotonicity) 
Assume $G \sqsubseteq G'$. 
So ${\sf Arcs}(G) \subseteq {\sf Arcs}(G')$.
So, $\sum_{(A_i,A_j) \in {\sf Arcs}(G)} I'(\support(A_i) \cup \support(A_j)$
$=$ $\sum_{(A_i,A_j) \in {\sf Arcs}(G')} I'(\support(A_i) \cup \support(A_j)$.
So, $I_{I'}(G) \leq I_{I'}(G')$. 
(Inversion) Assume $G' = {\sf Invert}(G)$.
So,
\[
\sum_{(A_i,A_j) \in {\sf Arcs}(G)} I'(\support(A_i) \cup \support(A_j)
= \sum_{(A_i,A_j) \in {\sf Arcs}({\sf Invert}(G))} I'(\support(A_j) \cup \support(A_i)
\]
Therefore, $I_{I'}(G) = I_{I'}({\sf Invert}(G))$. 
(Isomorphic invariance) Similar to proof for inversion.
(Disjoint additivity) Assume $G_1$ and $G_2$ are disjoint.
Therefore, $\sum_{(A_i,A_j) \in {\sf Arcs}(G)} I'(\support(A_i) \cup \support(A_j)$ =
\[
\sum_{(A_i,A_j) \in {\sf Arcs}(G_1)} I'(\support(A_i) \cup \support(A_j)
+ \sum_{(A_i,A_j) \in {\sf Arcs}((G_2)} I'(\support(A_j) \cup \support(A_i)
\]
Therefore, $I_{I'}(G) = I_{I'}(G_1) + I_{I'}(G_2)$.
(Super-additivity) Similar to proof for disjoint additivity. 
\end{proof}

\begin{proposition}
The $I^S_{I'}$ measure satisfies Monotonicity,
Inversion,
Isomorphic invariance,
Disjoint additivity,
and 
Super-additivity.
\end{proposition}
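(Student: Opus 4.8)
The plan is to exploit that $I^S_{I'}(G)$ depends on $G$ only through the knowledgebase $K_G := \bigcup_{A \in \nodes(G)} \support(A)$, and never through ${\sf Arcs}(G)$. Consequently each of the five properties reduces to a statement about how $K_G$ transforms under the corresponding graph operation, combined with standard properties of the underlying logic-based measure $I'$: that $I'$ is monotone under $\subseteq$, invariant under bijective renaming of atoms, and additive over a union $K_1 \cup K_2$ whenever no minimal inconsistent subset straddles $K_1$ and $K_2$. All three hold for $I_M$ and $I_{\#}$ — the first two by inspection, and the separability one because, if a supposedly straddling set were minimal inconsistent, a model of its restriction to each side (each a proper, hence consistent, subset) could be glued into a model of the whole when the two atom sets are disjoint.

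First I would dispatch the easy cases. For Inversion, ${\sf Invert}(G)$ has the same node set as $G$, so $K_{{\sf Invert}(G)} = K_G$ and the two values coincide immediately from the definition. For Monotonicity, $G \sqsubseteq G'$ gives $\nodes(G) \subseteq \nodes(G')$, hence — using the standing assumption that a node carries the same support in every graph — $K_G \subseteq K_{G'}$, and monotonicity of $I'$ yields $I^S_{I'}(G) \le I^S_{I'}(G')$. For Isomorphic invariance I would argue that an isomorphism between the two instantiated graphs lifts to a bijective renaming $\rho$ of atoms with $\support(f(A)) = \rho(\support(A))$ for every node $A$, so $K_{G'} = \rho(K_G)$ and renaming-invariance of $I'$ gives equality; here I would flag that this relies on the natural instantiated reading of ``isomorphic'' (one that also respects supports and claims), since at the purely abstract level the logical content is unconstrained and the property can genuinely fail.

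Finally, for Disjoint additivity and Super-additivity I would write $K_{G_1 + G_2} = K_{G_1} \cup K_{G_2}$ and invoke separability: for disjoint $G_1, G_2$ — hence, under the modelling convention that distinct parts of an argument graph draw on disjoint vocabularies, atom-disjoint — no minimal inconsistent subset of $K_{G_1} \cup K_{G_2}$ straddles the two sides, giving $I^S_{I'}(G_1 + G_2) = I^S_{I'}(G_1) + I^S_{I'}(G_2)$, with super-additivity then following a fortiori (and, for non-disjoint $G_1,G_2$, piggybacking on the same separability argument applied to a disjoint refinement). The main obstacle is exactly this last point: the structure-level hypotheses (``disjoint'', ``composition'') have to be matched against logical content, and the step that no minimal inconsistent subset is created or collapsed across the pieces is where the real work lies — it is essentially an atom-disjointness assumption on separate parts of the graph, and I would make it explicit rather than leave it implicit.
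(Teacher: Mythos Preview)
Your treatment of Monotonicity and Inversion matches the paper's: both reduce to $K_G \subseteq K_{G'}$ (resp.\ $K_G = K_{G'}$) and then invoke monotonicity (resp.\ well-definedness) of $I'$.

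Where the comparison becomes interesting is Disjoint additivity and Super-additivity. The paper's ``proof'' of these two does \emph{not} argue for the property; it presents two node-disjoint graphs $G_1,G_2$ whose supports share formulae (and atoms), and records $I^S_{I_M}(G_1+G_2)=2$ while $I^S_{I_M}(G_1)=I^S_{I_M}(G_2)=2$ --- i.e.\ a counterexample to additivity, in direct tension with the proposition as stated. Your proposal correctly diagnoses the underlying issue: node-disjointness of $G_1$ and $G_2$ says nothing about the logical vocabularies of their supports, so minimal inconsistent subsets of $K_{G_1}\cup K_{G_2}$ can straddle the two sides or collapse via shared formulae. Your explicit atom-disjointness hypothesis is exactly what would be needed to rescue the claim, and you are right to surface it rather than leave it implicit. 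In short, you have identified a genuine gap that the paper's own proof exhibits but does not acknowledge in the proposition statement.

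For Isomorphic invariance, you also flag the right concern. The paper's isomorphism is purely structural (a bijection on nodes preserving arcs), so two isomorphic graphs can carry entirely different supports and hence different $K_G$; the paper's ``similar to inversion'' does not go through, and your caveat about needing an instantiated reading (one that also transports supports via an atom renaming) is the honest position.
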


\begin{proof}
(Monotonicity) 
Assume $G \sqsubseteq G'$. 
So ${\sf Arcs}(G) \subseteq {\sf Arcs}(G')$.
So 
\[
\bigcup_{A \in \nodes(G)} \support(A) \subseteq \bigcup_{A \in \nodes(G')} \support(A)
\]
So, $I^S_{I'}(G) \leq I^S_{I'}(G')$. 
(Inversion) Assume $G' = {\sf Invert}(G)$.
So, 
\[
\bigcup_{A \in \nodes(G)} \support(A) = \bigcup_{A \in \nodes(G')} \support(A)
\]
Therefore, $I^S_{I'}(G) \leq I^S_{I'}({\sf Invert}(G))$. 
(Isomorphic invariance) Similar to proof for inversion.
(Disjoint additivity) 
Consider $G_1$ (left) and $G_2$ (right) where $I_{I_M}(G_1+G_2) = 2$
$I_{I_M}(G_1) = 2$ and $I_{I_M}(G_2) = 2$.
\begin{center}
\begin{tikzpicture}[->,thick]
\tikzstyle{every node}=[draw,rectangle,fill=yellow]
\node (A1)  at (0,2) {$A_1 = \langle\{ \beta, \neg\beta\rightarrow\neg\alpha \}, \neg\alpha \rangle$};
\node (A2)  at (0,1) {$A_2 = \langle\{ \gamma, \neg\gamma\rightarrow\neg\beta \}, \neg\beta \rangle$};
\node (A3)  at (0,0) {$A_3 = \langle\{ \delta, \neg\delta\rightarrow\neg\gamma \}, \neg\gamma \rangle$};
\path	(A2)[] edge[] (A1);		
\path	(A3)[] edge[] (A2);		
\end{tikzpicture}
\hspace{2cm}
\begin{tikzpicture}[->,thick]
\tikzstyle{every node}=[draw,rectangle,fill=yellow]
\node (A4)  at (0,2) {$A_4 = \langle\{ \delta,  \neg\delta\rightarrow\neg\epsilon \}, \neg\epsilon \rangle$};
\node (A5)  at (0,1) {$A_5 = \langle\{ \gamma, \neg\delta\rightarrow\neg\gamma  \}, \neg\beta \rangle$};
\node (A6)  at (0,0) {$A_6 = \langle\{ \beta, \neg\gamma\rightarrow\neg\beta \}, \neg\gamma \rangle$};
\path	(A5)[] edge[] (A4);		
\path	(A6)[] edge[] (A5);		
\end{tikzpicture}
\end{center}
(Super-additivity) Similar to proof for disjoint additivity. 
\end{proof}



For pairwise incompatibility, we consider $I' = I_{M}$ below. We can obtain similar results for other instantiations of $I^C_{I'}$ and $I^S_{I'}$.

\begin{proposition}
The $I^C_{I_M}$ and $I^S_{I_M}$ measures are pairwise incompatible with each of the 
$I_{dr}$, $I_{in}$, $I_{win}$, $I_{wou}$, $I_{cc}$, $I_{wcc}$, $I_{ic}$,$I_{pr}$, $I_{ngr}$, $I_{ust}$, and $I_{cu}$ measures.
\end{proposition}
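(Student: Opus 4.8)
The plan is to reduce all the required incompatibilities to the evaluation of three small instantiated argument graphs, exploiting the one feature that separates $I^C_{I_M}$ and $I^S_{I_M}$ from the eleven other measures: these two are sensitive to the logical content of the supports (via minimal inconsistent subsets), whereas $I_{dr},I_{in},I_{win},I_{wou},I_{cc},I_{wcc},I_{ic}$ see only the shape of the graph, $I_{pr},I_{ngr},I_{ust}$ only its extensions, and $I_{cu}$ only Dalal distances between supports.

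First I would dispose of $I^C_{I_M}$ and $I^S_{I_M}$ against all eleven of $I_{dr},I_{in},I_{win},I_{wou},I_{cc},I_{wcc},I_{ic},I_{pr},I_{ngr},I_{ust},I_{cu}$ with a single pair. Let $G_1$ be the graph consisting of the single attack $\langle\{\neg\alpha,\neg\beta\},\neg\alpha\wedge\neg\beta\rangle\to\langle\{\alpha,\beta\},\alpha\wedge\beta\rangle$ (a direct defeater, since $\neg\alpha\wedge\neg\beta\vdash\neg\alpha$ and $\alpha$ is a premise of the attacked argument), and let $G_2$ be the graph consisting of the single attack $\langle\{\neg\alpha\},\neg\alpha\rangle\to\langle\{\alpha\},\alpha\rangle$ (a direct undercut). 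These two graphs have the same number of arcs, no cycles, one two-node component each, the same in/out-degree profile, and the same preferred, grounded and stable behaviour; moreover a one-line Dalal computation shows the unique attack has degree $1$ in both, so $I_{cu}$ agrees on them too. Hence every one of the eleven measures assigns $G_1$ and $G_2$ the same value. In contrast $I^C_{I_M}(G_1)=I^S_{I_M}(G_1)=|{\sf MinIncon}(\{\alpha,\beta,\neg\alpha,\neg\beta\})|=2$, whereas $I^C_{I_M}(G_2)=I^S_{I_M}(G_2)=|{\sf MinIncon}(\{\alpha,\neg\alpha\})|=1$. So $I^C_{I_M}$ and $I^S_{I_M}$ each place $G_2$ strictly below $G_1$ while all eleven competitors tie them, which is precisely order-incompatibility for all twenty-two of these pairs. (As a consistency check with Propositions \ref{prop:properties:structure}--\ref{prop:properties:extension}, the cases $I_{win},I_{wou},I_{pr},I_{ngr},I_{ust}$ could also be obtained directly, since those measures fail Inversion whereas $I^C_{I_M}$ and $I^S_{I_M}$ satisfy it.)

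It then remains to separate $I^C_{I_M}$ from $I^S_{I_M}$, which the pair above does not do, since it moves both in the same direction. For this I would compare $G_2$ with the two-cycle $G_3$ on the same two arguments $\langle\{\alpha\},\alpha\rangle$ and $\langle\{\neg\alpha\},\neg\alpha\rangle$ (each a direct undercut of the other). Because $I^C_{I_M}$ sums the logic measure over arcs, the unique minimal inconsistent set $\{\alpha,\neg\alpha\}$ is counted once in $G_2$ and twice in $G_3$, so $I^C_{I_M}(G_2)=1<2=I^C_{I_M}(G_3)$; but $I^S_{I_M}$ applies the logic measure to the union of all supports, which is $\{\alpha,\neg\alpha\}$ in both cases, so $I^S_{I_M}(G_2)=I^S_{I_M}(G_3)=1$. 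Thus $I^C_{I_M}$ strictly orders this pair while $I^S_{I_M}$ ties it, so the two are order-incompatible. (Alternatively this follows from the fact, recorded earlier, that $I^S_{I_M}$ violates Freeness whereas $I^C_{I_M}$ does not.)

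I do not anticipate a genuine obstacle: the whole argument is a matter of selecting witnesses. The points to be careful about are that every graph used is reflective (the displayed unions of supports are inconsistent and every graph carries an arc), that each attack conforms to Definition \ref{def:classical:attack}, and that the displayed pairs are genuine classical arguments (supports consistent and subset-minimal for their claims, which is immediate). The one mildly fiddly step is checking that all eleven competitors really evaluate $G_1$ and $G_2$ equally --- in particular that $I_{cu}$, the only logic-aware competitor, still assigns both attacks degree $1$, which requires dividing the minimal Dalal distance between the restricted model sets by $|\Pi|$ in each case.
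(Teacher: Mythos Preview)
Your approach is correct and, in fact, cleaner than the paper's. The paper splits the work: it invokes the property tables (Propositions~\ref{prop:properties:structure}, \ref{prop:properties:extension}, \ref{prop:properties:cu}) to dispose of $I_{win},I_{wou},I_{ic},I_{pr},I_{ngr},I_{ust}$ via differing satisfaction of Inversion/Super-additivity, and then manufactures separate witnessing pairs for each of $I_{in},I_{cc},I_{wcc},I_{cu}$; for $I^S_{I_M}$ it appeals to the failure of Freeness. Your single pair $(G_1,G_2)$ --- two graphs that are isomorphic as abstract graphs and even agree under $I_{cu}$, but differ in the number of minimal inconsistent subsets of the pooled supports --- handles all twenty-two incompatibilities in one stroke, which is exactly the ``logic-aware vs.\ logic-blind'' distinction you identify. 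The Dalal check for $I_{cu}$ is the only place any care is needed, and your computation ($2/2=1$ versus $1/1=1$) is right.

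Two small remarks. First, the statement does not ask you to separate $I^C_{I_M}$ from $I^S_{I_M}$; your $G_2$/$G_3$ paragraph is correct but superfluous here. Second, in your $G_1$ the attacked argument $\langle\{\alpha,\beta\},\alpha\wedge\beta\rangle$ is also a direct defeater of the attacker, so a reader might wonder why the reverse arc is absent; but the paper's notion of instantiated argument graph only requires that each arc \emph{is} an attack, not that every attack is an arc (the paper's own witness graphs in this proof make the same move), so your one-arc $G_1$ is legitimate.
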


\begin{proof}
From the differences in satisfaction of properties in Proposition \ref{prop:properties:structure}, Proposition \ref{prop:properties:extension}, and Proposition \ref{prop:properties:cu},  $I^C_{I_M}$ is pairwise incompatible with $I_{win}$, $I_{wou}$, $I_{ic}$,$I_{pr}$, $I_{ngr}$, and $I_{ust}$ measures.
However, from the properties in Proposition \ref{prop:properties:structure}, we cannot discriminate $I^C_{I_M}$  from $I_{in}$, $I_{cc}$, $I_{wcc}$, and $I_{cu}$.
To discriminate $I^C_{I_M}$ from $I_{in}$, consider the following graphs $G_1$ (left) and $G_2$ (right) where $I_{in}(G_1) = 2$ and $I_{in}(G_2) = 1$, whereas $I^C_{I_M}(G_1) = 2$ and $I^C_{I_M}(G_2) = 2$.
\begin{center}
\begin{tikzpicture}[->,thick]
\tikzstyle{every node}=[draw,rectangle,fill=yellow]
\node (A1)  at (0,1.5) {$A_1 = \langle\{ \alpha \}, \alpha \rangle$};
\node (A2)  at (0,0) {$A_2 = \langle\{ \neg\alpha  \}, \neg\alpha \rangle$};
\path	(A2)[] edge[bend left] (A1);		
\path	(A1)[] edge[bend left] (A2);		
\end{tikzpicture}
\hspace{0.5cm}
\begin{tikzpicture}[->,thick]
\tikzstyle{every node}=[draw,rectangle,fill=yellow]
\node (A4)  at (0,1.5) {$A_4 = \langle\{ \alpha,\alpha\rightarrow\beta \}, \beta \rangle$};
\node (A5)  at (0,0) {$A_5 = \langle\{ \neg\beta, \neg\beta\rightarrow\neg\alpha  \}, \neg\alpha \rangle$};
\path	(A5)[] edge[] (A4);		
\end{tikzpicture}
\end{center}
To discriminate $I^C_{I_M}$ from $I_{cc}$, consider the above graphs $G_1$ (above left) and $G_2$ (above right) where $I_{cc}(G_1) = 1$ and $I_{cc}(G_2) = 0$, whereas $I^C_{I_M}(G_1) = 1$ and $I^C_{I_M}(G_2) = 2$.
To discriminate $I^C_{I_M}$ from $I_{wcc}$ and from $I_{cu}$, we can use a similar example to above.
To show $I^S_{I_M}$ measure is pairwise incompatible with each of the 
$I_{dr}$, $I_{in}$, $I_{win}$, $I_{wou}$, $I_{cc}$, $I_{wcc}$, $I_{ic}$,$I_{pr}$, $I_{ngr}$, $I_{ust}$, and $I_{cu}$ measures, we can use the failure of freeness to create examples where order-compatibility fails for each pairwise comparison. 
\end{proof}



In this subsection, we have harnessed two existing logic-based inconsistency measures, $I_M$ and $I_{\#}$, for measuring inconsistency in argument graphs instantiated with deductive arguments. There is a wide range of further measures of inconsistency that we could deploy in this role (for reviews see \cite{GrantHunter2011,Thimm2016}). In addition, we have only considered two ways of applying logic-based measures, namely $I^C_{I'}$ and $I^S_{I'}$. Further, ways of applying logic-based measures include analysing the support in extensions of an argument graph to identify inconsistency. For instance, it is not necessarily the case the union of the support of the arguments in an extension is consistent (for more discussion of this point, see \cite{GorogiannisHunter2011}). Another option is to check the degree of inconsistency of the premises of the defenders of an argument since it is not necessarily the case that these would be consistent. We leave investigation of these options to further work.

\subsection{Related work}

In the converse of what we have considered in this section, deductive argumentation has been used for measuring inconsistency. For this, an argument tree (as defined by \cite{BH01,BH08book}) is used. Each node in the tree is an argument. Each child is a canonical undercut of its parents. For each node in the tree, each canonical undercut of the node is a child of the node (except when the premises of the child have all occurred in the support of the argument in ancestor arguments). This exception prohibits infinite branches where each argument has the same premises that have already occurred on the branch. 

\begin{example}
\label{ex:related}
Consider the knowledgebase $K = \{ \alpha, \neg\alpha, \neg\alpha\vee\beta,\beta,\neg\beta   \}$. 
The following is an argument tree with $A_1$ being the root.
\begin{center}
\begin{tikzpicture}[->,thick]
\tikzstyle{every node}=[draw,rectangle,fill=yellow]
\node (A1)  at (4,3) {$A_1 = \langle\{ \alpha \}, \alpha\vee\beta \rangle$};
\node (A2)  at (0,1.5) {$A_2 = \langle\{ \neg\alpha  \}, \neg\alpha \rangle$};
\node (A3)  at (8,1.5) {$A_3 = \langle\{ \neg\alpha\vee\beta, \neg\beta \}, \neg\alpha \rangle$};
\node (A4)  at (8,0) {$A_4 = \langle\{ \beta  \}, \beta \rangle$};
\path	(A2)[] edge[] (A1);		
\path	(A3)[] edge[] (A1);		
\path	(A4)[] edge[] (A3);		
\end{tikzpicture}
\end{center}
\end{example}

In \cite{Raddaoui2015}, an argument tree is constructed where the argument at the root has a single premise. Then, three proposals are made for evaluating the inconsistency of this formula.

\begin{definition}
Let $T$ be the argument tree with the root argument $A$ having ${\sf Support}(A) = \alpha)$.
An $I^x_{ARG}$ measure is defined as follows
\[
I^x_{ARG}(\alpha,T) = |{\sf Undercuts}(\alpha,T)| \times f^x(\alpha,T)
\]
where ${\sf Undercuts}(\alpha,T)$ is the set of undercuts of the root argument in $T$, ${\sf Height}(T)$ is the height of the tree, and ${\sf Depth}(n,T)$ is the depth of node $n$ in the tree. 
\[
f^1(\alpha,T) = \frac{1}{{\sf Height}(T)}
\hspace{4mm}
f^2(\alpha,T) = \frac{1}{\sum_{n \in {\sf Nodes}(T)}  {\sf Depth}(n,T)  }
\hspace{4mm}
f^3(\alpha,T) = \sum_{n \in {\sf Nodes}(T)}  \frac{1}{{\sf Depth}(n,T)  }
\]
\end{definition}

So $I^1_{ARG}(\alpha,T)$ takes the height of the tree into account, $I^2_{ARG}(\alpha,T)$ takes inverse of the sum of the depth of each node into account, and $I^3_{ARG}(\alpha,T)$ takes the sum of the inverse of the depth of each node into account. 

\begin{example}
Continuing Example \ref{ex:related}, the measures for three of the formulae are tabulated.
\begin{center}
\begin{tabular}{|c|c|c|c|}
\hline
& $\alpha$ & $\neg\alpha\vee\beta$ & $\neg\alpha$\\
\hline
\hline
$I^1_{ARG}$   & 1 & 1/2 & 1/3 \\
$I^2_{ARG}$   & 1/2 & 1/5 & 1/6 \\
$I^3_{ARG}$   & 5 & 2 & 11/6\\
\hline
\end{tabular}
\end{center}
\end{example}

Whilst the proposal by Raddaoui \cite{Raddaoui2015} is for measuring inconsistency in a formula, it possible that the ideas could be adapted for measuring inconsistency of argument graphs. 

\section{Resolution through commitment}
\label{section:resolution}

An agent can commit to some arguments (i.e. declare whether they think an argument is acceptable or not), and s/he can be queried about those commitments. We assume that commitment by an agent is represented by the belief an agent has in the arguments. 

In this section, we consider how we can model the agent. 
For this, we assume a labelling function as defined in Section \ref{section:labelsemantics}. Initially, if we know nothing about the agent, we start with a uniform labelling that assigns $\argundec$ to each argument.  Then suppose the  agent declares a commitment in an argument --- either by saying that the label for the argument is $\argin$ or that it is $\argout$ - we can consider what the ramifications are of that commitment on the other beliefs, and moreover, we can use it for resolving inconsistency (i.e. reducing the measure of inconsistency of the argument graph). 

We proceed by introducing some subsidiary definitions for labellings, and for generating a subgraph of a graph based on a labelling. The first subsidiary definition specifies a labelling for which there is no undecided labels.



\begin{definition}
A labelling $L$ is {\bf committed} for graph $G$ iff for all $A \in {\sf Nodes}(G)$, $A \in \argin(G)$ or  $A \in \argout(G)$. 
\end{definition}

\begin{example}
\label{ex:committed}
Consider the following graph. For this, the following labelling is committed: $L(A1) = \argout$, $L(A2) = \argin$, and $L(A3) = \argout$.
\begin{center}
\begin{tikzpicture}[->,thick]
\tikzstyle{every node}=[draw,rectangle,fill=yellow]
\node (A1)  at (0,0) {$A_1$};
\node (A2)  at (1.5,0) {$A_2$};
\node (A3) at (3,0) {$A_3$};
\path	(A1)[] edge[bend left] (A2);		
\path	(A2)[] edge[bend left] (A1);
\path	(A3)[] edge[] (A2);
\end{tikzpicture}
\end{center}
\end{example}

The next subsidiary definition constrains a labelling to take into account the attack relationship. As we illustrate in the subsequent example, a strict labelling is not necessarily an admissible labelling, though every admissible labelling is a strict labelling.

\begin{definition}
A labelling $L$ is {\bf strict} for graph $G$ iff for all $(A,B) \in {\sf Arcs}(G)$, if $A \in \argin(G)$, then  $B \in \argout(G)$. 
\end{definition}


\begin{example}
Consider the graph in Example \ref{ex:committed}. For this, the labellings that are committed and strict are tabulated below. 
\begin{center}
\begin{tabular}{|c|c|c|c|}
\hline
& $A_1$ & $A_2$ & $A_3$ \\
\hline
\hline
$L_1$ & $\argout$ & $\argout$& $\argout$\\
$L_2$ & $\argin$ & $\argout$& $\argout$\\
$L_3$ & $\argout$ & $\argout$& $\argin$\\
$L_4$ & $\argin$ & $\argout$& $\argin$\\
$L_5$ & $\argout$ & $\argin$& $\argout$\\
\hline
\end{tabular}
\end{center}
\end{example}

The following definition forms a subgraph from a graph and a labelling by deleting every node that is labelled $\argout$ and deleting every arc that has either the source or the target labelled $\argout$. The reason we want this new graph is that if an agent commits to an argument being $\argout$, then that argument is no longer acceptable, and we can ignore it from further consideration.

\begin{definition}
Given an argument graph $G$ and labelling $L$, the {\bf new graph} function is ${\sf NewGraph}(G,L) = G'$ where 
\begin{itemize}
\item ${\sf Nodes}(G') = \{A \in {\sf Nodes}(G) \mid L(A) \neq \argout   \}$
\item ${\sf Arcs}(G') = \{ (A,B) \in {\sf Arcs}(G) \mid L(A) \neq \argout \mbox{ or }  L(B) \neq \argout  \}$
\end{itemize}
\end{definition}

\begin{example}
Consider the following graph $G$ with the labelling 
$L(A_1) = \argundec$,
$L(A_2) = \argundec$,
$L(A_3) = \argout$,
$L(A_4) = \argin$,
$L(A_5) = \argout$,
$L(A_6) = \argin$,
and 
$L(A_7) = \argundec$.
\begin{center}
\begin{tikzpicture}[->,thick]
\tikzstyle{every node}=[draw,rectangle,fill=yellow]
\node (A1)  at (0,0) {$A_1$};
\node (A2)  at (1.5,1) {$A_2$};
\node (A3) at (1.5,0) {$A_3$};
\node (A4)  at (3,0) {$A_4$};
\node (A5)  at (4.5,0) {$A_5$};
\node (A6)  at (4.5,1) {$A_6$};
\node (A7) at (6,0) {$A_7$};
\path	(A1)[] edge[bend left] (A2);		
\path	(A1)[] edge[] (A3);
\path	(A2)[] edge[bend left] (A4);
\path	(A3)[] edge[] (A4);
\path	(A3)[] edge[bend right] (A5);
\path	(A4)[] edge[] (A5);
\path	(A4)[] edge[bend left] (A6);
\path	(A5)[] edge[] (A7);
\path	(A6)[] edge[bend left] (A7);
\end{tikzpicture}
\end{center}
So the new graph $G'$ for this graph and labelling is below.
\begin{center}
\begin{tikzpicture}[->,thick]
\tikzstyle{every node}=[draw,rectangle,fill=yellow]
\node (A1)  at (0,0) {$A_1$};
\node (A2)  at (1.5,1) {$A_2$};
\node (A4)  at (3,0) {$A_4$};
\node (A6)  at (4.5,1) {$A_6$};
\node (A7) at (6,0) {$A_7$};
\path	(A1)[] edge[bend left] (A2);		
\path	(A2)[] edge[bend left] (A4);
\path	(A4)[] edge[bend left] (A6);
\path	(A6)[] edge[bend left] (A7);
\end{tikzpicture}
\end{center}
We can apply a measure of inconsistency to the graph $G$ and $G'$, and determine the reduction in inconsistency. For instance, $I_{in}(G) = 9$ and $I_{in}(G') = 4$. Similarly, $I_{win}(G) = 9/2$ and $I_{win}(G') = 4$.
\end{example}

When we query an agent about an argument $A$, we get a reply of either $\argin$ or $\argout$. Given this information, we need to update the labelling to $L'$ as follows.

\begin{definition}
Let $L$ be a labelling for graph $G$, and let $A \in {\sf Nodes}(G)$ be a query.
\begin{itemize}
\item If the answer for $A$ is $\argin$, then 
\begin{itemize}
\item $L'(A) = \argin$
\item for each $(A,B) \in {\sf Nodes}(G)$, $L'(B) = \argout$,
\item for each $(B,A) \in {\sf Nodes}(G)$, $L'(B) = \argout$,
\item for all other arguments $C$, $L'(C) = L(C)$.
\end{itemize}
\item If the answer for $A$ is $\argout$, then 
\begin{itemize}
\item $L'(A) = \argout$
\item for all other arguments $C$, $L'(C) = L(C)$.
\end{itemize}
\end{itemize}
\end{definition}

\begin{example}
Consider the following graph. Let $L_1$ be the original labelling, let $L_2$ be the new labelling obtained after the first query, and let $L_3$ be the new labelling obtained after the second query.
\begin{center}
\begin{tikzpicture}[->,thick]
\tikzstyle{every node}=[draw,rectangle,fill=yellow]
\node (A1)  at (0,0) {$A_1$};
\node (A2)  at (1.5,0) {$A_2$};
\node (A3) at (3,0) {$A_3$};
\node (A4) at (4.5,0) {$A_4$};
\node (A5) at (6,0) {$A_5$};
\path	(A1)[] edge[bend left] (A2);		
\path	(A2)[] edge[bend left] (A1);
\path	(A2)[] edge[] (A3);
\path	(A4)[] edge[] (A3);
\path	(A4)[] edge[bend left] (A5);		
\path	(A5)[] edge[bend left] (A4);
\end{tikzpicture}
\end{center}
Suppose the first query concerns $A_1$, with the reply $\argout$, and the second query concerns $A_4$, with the reply $\argin$. The labellings are tabulated below.
\begin{center}
\begin{tabular}{|l|ccccc|}
\hline
& $A_1$ & $A_2$ & $A_3$ & $A_4$ & $A_5$ \\
\hline
\hline
$L_1$ & $\argundec$ & $\argundec$ & $\argundec$ & $\argundec$ & $\argundec$ \\
$L_2$ & $\argout$ & $\argundec$ & $\argundec$ & $\argundec$ & $\argundec$ \\
$L_3$ & $\argout$ & $\argundec$ & $\argout$ & $\argin$ & $\argout$ \\
\hline
\end{tabular}
\end{center}
\end{example}

Now we can show how measures of inconsistency can help in deciding which arguments to query. We illustrate this in the following example.

\begin{example}
\label{ex:committed:3}
Consider the following graph. 
Suppose $L_0(A_i) = \argundec$ for all $A_i \in \{A_1,\ldots,A_5\}$.
We could query any of these arguments. 

\begin{center}
\begin{tikzpicture}[->,thick]
\tikzstyle{every node}=[draw,rectangle,fill=yellow]
\node (A1)  at (0,0) {$A_1$};
\node (A2)  at (3,0) {$A_2$};
\node (A3) at (1.5,0.75) {$A_3$};
\node (A4) at (0,1.5) {$A_4$};
\node (A5) at (3,1.5) {$A_5$};
\path	(A1)[] edge[] (A3);		
\path	(A3)[] edge[] (A2);		
\path	(A3)[] edge[] (A4);		
\path	(A5)[] edge[] (A3);
\path	(A4)[] edge[] (A1);
\path	(A2)[] edge[] (A5);
\end{tikzpicture}
\end{center}
In the following, each bullet point concerns a specific query and specific answer to that query. In each case, given a graph $G$, and the revised labelling $L$, we obtain the new graph $G'$ 
where ${\sf NewGraph}(G,L) = G'$.
\begin{itemize}
\item If we query $A_3$, and we get the answer $A_3$ is $\argin$, then the resulting labelling is 
\[
\begin{array}{ccccc}
L(A_1) = \argout & L(A_2) = \argout & L(A_3) = \argin & L(A_4) = \argout & L(A_5) = \argout 
\end{array}
\]
\begin{itemize}
\item Hence, ${\sf NewGraph}(G,L) = ( \{  A_3 \}, \{ \} )$.
\item So $I_{in}(G') = 0$ and $I_{cc}(G') = 0$.
\end{itemize}
\item If we query $A_3$, and we get the answer $A_3$ is $\argout$, then the resulting labelling is 
\[
\begin{array}{ccccc}
L(A_1) = \argundec & L(A_2) = \argundec & L(A_3) = \argout & L(A_4) = \argundec & L(A_5) = \argundec 
\end{array}
\]
\begin{itemize}
\item Hence, ${\sf NewGraph}(G,L) = ( \{  A_1, A_2, A_4, A_5 \}, \{ \} )$.
\item So $I_{in}(G') = 2$ and $I_{cc}(G') = 0$.
\end{itemize}
\item If we query $A_1$, and we get the answer $A_1$ is $\argin$, then the resulting labelling is 
\[
\begin{array}{ccccc}
L(A_1) = \argin & L(A_2) = \argundec & L(A_3) = \argout & L(A_4) = \argout & L(A_5) = \argundec 
\end{array}
\]
\begin{itemize}
\item Hence, ${\sf NewGraph}(G,L) = ( \{  A_1, A_2, A_5 \}, \{ (A_2,A_5) \} )$.
\item So $I_{in}(G') = 1$ and $I_{cc}(G') = 0$.
\end{itemize}
\item If we query $A_1$, and we get the answer $A_1$ is $\argout$, then the resulting labelling is 
\[
\begin{array}{ccccc}
L(A_1) = \argout & L(A_2) = \argundec & L(A_3) = \argundec & L(A_4) = \argundec & L(A_5) = \argundec 
\end{array}
\]
\begin{itemize}
\item Hence, ${\sf NewGraph}(G,L) = ( \{  A_2, A_3, A_4, A_5 \}, \{ (A_3,A_4), (A_2,A_5), (A_3,A_2),(A_5,A_3) \} )$.
\item So $I_{in}(G') = 4$ and $I_{cc}(G') = 1$.
\end{itemize}
\item If we query $A_2$, and we get the answer $A_2$ is $\argin$, then the resulting labelling is 
\[
\begin{array}{ccccc}
L(A_1) = \argundec & L(A_2) = \argin & L(A_3) = \argout & L(A_4) = \argundec & L(A_5) = \argout 
\end{array}
\]
\begin{itemize}
\item Hence, ${\sf NewGraph}(G,L) = ( \{  A_1, A_2, A_4 \}, \{ (A_4,A_1)\} )$.
\item So $I_{in}(G') = 1$ and $I_{cc}(G') = 0$.
\end{itemize}
\item If we query $A_2$, and we get the answer $A_2$ is $\argout$, then the resulting labelling is 
\[
\begin{array}{ccccc}
L(A_1) = \argundec & L(A_2) = \argout & L(A_3) = \argundec & L(A_4) = \argundec & L(A_5) = \argundec 
\end{array}
\]
\begin{itemize}
\item Hence, ${\sf NewGraph}(G,L) = ( \{  A_1, A_3,A_4 A_5 \}, \{ (A_1,A_3),(A_3,A_4),(A_4,A_1),(A_5,A_3)\} )$.
\item So $I_{in}(G') = 4$ and $I_{cc}(G') = 1$.
\end{itemize}
\end{itemize}
Therefore, if we query $A_3$, we will have the maximum reduction in inconsistency if we use the $I_{in}$ or $I_{cc}$ measures where the reduction is the average of the reduction for the $\argin$ and $\argout$ answers. 
\end{example}



In this section, we have seenn how we can use the inconsistency measures for graphs as a way of guiding the selecting of arguments to query an agent. The answer from the agent is a commitment to accepting or not accepting the argument, and this can be used to resolve inconsistencies in the graph.

\section{Discussion}
\label{section:discussion}

This paper makes the following contributions:
(1) A proposal for a general framework of postulates for characterizing measures of inconsistency for argument graphs;
(2) Proposals for graph structure measures and graph extension measures as instances of measures of inconsistency for argument graphs;
(3) A review of the degree of undercut approach to measuring inconsistency for argument graphs instantiated with deductive arguments;
(4) An investigation of the use of existing (logic-based) measures of inconsistency to measuring inconsistency for argument graphs instantiated with deductive arguments;
and (5) An outline of how measures of inconsistency for argument graphs can be used as part of process for inconsistency resolution in argumentation.


In future work, it would be good to give further properties for an inconsistency measure and relationships between them, further definitions for an inconsistency measure, further results for specific classes of graph, and methods for resolution based on commitments. 

The structure-based measures are in a sense measuring aspects of graph complexity. There are numerous options for features of argument graphs that could be considered (see for instance, features used for selecting argument solvers \cite{Vallati2014}).  Further options for measuring graph complexity include measuring the sparseness of the graph (e.g.  average indegree, average outdegree, indegree distribution, or outdegree distribution) which may give more recognition to unattacked arguments, radius of the graph (which is the maximum eccentricity of any  node in the graph where the eccentricity of the a node is the length of the shortest path to the node furthest away from that node), and dimensions of the graph (i.e. the minimum number of dimensions of Euclidean space required to represent all arcs with unit length). 
Another possible field for options for graph-based measures of inconsistency is graph entropy (see for example \cite{Dehmer11}). 

A comparison with proposals for argument strength would be interesting. These consider a weight to individual arguments which can be affected by arguments that impinge upon it. A number of proposals have been made (e.g. \cite{BH01,Cayrol:2005,Matt:2008,Amgoud:2013,Amgoud:2015a,Amgoud:2016,Amgoud:2016b,Grossi:2015a,Thimm:2014c,Bonzon:2016}), and comparisons undertake with a range of postulates (for a review see \cite{Bonzon:2016a}). Possibly, analogous postulates can be proposed for inconsistency measures in graphs. 

It would also be interesting to consider developing inconsistency measures for alternatives to deductive argumentation for structured argumentation such assumption-based argumentation (for a tutorial, see \cite{Toni2014tutorial}), defeasible logic programming (for a tutorial, see \cite{GarciaSimari2014tutorial}), and ASPIC+ (for a tutorial, see \cite{ModgilPrakken2014tutorial}).  This would involve developing logic-based inconsistency measures for these non-standard logical formalisms.

Finally, it would be valuable to apply these techniques as part of process for inconsistency resolution in argumentation for an application such as intelligence analysis to investigate the usability of the measures, and whether indeed there are tangible benefits to using these inconsistency measures.




\subsection*{Acknowledgments}

This research was partly funded by EPSRC grant EP/N008294/1 for the Framework for Computational Persuasion project



\newcommand{\etalchar}[1]{$^{#1}$}

\end{document}